\colorlet{linkequation}{blue}
\newcommand{\fref}[1]{Figure~\ref{#1}}
\newcommand{\sref}[1]{Section~\ref{#1}}
\newcommand{\tref}[1]{Table~\ref{#1}}
\newcommand{\algref}[1]{Algorithm~\ref{#1}}
\newcommand{\lemmaref}[1]{Lemma~\ref{#1}}
\newtheorem{lemma}{Lemma}
\newcommand{\revise}[1]{{#1}}
\DeclareMathOperator*{\argminC}{\arg\min}   
\let\oldequation\equation
\let\oldendequation\endequation
\renewenvironment{equation}
  {\small\oldequation}
  {\oldendequation}
\let\oldalign\align
\let\oldendalign\endalign
\renewenvironment{align}
  {\small\oldalign}
  {\oldendalign}
\begin{document}

\runninghead{Chen Wang et al.}

\title{Imperative Learning: A Self-supervised Neuro-Symbolic Learning Framework for Robot Autonomy}

\author{Chen Wang\affilnum{1}, Kaiyi Ji\affilnum{1}, Junyi Geng\affilnum{2}, Zhongqiang Ren\affilnum{3}, Taimeng Fu\affilnum{1}, Fan Yang\affilnum{4}, Yifan Guo\affilnum{5}, Haonan He\affilnum{3}, Xiangyu Chen\affilnum{1}, Zitong Zhan\affilnum{1}, Qiwei Du\affilnum{1}, Shaoshu Su\affilnum{1}, Bowen Li\affilnum{3}, Yuheng Qiu\affilnum{3}, Yi Du\affilnum{1}, Qihang Li\affilnum{1}, Yifan Yang\affilnum{1}, Xiao Lin\affilnum{1}, and Zhipeng Zhao\affilnum{1}}

\affiliation{%
\affilnum{1}University at Buffalo, USA\\
\affilnum{2}Pennsylvania State University, USA\\
\affilnum{3}Carnegie Mellon University, USA\\
\affilnum{4}ETH Zürich, Switzerland\\
\affilnum{5}Purdue University, USA\\
This work was partially funded by the DARPA award HR00112490426.
}

\corrauth{Chen Wang, Spatial AI \& Robotics (SAIR) Lab, Department of Computer Science and Engineering, University at Buffalo, Buffalo, NY 14260, USA.}

\email{chenw@sairlab.org}

\begin{abstract}
Data-driven methods such as reinforcement and imitation learning have achieved remarkable success in robot autonomy. However, their data-centric nature still hinders them from generalizing well to ever-changing environments. Moreover, labeling data for robotic tasks is often impractical and expensive. To overcome these challenges, we introduce a new self-supervised neuro-symbolic (NeSy) computational framework, imperative learning (IL), for robot autonomy, leveraging the generalization abilities of symbolic reasoning. The framework of IL consists of three primary components: a neural module, a reasoning engine, and a memory system. We formulate IL as a special bilevel optimization (BLO), which enables reciprocal learning over the three modules. This overcomes the label-intensive obstacles associated with data-driven approaches and takes advantage of symbolic reasoning concerning logical reasoning, physical principles, geometric analysis, etc. We discuss several optimization techniques for IL and verify their effectiveness in five distinct robot autonomy tasks including path planning, rule induction, optimal control, visual odometry, and multi-robot routing. Through various experiments, we show that IL can significantly enhance robot autonomy capabilities and we anticipate that it will catalyze further research across diverse domains.
\end{abstract}

\keywords{Neuro-Symbolic AI, Self-supervised Learning, Bilevel Optimization, Imperative Learning}

\maketitle

\section{Introduction} \label{sec:introduction}

With the rapid development of deep learning \citep{lecun2015deep}, there has been growing interest in data-driven approaches such as reinforcement learning \citep{zhu2021deep} and imitation learning \citep{hussein2017imitation} for robot autonomy. However, despite these notable advancements, many data-driven autonomous systems are still predominantly constrained to their training environments, exhibiting limited generalization ability \citep{banino2018vector, albrecht2022avalon}. 
As a comparison, humans are capable of internalizing their experiences as abstract concepts or symbolic knowledge \citep{borghi2017challenge}. For instance, we interpret the terms ``road'' and ``path'' as symbols or concepts for navigable areas, whether it's a paved street in a city or a dirt trail through a forest \citep{hockley2011wayfaring}. Equipped with these concepts, humans can employ spatial reasoning to navigate new and complex environments \citep{strader2024indoor}.
This ability to apply spatial reasoning in unfamiliar contexts is a hallmark of human intelligence, one that remains largely missing from current data-driven autonomous robots \citep{garcez2022neural,kautz2022third}.

Though implicit reasoning abilities have garnered increased attention in the context of large language models (LLMs) \citep{lu2022neuro, shah2023lm}, robot autonomy systems still encounter significant challenges in achieving interpretable reasoning. This is particularly evident in tasks involving geometric, physical, and logical reasoning \citep{liu2023survey}. Overcoming these obstacles and integrating interpretable symbolic reasoning into data-driven models, a direction known as neuro-symbolic (NeSy) reasoning, could remarkably enhance robot autonomy \citep{garcez2022neural}.

While NeSy reasoning offers substantial potential, its specific application in the field of robotics is still in a nascent stage. A key reason for this is the emerging nature of the NeSy reasoning field itself, which has not yet reached a consensus on a rigorous definition \citep{kautz2022third}. On one side, a narrow definition views NeSy reasoning as an amalgamation of neural methods (data-driven) and symbolic methods, which utilize formal logic and symbols for knowledge representation and rule-based reasoning. Alternatively, the broader definition expands the scope of what constitutes a ``symbol.'' In this perspective, symbols are not only logical terms but also encompass any comprehensible, human-conceived concepts. This can include physical properties and semantic attributes, as seen in approaches involving concrete equations \citep{duruisseaux2023lie}, logical programming \citep{delfosse2023interpretable}, and programmable objectives \citep{yonetani2021path}. This broad definition hence supports reasoning in both continuous space (physical reasoning) and discrete space (logical reasoning).
In this context, many literatures exemplified NeSy systems, e.g., model-based reinforcement learning \citep{moerland2023model}, physics-informed networks \citep{karniadakis2021physics}, and learning-aided tasks such as control \citep{o2022neural}, task scheduling \citep{gondhi2017survey}, and geometry analytics \citep{heidari2024geometric}.

In this article, we explore the broader definition of NeSy reasoning for robot autonomy and introduce a self-supervised NeSy learning framework, which will be referred to as \textit{imperative learning} (IL).
It is designed to be a unified framework following three principles:
\textbf{(1) Data-driven and Scalable: Self-Supervised}.
Large data-driven models have demonstrated promising performance in robotics. However, labeling data for robotic tasks is often significantly more costly than in computer vision, as it typically requires specialized equipment rather than simple human annotations \citep{ebadi2022present}. For instance, generating accurate ground truth for robot planning is particularly challenging due to the complexity of robot dynamics.
As a result, to ensure the scalability of data-driven approaches in robot autonomy, the development of efficient self-supervised learning frameworks is essential.
\textbf{(2) Generalizable and Interpretable: Neuro-Symbolic Learning with Physical and Logical Structure}.
Robotic tasks often involve underlying structured knowledge governed by both physical laws and logical relationships \citep{dulac2021challenges}, such as kinematic and logical constraints, task preconditions, and safety rules. Rather than relying purely on opaque neural models, IL integrates neural learning with symbolic reasoning components that encode these domain-specific structures. This hybrid design enables the system to make decisions that are easier to interpret, while also improving generalization to novel tasks and environments by leveraging prior knowledge about how the world works.
\textbf{(3) Modular and Global Optimal: End-to-end Trainable}. 
A modular design reduces complexity by decomposing a system into independent yet interconnected components, making development, debugging, and interpretation more manageable. However, training neural and symbolic modules separately can lead to suboptimal integration, as errors may propagate and accumulate across components \citep{besold2021neural}. To overcome this limitation, IL adopts a modular yet end-to-end trainable architecture, retaining the interpretability and flexibility of modular design, while enabling joint optimization across the entire pipeline.

IL is designed to embody the three core principles within a unified framework, inspired by a key observation: both neural models and symbolic methods can be formulated as optimization problems. The key distinction lies in their supervision requirements: neural models typically rely on labeled data for training, whereas symbolic reasoning models can often operate without explicit supervision.
For instance, symbolic methods such as bundle adjustment (BA) for geometric reasoning, model predictive control (MPC) for physical reasoning, and A$^*$ search for logical reasoning can all be framed as optimization problems and solved without labeled data. IL leverages this property by enabling neural and symbolic components to mutually refine one another.
To achieve this, IL is formulated as a special bilevel optimization (BLO), where gradients are back-propagated from self-supervised symbolic modules into the neural networks. This design creates a novel self-supervised learning paradigm, in which symbolic reasoning imperatively guides the learning process. The term ``imperative'' is thus used to emphasize this passive form of self-supervised, neuro-symbolic, and end-to-end learning.

In summary, the contribution of this article include
\begin{itemize}[noitemsep,topsep=0pt]
    \item We explore a self-supervised NeSy learning framework, which is referred to as imperative learning (IL) for robot autonomy. IL is formulated as a special BLO problem to enforce the network to learn symbolic concepts and enhance symbolic reasoning via data-driven methods. This leads to a reciprocal learning paradigm, which can avoid the sub-optimal solutions caused by composed errors of decoupled systems.
    \item We discuss several optimization strategies to tackle the technical challenges of IL. We present how to incorporate different optimization techniques into our IL framework including closed-form solution, first-order optimization, second-order optimization, constrained optimization, and discrete optimization.
    \item To benefit the entire community, we demonstrated the effectiveness of IL for several tasks in robot autonomy including path planning, rule induction, optimal control, visual odometry, and multi-robot routing. We released the source code at \href{https://sairlab.org/iseries/}{https://sairlab.org/iseries/} to inspire more robotics research using IL.
\end{itemize}

This article is inspired by and built on our previous works in different fields of robot autonomy including local planning \citep{yang2023iplanner}, global planning \citep{chen2024iastar}, simultaneous localization and mapping (SLAM) \citep{fu2024islam}, feature matching \citep{zhan2024imatching}, and multi-agent routing \citep{guo2024imtsp}.
These works introduced a prototype of IL in several different domains while failing to introduce a systematic framework for NeSy learning in robot autonomy.
This article fills this gap by formally defining IL, exploring various optimization challenges of IL in different robot autonomy tasks, and introducing new applications such as rule induction and optimal control. 
Additionally, we introduce the theoretical background for solving IL based on BLO, propose several practical solutions to IL by experimenting with the five distinct robot autonomy applications, and demonstrate the superiority of IL over state-of-the-art (SOTA) methods in their respective fields.

\section{Related Works}

\subsection{Bilevel Optimization}

Bilevel optimization (BLO), first introduced by~\cite{bracken1973mathematical}, has been studied for decades. 
Classific approaches replaced the lower-level problem with its optimality conditions as constraints and reformulated the bilevel programming into a single-level constrained problem \citep{hansen1992new, gould2016differentiating, shi2005extended,sinha2017review}. 
More recently, gradient-based BLO has attracted significant attention due to its efficiency and effectiveness in modern machine learning and deep learning problems. Since this paper mainly focuses on the learning side, we will concentrate on gradient-based BLO methods, and briefly discuss their limitations in robot learning problems.

Methodologically, gradient-based BLO can be generally divided into approximate implicit differentiation (AID), iterative differentiation (ITD), and value-function-based approaches.
Based on the explicit form of the gradient (or hypergradient) of the upper-level objective function via implicit differentiation, AID-based methods adopt a generalized iterative solver for the lower-level problem as well as an efficient estimate for Hessian-inverse-vector product of the hypergradient~\citep{domke2012generic, pedregosa2016hyperparameter, liao2018reviving,arbel2022amortized}.
ITD-based methods approximate the hypergradient by directly taking backpropagation over a flexible inner-loop optimization trajectory using forward or backward mode of autograd~\citep{maclaurin2015gradient, franceschi2017forward,finn2017model, shaban2019truncated, grazzi2020iteration}.
Value-function-based approaches reformulated the lower-level problem as a value-function-based constraint and solved this constrained problem via various constrained optimization techniques such as mixed gradient aggregation, log-barrier regularization, primal-dual method, and dynamic barrier~\citep{sabach2017first,liu2020generic,li2020improved,sow2022constrained,liu2021value,ye2022bome}.
Recently, large-scale stochastic BLO has been extensively studied both in theory and in practice.
For example, \cite{chen2021single} and \cite{ji2021bilevel} proposed a Neumann series-based hypergradient estimator; \cite{yang2021provably}, \cite{huang2021biadam}, \cite{guo2021randomized}, \cite{yang2021provably}, and \cite{dagreou2022framework} incorporated the strategies of variance reduction and recursive momentum; and \cite{sow2021convergence} developed an evolutionary strategies (ES)-based method without computing Hessian or Jacobian.

Theoretically, the convergence of BLO has been analyzed extensively based on a key assumption that the lower-level problem is strongly convex \citep{franceschi2018bilevel,shaban2019truncated,liu2021value,ghadimi2018approximation,ji2021bilevel,hong2020two,arbel2022amortized,dagreou2022framework,ji2022will,huang2022efficiently}.  
Among them, \cite{ji2021lower} further provided lower complexity bounds for deterministic BLO with (strongly) convex upper-level functions. \cite{guo2021randomized}, \cite{chen2021single}, \cite{yang2021provably}, and \cite{khanduri2021near} achieved a near-optimal sample complexity with second-order derivatives. 
\cite{kwon2023fully,yang2024achieving} analyzed the convergence of first-order stochastic BLO algorithms. 
Recent works studied a more challenging setting where the lower-level problem is convex or satisfies Polyak-Lojasiewicz (PL) or Morse-Bott conditions~\citep{liu2020generic,li2020improved,sow2022constrained,liu2021value,ye2022bome,arbel2022non,chen2023bilevel,liu2021towards}.
More results on BLO and its analysis can be found in the survey \citep{liu2021investigating,chen2022gradient}.

BLO has been integrated into machine learning applications.
For example, researchers have introduced differentiable optimization layers \citep{amos2017optnet}, convex layers \citep{agrawal2019differentiable}, and declarative layers \citep{gould2021deep} into deep neural networks. They have been applied to several applications such as optical flow \citep{jiang2020joint}, pivoting manipulation \citep{shirai2022robust}, control \citep{landry2021differentiable}, and trajectory generation \citep{han2024learning}.
However, systematic approaches and methodologies to NeSy learning for robot autonomy remain under-explored.
Moreover, robotics problems are often highly non-convex, leading to many local minima and saddle points \citep{jadbabaie2019foundations}, adding optimization difficulties.
We will explore the methods with assured convergence as well as those empirically validated by various tasks in robot autonomy.

\subsection{Robot Learning Frameworks}

We summarize the major robot learning frameworks, including imitation learning, reinforcement learning, and meta-learning. The others will be briefly mentioned.

\paragraph{Imitation Learning} is a technique where robots learn tasks by observing and mimicking an expert's actions. Without explicitly modeling complex behaviors, robots can perform a variety of tasks such as dexterous manipulation~\citep{mcaleer2018solving}, navigation~\citep{triest2023learning}, and environmental interaction~\citep{chi2023diffusion}. 
Current research includes leveraging historical data, modeling multi-modal behaviors, employing privileged teachers~\citep{kaufmann2020deep, chen2020learning, lee2020learning}, and utilizing generative models to generate data like generative adversarial networks~\citep{ho2016generative}, variational autoencoders~\citep{zhao2023learning}, and diffusion models~\citep{chi2023diffusion}. These advancements highlight the vibrant and ongoing exploration of imitation learning.

Imitation learning differs from regular supervised learning as it does not assume that the collected data is independent and identically distributed (iid), and relies solely on expert data representing ``good'' behaviors. Therefore, any small mistake during testing can lead to cascading failures. While techniques such as introducing intentional errors for data augmentation~\citep{pomerleau1988alvinn, tagliabue2022efficient, codevilla2018end} and expert querying for data aggregation~\citep{ross2011reduction} exist, they still face notable challenges. These include low data efficiency, where limited or suboptimal demonstrations impair performance, and poor generalization, where robots have difficulty adapting learned behaviors to new contexts or unseen variations due to the labor-intensive nature of collecting high-quality data.

\paragraph{Reinforcement learning} (RL) is a learning paradigm where robots learn to perform tasks by interacting with their environment and receiving feedback in the form of rewards or penalties \citep{li2017deep}. 
Due to its adaptability and effectiveness, RL has been widely used in numerous fields such as navigation \citep{zhu2021deep}, manipulation \citep{gu2016deep}, locomotion \citep{margolis2024rapid}, and human-robot interaction \citep{modares2015optimized}.

However, RL also faces significant challenges, including sample inefficiency, which requires extensive interaction data~\citep{dulac2019challenges}, and the difficulty of ensuring safe exploration in physical environments~\citep{thananjeyan2021recovery}. These issues are severe in complex tasks or environments where data collection is forbidden or dangerous~\citep{pecka2014safe}. Additionally, RL often struggles with generalizing learned behaviors to new environments and tasks and faces significant sim-to-real challenges. It can also be computationally expensive and sensitive to hyperparameter choices~\citep{dulac2021challenges}. Moreover, reward shaping, while potentially accelerating learning, can inadvertently introduce biases or suboptimal policies by misguiding the learning process.

We notice that BLO has also been integrated into RL.
For instance, \cite{stadie2020learning} formulated the intrinsic
rewards as a BLO problem, leading to hyperparameter optimization;
\cite{hu2024bi} integrated reinforcement and imitation learning under BLO, addressing challenges like coupled behaviors and incomplete information in multi-robot coordination;
\cite{zhang2020bi} employed a bilevel actor-critic learning method based on BLO and achieved better convergence than Nash equilibrium in the cooperative environments.
However, they are still within the framework of RL and no systematic methods have been proposed.

\paragraph{Meta-learning} has garnered significant attention recently, particularly with its application to training deep neural networks~\citep{bengio1991learning,thrun2012learning}. Unlike conventional learning approaches, meta-learning leverages datasets and prior knowledge of a task ensemble to rapidly learn new tasks, often with minimal data, as seen in few-shot learning. Numerous meta-learning algorithms have been developed, encompassing metric-based~\citep{koch2015siamese,snell2017prototypical,chen2020variational,tang2020blockmix,gharoun2023meta}, model-based~\citep{munkhdalai2017meta,vinyals2016matching,liu2020efficient,co2021accelerating}, and optimization-based methods~\citep{finn2017model,nichol2018reptile,simon2020modulating,singh2021metamed,bohdal2021evograd,zhang2024metadiff,choe2024making}. Among them, optimization-based approaches are often simpler to implementation. They are achieving state-of-the-art results in a variety of domains. 

BLO has served as an algorithmic framework for optimization-based meta-learning.
As the most representative optimization-based approach, model-agnostic meta-learning (MAML)~\citep{finn2017model} learns an initialization such that a gradient descent procedure starting from this initial model can achieve rapid adaptation. In subsequent years, numerous works on various MAML variants have been proposed \citep{grant2018recasting,finn2019online,finn2018probabilistic,jerfel2018online,mi2019meta,liu2019taming,rothfuss2019promp,foerster2018dice,baik2020learning,raghu2019rapid,bohdal2021evograd,zhou2021task,baik2020meta,abbas2022sharp,kang2023meta,zhang2024metadiff,choe2024making}. Among them, \cite{raghu2019rapid} presents an efficient MAML variant named ANIL, which adapts only a subset of the neural network parameters. \cite{finn2019online} introduces a follow-the-meta-leader version of MAML for online learning applications. \cite{zhou2021task} improved the generalization performance of MAML by leveraging the similarity information among tasks. \cite{baik2020meta} proposed an improved version of MAML via adaptive learning rate and weight decay coefficients. \cite{kang2023meta} proposed geometry-adaptive pre-conditioned gradient descent for efficient meta-learning.  
Additionally, a group of meta-regularization approaches has been proposed to improve the bias in a regularized empirical risk minimization problem \citep{denevi2018learning,denevi2019learning,denevi2018incremental,rajeswaran2019meta,balcan2019provable,zhou2019efficient}. 
Furthermore, there is a prevalent embedding-based framework in few-shot learning \citep{bertinetto2018meta,lee2019meta,ravi2016optimization,snell2017prototypical,zhou2018deep,goldblum2020unraveling,denevi2022conditional,qin2023bi,jia2024meta}. The objective of this framework is to learn a shared embedding model applicable across all tasks, with task-specific parameters being learned for each task based on the embedded features. 

It is worth noting that IL is proposed to alleviate the drawbacks of the above learning frameworks in robotics. However, IL can also be integrated with any existing learning framework, e.g., formulating an RL method as the upper-level problem of IL, although out of the scope of this paper.

\subsection{Neuro-symbolic Learning}

As previously mentioned, the field of NeSy learning lacks a consensus on a rigorous definition. 
One consequence is that the literature on NeSy learning is scarce and lacks systematic methodologies.
Therefore, we will briefly discuss two major categories: logical reasoning and physics-infused networks.
This will encompass scenarios where symbols represent either discrete signals, such as logical constructs, or continuous signals, such as physical attributes.
We will address other related work in the context of the five robot autonomy examples within their respective sections.

\paragraph{Logical reasoning} aims to inject interpretable and deterministic logical rules into neural networks~\citep{serafini2016logic,riegel2020logical,xie2019embedding,ignatiev2018pysat}.
Some previous work directly obtained such knowledge from human expert~\citep{xu2018semantic,xie2019embedding,xie2021embedding,manhaeve2018deepproblog,riegel2020logical,ijcai2020p243} or an oracle for controllable deduction in neural networks~\citep{mao2018neuro,wang2022programmatically,hsu2023ns3d}, refered as \textit{deductive} methods.
Representative works include DeepProbLog~\citep{manhaeve2018deepproblog}, logical neural network~\citep{riegel2020logical}, and semantic Loss~\citep{xu2018semantic}.
Despite their success, deductive methods require structured formal symbolic knowledge from humans, which is not always available.
Besides, they still suffer from scaling to larger and more complex problems.
In contrast,  \textit{inductive} approaches induct the structured symbolic representation for semi-supervised efficient network learning.
One popular strategy is based on forward-searching algorithms~\citep{li2020closed,li2022softened,evans2018dILP,sen2022loa}, which is time-consuming and hard to scale up.
Others borrow gradient-based neural networks to conduct rule induction, such as SATNet~\citep{wang2019satnet}, NeuralLP~\citep{yang2017neurallp}, and neural logic machine (NLM)~\citep{dong2019nlm}.
In particular, NLM introduced a new network architecture inspired by first-order logic, which shows better compositional generalization capability than vanilla neural nets.
However, existing inductive algorithms either work on structured data like knowledge graphs~\citep{yang2017neurallp,yang2019nlil} or only experiment with toy image datasets~\citep{shindo2023aILP,wang2019satnet}.
We push the limit of this research into robotics with IL, providing real-world applications with high-dimensional image data.

\paragraph{Physical reasoning} in NeSy learning refers to approaches that integrate physical knowledge into the learning process to improve interpretability, generalization, and safety. 
Two major paradigms in this space are physics-informed and physics-infused learning.
Physics-informed learning typically incorporates physical laws into the training objective by adding physics-based loss terms or constraints, guiding neural networks to produce physically consistent outputs \citep{raissi2019physics, karniadakis2021physics}.
For example, in fluid dynamics, the Navier–Stokes equations can be enforced as soft penalties to regularize flow predictions \citep{sun2020physics}, and in structural mechanics, variational formulations are used to encode stress-strain behavior \citep{rao2021physics}.
In contrast, physics-infused learning embeds physical laws or physics-based models directly into the model architecture \citep{vargas2023physics, martin2023physics}. For instance, \cite{romero2023actor, han2024learning} incorporated differentiable trajectory optimization into learning-based planning and control pipelines, allowing end-to-end learning with explicit physical reasoning; \cite{zhao2024physord} programmed energy conservation laws into the model architecture for motion prediction in off-road driving.
These methods tightly couple data-driven components with physical structure, enabling more expressive and interpretable models that can propagate gradients through physical interactions.
We aim to incorporate physical reasoning via differentiable optimization within a unified neuro-symbolic learning framework for robot autonomy.

\section{Imperative Learning}

\subsection{Structure}

The proposed framework of imperative learning (IL) is illustrated in \fref{fig:framework}, which consists of three modules, i.e., a neural learning system, a symbolic reasoning engine, and a memory module. 
Specifically, the neural system extracts high-level semantic attributes from raw sensor data such as images, LiDAR points, IMU measurements, and their combinations.
These semantic attributes are subsequently sent to the reasoning engine, a symbolic process represented by physical principles, logical inference, analytical geometry, etc.
The memory module stores the robot's experiences and acquired knowledge, such as data, symbolic rules, and maps about the physical world for either a long-term or short-term period.
Additionally, the reasoning engine performs a consistency check with the memory, which will update the memory or make necessary self-corrections.
Intuitively, this design has the potential to combine the expressive feature extraction capabilities from the neural system, the interpretability and the generalization ability from the reasoning engine, and the memorability from the memory module into a single framework.
We next explain the mathematical rationale to achieve this.

\subsection{Formulation}\label{sec:formulation}

One of the most important properties of the framework in \fref{fig:framework} is that the neural, reasoning, and memory modules can perform reciprocal learning in a self-supervised manner.
This is achieved by formulating this framework as a BLO.
Denote the neural system as $\bm z = f({\bm{\theta}}, \bm{x})$, where $\bm{x}$ represents the sensor measurements, ${\bm{\theta}}$ represents the perception-related learnable parameters, and $\bm z$ represents the neural outputs such as semantic attributes; the reasoning engine as $g(f, M, {\bm{\mu}})$ with reasoning-related parameters ${\bm{\mu}}$ and the memory system as $M({\bm{\gamma}}, {\bm{\nu}})$, where ${\bm{\gamma}}$ is perception-related memory parameters \citep{wang2021unsupervised} and ${\bm{\nu}}$ is reasoning-related memory parameters.
In this context, our \textit{imperative learning} (IL) is formulated as a special BLO:
\begin{subequations}\label{eq:il}
\begin{align}
\min_{ \bm \psi  \doteq  [{\bm{\theta}}^\top,~{\bm{\gamma}}^\top]^\top} & U\left(f({\bm{\theta}}, \bm{x}), g({\bm{\mu}}^*), M({\bm{\gamma}}, {\bm{\nu}}^*)\right), \label{eq:high-il} \\
\textrm{s.t.} \quad & \bm \phi^* \in \argminC_{ \bm \phi \doteq  [{\bm{\mu}}^\top,~{\bm{\nu}}^\top]^\top} L(f({\bm{\theta}}, \bm{x}), g({\bm{\mu}}), M({\bm{\gamma}}, {\bm{\nu}})), \label{eq:low-il} \\
&\textrm{s.t.} \quad  \xi(M({\bm{\gamma}}, {\bm{\nu}}), {\bm{\mu}}, f({\bm{\theta}}, \bm{x})) = \text{ or }\leq 0, \label{eq:il-constraint}
\end{align}
\end{subequations}
where $\xi$ is a general constraint (either equality or inequality); $U$ and $L$ are the upper-level (UL) and lower-level (LL) cost functions; and $\bm \psi \doteq [{\bm{\theta}}^\top, {\bm{\gamma}}^\top]^\top$ are stacked UL variables and $\bm \phi \doteq [{\bm{\mu}}^\top, {\bm{\nu}}^\top]^\top$ are stacked LL variables, respectively.
Alternatively, $U$ and $L$ are also referred to as the \textit{neural cost} and \textit{symbolic cost}, respectively.
As aforementioned, the term ``\textit{imperative}'' is used to denote the passive nature of the learning process: once optimized, the neural system $f$ in the UL cost will be driven to align with the LL reasoning engine $g$ (e.g., logical, physical, or geometrical reasoning process) with constraint $\xi$, so that it can learn to generate logically, physically, or geometrically feasible semantic attributes or predicates.
In some applications, $\bm \psi$ and $\bm \phi$ are also referred to as \textit{neuron-like} and \textit{symbol-like} parameters, respectively.

\begin{figure}[t]
    \centering
    \includegraphics[width=1\linewidth]{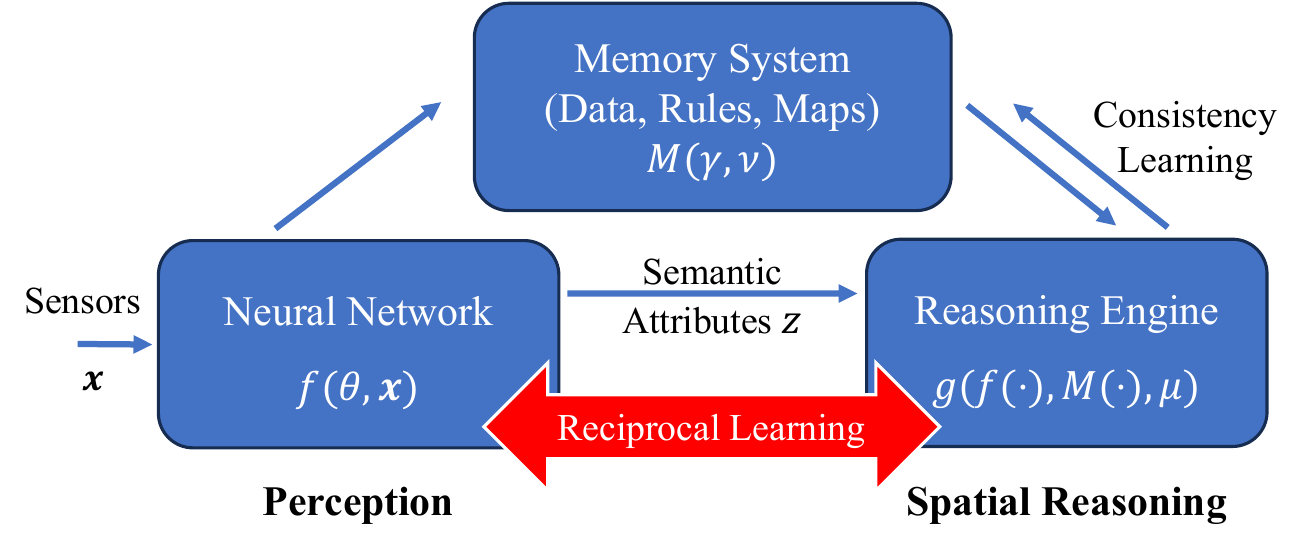}
    \caption{The framework of imperative learning (IL) consists of three primary modules including a neural perceptual network, a symbolic reasoning engine, and a general memory system. IL is formulated as a special BLO, enabling reciprocal learning and mutual correction among the three modules.}
    \label{fig:framework}
    \vspace{-10pt}
\end{figure}

\paragraph{Self-supervised Learning}

As presented in \sref{sec:introduction}, the formulation of IL is motivated by one important observation:  many symbolic reasoning engines including geometric, physical, and logical reasoning, can be optimized or solved without providing labels.
This is evident in methods like logic reasoning like equation discovery \citep{billard2002symbolic} and A$^*$ search \citep{hart1968formal}, geometrical reasoning such as bundle adjustment (BA) \citep{agarwal2010bundle}, and physical reasoning like model predictive control \citep{kouvaritakis2016model}.
The IL framework leverages this phenomenon and jointly optimizes the three modules by BLO, which enforces the three modules to mutually correct each other.
Consequently, all three modules can learn and evolve in a self-supervised manner by observing the world.
However, it is worth noting that, although IL is designed for self-supervised learning, it can easily adapt to supervised or weakly supervised learning by involving labels either in UL or LL cost functions or both.

\paragraph{Memory}
The memory system within the IL framework is a general component that can retain and retrieve information \uline{online}.
Specifically, it can be any structure that is associated with \texttt{write} and \texttt{read} operations to retain and retrieve data \citep{wang2021unsupervised}.
A memory can be a neural network, where information is ``written'' into the parameters and is ``read'' through a set of math operations or implicit mapping, e.g., a neural radiance fields (NeRF) model \citep{mildenhall2021nerf};
It can also be a structure with explicit physical meanings such as a map created online, a set of logical rules inducted online, or even a dataset collected online;
It can also be the memory system of LLMs in textual form, such as retrieval-augmented generation (RAG) \citep{lewis2020retrieval}, which writes, reads, and manages symbolic knowledge.
Hyperdimensional computing \citep{kleyko2022survey, kleyko2023survey} also presents a promising direction for memory modeling, though it falls outside the scope of this work. In this article, we will explore several customized memory structures tailored to specific applications, as detailed in \sref{sec:applications}.

\subsection{Optimization} \label{sec:solution}

BLO has been explored in frameworks such as meta-learning \citep{finn2017model}, hyperparameter optimization \citep{franceschi2018bilevel}, and reinforcement learning \citep{hong2020two}.
However, most of the theoretical analyses have primarily focused on their applicability to data-driven models, where \nth{1}-order gradient descent (GD) is frequently employed \citep{ji2021bilevel,gould2021deep}.
Nevertheless, many reasoning tasks present unique challenges that make GD less effective. For example, geometrical reasoning like BA requires \nth{2}-order optimizers \citep{fu2024islam} such as Levenberg-Marquardt (LM) \citep{marquardt1963algorithm}; multi-robot routing needs combinatorial optimization over discrete variables \citep{ren2023cbss}. Employing such LL optimizations within the BLO framework introduces extreme complexities and challenges, which are still underexplored \citep{ji2021bilevel}.
Therefore, we will first delve into general BLO and then provide practical examples covering distinct challenges of LL optimizations in our IL framework.

The solution to IL \eqref{eq:il} mainly involves solving the UL parameters ${\bm{\theta}}$ and ${\bm{\gamma}}$ and the LL parameters ${\bm{\mu}}$ and ${\bm{\nu}}$.
Intuitively, the UL parameters which are often neuron-like weights can be updated with the gradients of the UL cost $U$:
\begin{equation}
\begin{aligned}\label{eq:solution}
\nabla_{\bm{\theta}} U &= \frac{\partial U}{\partial f} \frac{\partial f}{\partial {\bm{\theta}}} +  \frac{\partial U}{\partial g} \frac{\partial g}{\partial {\bm{\mu}}^*}{\color{blue}\frac{\partial {\bm{\mu}}^*}{\partial {\bm{\theta}}}} + \frac{\partial U}{\partial M}\frac{\partial M}{\partial {\bm{\nu}}^*}{\color{blue}\frac{\partial {\bm{\nu}}^*}{\partial {\bm{\theta}}}},
\\\nabla_{\bm{\gamma}} U& = \frac{\partial U}{\partial M} \frac{\partial M}{\partial {\bm{\gamma}}} + \frac{\partial U}{\partial g} \frac{\partial g}{\partial {\bm{\mu}}^*} {\color{blue}\frac{\partial {\bm{\mu}}^*}{\partial {\bm{\gamma}}}} +\frac{\partial U}{\partial M} \frac{\partial M}{\partial {\bm{\nu}}^*} {\color{blue}\frac{\partial {\bm{\nu}}^*}{\partial {\bm{\gamma}}}}.
\end{aligned}
\end{equation}
The key challenge of computing \eqref{eq:solution} is the implicit differentiation parts in blue fonts, which take the forms of
\begin{equation} \label{implicit_part}
{\color{blue} \frac{\partial \bm \phi^*}{\partial \bm \psi}} = \left[
\begin{aligned}
\frac{\partial \bm \mu^*}{\partial \bm \theta} &\quad \frac{\partial \bm \mu^*}{\partial \bm \gamma} \\
\frac{\partial \bm \nu^*}{\partial \bm \theta} &\quad \frac{\partial \bm \nu^*}{\partial \bm \gamma} \\
\end{aligned}\right], \\
\end{equation}
where ${\bm{\mu}}^*$ and ${\bm{\nu}}^*$ are the solutions to the LL problem.
For simplicity, we can write \eqref{eq:solution} into the matrix form.
\begin{equation}\label{eq:solution-simplified}
    \nabla_{\bm{\psi}} U = \frac{\partial U}{\partial \bm \psi} + \frac{\partial U}{\partial \bm \phi^*}{\color{blue}\frac{\partial \bm\phi^*}{\partial {\bm{\psi}}}},
\end{equation}
where $\frac{\partial U}{\partial \bm \psi} = \left[(\frac{\partial U}{\partial f} \frac{\partial f}{\partial {\bm{\theta}}})^\top,(\frac{\partial U}{\partial M} \frac{\partial M}{\partial {\bm{\gamma}}})^\top\right]^\top$ and $\frac{\partial U}{\partial \bm \phi^*} = \left[(\frac{\partial U}{\partial g} \frac{\partial g}{\partial {\bm{\mu}}^*})^\top, (\frac{\partial U}{\partial M}\frac{\partial M}{\partial {\bm{\nu}}^*})^\top \right]^\top$.
Computing these gradients is challenging. We therefore first introduce a simple yet effective method, \textit{alternating optimization}, which avoids explicit gradient computation, and then present \textit{unrolled differentiation} and \textit{implicit differentiation} to compute them.

\subsubsection{Alternating Optimization} \label{sec:alternating}

One simple strategy for BLO is \textit{alternating optimization}, which avoids explicit computation of the implicit gradients $\frac{\partial \bm \phi^*}{\partial \bm \psi}$.
Specifically, the UL and LL problems are solved in an alternating and iterative manner.
Given UL variables $\bm \psi^{(k)}$ at the $k$-th iteration, the LL problem is first (approximately) solved to obtain $\bm \phi^{(k)}$, possibly under constraints~\eqref{eq:il-constraint}.
Then, fixing $\bm \phi^{(k)}$, the UL variables are updated by minimizing the UL cost $U$ using any standard gradient-based methods.
This process is repeated iteratively until convergence, as illustrated in \eqref{eq:alternating}.
\begin{equation}\footnotesize\label{eq:alternating}
    \underbrace{\bm \psi^{(k+1)} \leftarrow \bm \psi^{(k)} - \eta \cdot \frac{\partial {U} (\bm \psi; \bm {\bm{\phi}}^{(k)})}{\partial \bm \psi}}_{\text{Upper-level Optimization}}
    \xrightleftharpoons[\bm \phi^{(k)}]{\bm \psi^{(k+1)}}
    \underbrace{\bm \phi^{(k)} \leftarrow \arg \min_{\bm \phi} L(\bm \phi; \bm \psi^{(k)})}_{\text{Lower-level Optimization}}
\end{equation}
Alternating optimization can be viewed as a block coordinate descent on the bilevel structure.
The LL reasoning engine acts as a constraint-enforcing module that corrects neural and memory outputs, while the UL learning process gradually aligns these modules with the LL solutions.
A key advantage of alternating optimization is that it places minimal requirements on the LL solver.
The LL problem need not be differentiable and may involve discrete variables or non-smooth constraints.
This makes alternating optimization particularly suitable for symbolic reasoning tasks that rely on combinatorial search, constraint solver, or other settings where implicit differentiation is difficult to compute.
Alternating optimization has been verified to be effective in practice, such as in geometric perception \citep{yang2021self} and rule induction in \sref{sec:exp-iLogic}.

Alternating optimization ignores the dependency of $\bm{\phi}^*$ on $\bm{\psi}$ and thus yields biased gradients, it can sometimes be unstable and sensitive to initialization. Nevertheless, such an approximation is often acceptable in training deep neural networks~\citep{finn2017model,nichol2018reptile}, where the Jacobian and Hessian terms induced by the implicit dependence of $\bm{\phi}^*$ on $\bm{\psi}$ are typically orders of magnitude smaller than the direct upper-level gradient $\frac{\partial {L} (\bm \psi; \bm {\bm{\phi}}^{(k)})}{\partial \bm \psi}$. However, in certain specialized tasks where the upper-level objective $U$ depends solely on $\bm{\phi}$, so that only the implicit dependence of $\bm{\phi}^*$ on $\bm{\psi}$ remains ($\bm{\psi}$ can only affect $U$ via affecting $\bm{\phi}^*$, i.e., the UL cost becomes $U(\bm{\phi}^*(\bm{\psi}))$), this alternative optimization strategy is no longer applicable.

\subsubsection{Unrolled Differentiation} \label{sec:unrolled}
The method of unrolled differentiation is an easy-to-implement solution for BLO problems. It needs automatic differentiation (AutoDiff) through LL optimization. Specifically, given an initialization for LL variable $\bm \phi_0 = \bm \Phi_0(\bm {\bm{\psi}})$ at step $t=0$, the iterative process of unrolled optimization is
\begin{equation}\label{eq:unroll-system}
    \bm \phi_t = \bm \Phi_t( \bm \phi_{t-1}; \bm \psi), \quad t = 1, \cdots, T,
\end{equation}
where $\bm \Phi_t$ denotes an updating scheme based on a specific LL problem and $T$ is the number of iterations. One popular updating scheme is based on the gradient descent:
\begin{equation}\label{eq:unroll}
    \bm \Phi_t (\bm \phi_{t-1}; \bm \psi) = \bm \phi_{t-1} - \eta \cdot \frac{\partial {L} (\bm \phi_{t-1}; \bm {\bm{\psi}})}{\partial \bm \phi_{t-1}},
\end{equation}
where $\eta$ is a learning rate and the term $\frac{\partial {L} (\bm \phi_{t-1}; \bm \psi)}{\partial \bm \phi_{t-1}}$ can be computed from AutoDiff.
Therefore, we can obtain $\nabla_{\bm{{\bm{\theta}}}} {U} (\bm{\bm{\theta}})$, $\nabla_{\bm{{\bm{\gamma}}}} {U} (\bm{\bm{\gamma}})$ by substituting $\bm \phi_T = [\bm {\bm{\mu}}_T, \bm {\bm{\nu}}_T]$ approximately for $\bm \phi^* = [\bm {\bm{\mu}}^*, \bm {\bm{\nu}}^*]$ and the full unrolled system is defined as 
\begin{equation}\label{eq:unroll-composition}
    \bm \Phi(\bm\psi) = (\bm \Phi_T \circ \cdots \circ \bm \Phi_1 \circ \bm \Phi_0)(\bm \psi),
\end{equation}
where the symbol $\circ$ denotes the function composition.
Therefore, we instead only need to consider an alternative:
\begin{equation}\label{eq:unroll-opt}
    \min_{\bm{\psi} } \;\; {U}(\bm\psi, \bm \Phi(\bm\psi)),
\end{equation}
where $\frac{\partial \bm \Phi(\bm \psi)}{\partial \bm \psi}$ can be computed via AutoDiff instead of directly calculating the four terms in $\frac{\partial \bm \phi^*}{\partial \bm \psi}$ of \eqref{implicit_part}.
A generic framework incorporating both unrolled differentiation and implicit differentiation is listed in \algref{alg:main} for clarity.

\begin{algorithm}[t]
    \small 
    \caption{The Algorithm of Solving Imperative Learning by \uline{Unrolled Differentiation} or \uline{Implicit Differentiation}.}
    \label{alg:main}
    \begin{algorithmic}[1]
    \WHILE{Not Convergent}
	\STATE{Obtain ${\bm{\mu}}_T, {\bm{\nu}}_T$ by solving LL problem~\eqref{eq:low-il} (possibly with constraint~\eqref{eq:il-constraint}) by a generic optimizer $\mathcal{O}$ with $T$ steps.}
        \STATE{Efficient estimation of UL gradients in~\eqref{eq:solution} via \\
        \textbf{Unrolled Differentiation}: Compute $\hat \nabla_{\bm{\theta}} U$ and $\hat \nabla_{\bm{\gamma}} U$ by having $\bm\Phi$ in \eqref{eq:unroll-composition} and ${U}(\bm\psi, \bm \Phi(\bm\psi))$ in \eqref{eq:unroll-opt} via AutoDiff.}\\
        \textbf{Implicit Differentiation (\algref{alg:implicit-diff})}: Compute
        {
        \begin{equation*} 
          \hat \nabla_{\bm{\psi}} U = \frac{\partial U}{\partial \bm \psi} \Big|_{\bm\phi_T} +  \frac{\partial U}{\partial \bm \phi^*}{\color{blue}\frac{\partial \bm\phi^*}{\partial {\bm{\psi}}}}\Big|_{\bm \phi_T},
        \end{equation*}}where the implicit derivatives $\color{blue}\frac{\partial \bm\phi^*}{\partial {\bm{\psi}}}$ can be obtained by solving a linear system via the LL optimality conditions.
    \ENDWHILE
    \end{algorithmic}
\end{algorithm}

\subsubsection{Implicit Differentiation} \label{sec:implicit}

The method of implicit differentiation directly computes the derivatives $\color{blue} \frac{\partial \bm \phi^*}{\partial \bm \psi}$.
We next introduce a generic framework for the implicit differentiation algorithm by solving a linear system from the first-order optimality condition of the LL problem, while the exact solutions depend on specific tasks and will be illustrated using several independent examples in \sref{sec:applications}.

Assume $\bm \phi_T = [\bm \mu_T, \bm \nu_T]$ is the outputs of $T$ steps of a generic optimizer of the LL problem~\eqref{eq:low-il} possibly under constraints~\eqref{eq:il-constraint}, then the approximated UL gradient \eqref{eq:solution} is
\begin{equation}
    \hat \nabla_{\bm{\psi}} U \approx \frac{\partial U}{\partial \bm \psi} \Big|_{\bm\phi_T} +  \frac{\partial U}{\partial \bm \phi^*}{\color{blue}\frac{\partial \bm\phi^*}{\partial {\bm{\psi}}}}\Big|_{\bm \phi_T}.
\end{equation}
Then the derivatives $\frac{\partial \bm\phi^*}{\partial {\bm{\psi}}}$ are obtained via solving implicit equations from optimality conditions of the LL problem, i.e., $\frac{\partial \tilde L}{\partial \bm \phi^*} = \bm 0$, where $\tilde L$ is a generic LL optimality condition. Specifically, taking the derivative of equation $\frac{\partial \tilde L}{\partial \bm \phi^*} = \bm 0$ with respect to $\bm \psi$ on both sides leads to
\begin{equation}
    \frac{\partial^2 \tilde{L} (\bm \phi^*(\bm\psi), \bm \psi)}{\partial \bm \phi^*(\bm\psi)\partial \bm \psi} + \frac{\partial^2 \tilde{L} (\bm \phi^*(\bm\psi), \bm \psi)}{\partial \bm \phi^*(\bm\psi)\partial \bm \phi^*(\bm \psi)} \cdot {\color{blue} \frac{\partial \bm \phi^* (\bm \psi)}{\partial \bm \psi}} = \bm 0.
\end{equation}
Solving the equation gives us the implicit gradients as
\begin{equation}\label{eq:implicit-gradiant}
    {\color{blue} \frac{\partial \bm \phi^* (\bm \psi)}{\partial \bm \psi}} = - \underbrace{ \left( \frac{\partial^2 \tilde{L} (\bm \phi^*(\bm\psi), \bm \psi)}{\partial \bm \phi^*(\bm\psi)\partial \bm \phi^*(\bm \psi)}  \right)^{-1} }_{\bm H_{\bm \phi^* \bm \phi^*}^{-1}} \underbrace{ \frac{\partial^2 \tilde{L} (\bm \phi^*(\bm\psi), \bm \psi)}{\partial \bm \phi^*(\bm\psi)\partial \bm \psi}}_{\bm H_{\bm \phi^* \bm \psi}}.
\end{equation}
This means we obtain the implicit gradients at the cost of an inversion of the Hessian matrix $\bm H_{\bm \phi^* \bm \phi^*}^{-1}$.

In practice, the Hessian matrix can be too big to calculate and store\footnote[2]{For instance, assume both UL and LU costs have a network with merely 1 million ($10^6$) parameters ($32$-bit float numbers), thus each network only needs a space of $10^6\times 4 \text{Byte} = 4 \mega B$ to store, while their Hessian matrix needs a space of $(10^6)^2 \times 4 \text{Byte} = 4 \tera B$ to store. This indicates that a Hessian matrix cannot even be explicitly stored in the memory of a low-power computer, thus directly calculating its inversion is more impractical.}, but we could bypass it by solving a linear system. Substitute \eqref{eq:implicit-gradiant} into \eqref{eq:solution-simplified}, we have the UL gradient
\begin{subequations}
    \begin{align}
    \nabla_{\bm\psi} {U} & = \frac{\partial {U}}{\partial \bm\psi} - \underbrace{\frac{\partial {U}(\bm{\psi}, \bm{\phi}^*)}{\partial \bm \phi^*} }_{\bm v^\top} \bm H_{\bm \phi^* \bm \phi^*}^{-1} \cdot \bm H_{\bm \phi^* \bm \psi} \\
    &= \frac{\partial {U}}{\partial \bm\psi} - \bm q^\top \cdot \bm H_{\bm \phi^* \bm \psi}.
    \end{align}
\end{subequations}
Therefore, instead of calculating the Hessian inversion, we can solve a linear system $\bm H \bm q = \bm v$ for $\bm q^\top$ by optimizing 
\begin{equation}\label{eq:linear-system}
    \bm q^* = \argminC_{\bm q} Q (\bm q) = \argminC_{\bm q} \left(\frac{1}{2} \bm q^\top \bm H \bm q - \bm q^\top \bm v \right),
\end{equation}
where $\bm H_{\bm \phi^* \bm \phi^*}$ is denoted as $\bm H$ for simplicity.
The linear system \eqref{eq:linear-system} can be solved without explicitly calculating and storing the Hessian matrix by solvers such as conjugate gradient \citep{hestenes1952methods} or gradient descent.
For example, due to the gradient $\frac{\partial Q(\bm q)}{\partial \bm q} = \bm H \bm q - \bm v$, the updating scheme based on the gradient descent algorithm is:
\begin{equation}
    \bm q_{k} = \bm q_{k-1} - \eta \left(\bm H \bm q_{k-1} - \bm v\right).
\end{equation}
This updating scheme is efficient since $\bm H \bm q$ can be computed using the fast Hessian-vector product, i.e., a Hessian-vector product is the gradient of a gradient-vector product:
\begin{equation}
    \bm H \bm q = \frac{\partial^2 \tilde{L}}{\partial \bm \phi \partial \bm \phi } \cdot \bm q =  \frac{\partial \left(\frac{\partial \tilde{L}}{\partial \bm \phi} \cdot \bm q \right)}{\partial \bm \phi},
\end{equation}
where $\frac{\partial \tilde{L}}{\partial \bm \phi}\cdot \bm q$ is a scalar. This means the Hessian $\bm H$ is not explicitly computed or stored.
We summarize this implicit differentiation in \algref{alg:implicit-diff}.
Note that the optimality condition depends on the LL problems.
In \sref{sec:applications}, we will show that it can either be the derivative of the LL cost function for an unconstrained problem such as \eqref{eq:stationary_condition} or the Lagrangian function for a constrained problem such as \eqref{eq:stationary}.

\begin{algorithm}[t]
    \small
    \caption{\textit{Implicit Differentiation} via Linear System.}
    \label{alg:implicit-diff}
    \begin{algorithmic}[1]
        \STATE {\bf Input:} The current variable $\bm {\bm{\psi}}$ and the optimal variable $\bm {\bm{\phi}}^*$.
        \STATE {\bf Initialization:} $k=1$, learning rate $\eta$.
        \WHILE{Not Convergent}
            \STATE{Perform gradient descent (or use conjugate gradient):}
            \begin{equation}
                \bm q_{k} = \bm q_{k-1} - \eta \left(\bm H \bm q  - \bm v\right),
            \end{equation}
            where $\bm H \bm q$ is computed from fast Hessian-vector product.
        \ENDWHILE
        \STATE{Assign $\bm q = \bm q_k $}
        \STATE{Compute $\nabla_{\bm{\bm{\theta}}} {U}$ and $\nabla_{\bm{\bm{\gamma}}} {U}$ in \eqref{eq:solution} as:}
            \begin{equation}
                \nabla_{\bm{\bm{\psi}}} {U} = \frac{\partial {U}}{\partial \bm\psi} - \bm q^\top \cdot \bm H_{\bm \phi^* \bm \psi},
            \end{equation}
            where $\bm q^\top \cdot \bm H_{\bm \phi^* \bm \psi}$ is also from the fast Hessian-vector product.
    \end{algorithmic}
\end{algorithm}

\paragraph{Approximation}
Implicit differentiation is complicated to implement but there is one approximation, which is to ignore the implicit components and only use the direct part $\hat \nabla_{\bm{\psi}} U \approx \frac{\partial U}{\partial \bm \psi}\Big|_{\bm{\phi}_T}$. This is equivalent to taking the solution ${\bm{\phi}}_T$ from the LL optimization as \uline{constants} in the UL problem.
Such an approximation is more efficient but introduces an error term 
\begin{equation}\label{eq:error-term}
    \epsilon \sim \left| \frac{\partial U}{\partial \bm \phi^*}\frac{\partial \bm\phi^*}{\partial {\bm{\psi}}} \right|.
\end{equation}
Nevertheless, it is useful when the implicit derivatives contain products of \uline{small} second-order derivatives, which again depends on the specific LL problems. 

It is worth noting that in the framework of IL \eqref{eq:il}, we assign perception-related parameters $\bm \psi$ to the UL neural cost \eqref{eq:high-il}, while reasoning-related parameters $\bm \phi$ to the LL symbolic cost \eqref{eq:low-il}. 
This design stems from two key considerations:
First, it can avoid involving large Jacobian and Hessian matrices for neuron-like variables such as $\bm H_{\bm \phi^* \bm \phi^*}$.
Given that real-world robot applications often involve an immense number of neuron-like parameters (e.g., a simple neural network might possess millions), placing them in the UL cost reduces the complexity involved in computing implicit gradients necessitated by the LL cost \eqref{eq:low-il}.

Second, perception-related (neuron-like) parameters are usually updated using gradient descent algorithms such as SGD \citep{sutskever2013importance}. However, such simple first-order optimization methods are often inadequate for LL symbolic reasoning, e.g., geometric problems \citep{wang2023pypose} usually need second-order optimizers.
Therefore, separating neuron-like parameters from reasoning-related parameters makes the BLO in IL easier to solve and analyze. However, this again depends on the LL tasks.

\paragraph{Discussion on theoretical optimality and sufficient conditions.} 
The guarantee of feasible solutions depends on the assumptions and conditions of the objective functions. For example, in unrolled differentiation (\sref{sec:unrolled}) and implicit differentiation (\sref{sec:implicit}), if the LL function $L$ in \eqref{eq:low-il} is twice differentiable and strongly convex with respect to $\phi$ with a unique solution, we expect to achieve first-order stationarity and global optimality for non-convex and convex UL objective functions, respectively.

The optimality for generic non-convex LL problems remain an open challenge and are sometimes NP-hard~\citep{dempe2015bilevel} in BLO theory. This difficulty arises because non-uniqueness in the LL problem prevents the implicit differentiation theorem from guaranteeing the existence of implicit differentiability of $\phi^*$ with respect to the UL variables $\psi$. A weaker condition for establishing optimality is the Polyak-Lojasiewicz (PL) condition~\citep{shen2023penalty}, which allows for an overall non-convex landscape while maintaining certain convexity-like structures. Such structures have been observed in overparameterized neural networks and can guarantee an $\epsilon$-approximate solution to the BLO problem~\citep{liu2020toward}. 
We will show more detailed theoretical guarantees and their assumptions in the next section with respect to the types of LL optimization. To ensure the language remaining accessible and not overly theoretical, we will only provide summarized results and provide references to their sources. In this way, we hope it will be helpful for readers to easily find more relevant formal results when needed.

\begin{table}[t]
    \caption{Five examples in logic, planning, control, SLAM, and MTSP are selected to cover distinct scenarios of the LL problems for solving the BLO in imperative learning.}
    \label{tab:applications}
    \centering
    \resizebox{\linewidth}{!}{
    \begin{tabular}{ccccl}
    \toprule
    Section & Application & LL problem & LL solution & Type\\
    \midrule
    Sec. \ref{sec:close-form} & Planning & A$^*$ & Closed-form & (C) \\
    Sec. \ref{sec:1st-order} & Logic & NLM & \nth{1}-order & (B)$^*$\\
    Sec. \ref{sec:constrained} & Control & MPC & Constrained &  (A) \\
    Sec. \ref{sec:2nd-order} & SLAM & PGO & \nth{2}-order & (C) \\
    Sec. \ref{sec:discrete} & MTSP & TSP & Discrete & (A) \\
    \bottomrule
    \end{tabular}
    }
    \footnotesize{$^*$All modules are trained, although a pre-trained feature extractor is given.}\\
\end{table}

\section{Applications and Examples} \label{sec:applications}

To showcase the effectiveness of IL, we will introduce five distinct examples in different fields of robot autonomy.
These examples, along with their respective LL problem and optimization methods, are outlined in \tref{tab:applications}.
Specifically, they are selected to cover distinct tasks, including path planning, rule induction, optimal control, visual odometry, and multi-agent routing, to showcase different optimization techniques required by the LL problems, including closed-form solution, first-order optimization, constrained optimization, second-order optimization, and discrete optimization, respectively.
We will explore several memory structures mentioned in  \sref{sec:formulation}.

Additionally, since IL is a self-supervised learning framework consisting of three primary components, we have three kinds of learning types.
This includes (\textbf{A}) given known (pre-trained or human-defined) reasoning engines such as logical reasoning, physical principles, and geometrical analytics, robots can learn a logic-, physics-, or geometry-aware neural perception system, respectively, in a self-supervised manner; (\textbf{B}) given neural perception systems such as a vision foundation model \citep{kirillov2023segment}, robots can discover the world rules, e.g., traffic rules, and then apply the rules to future events; and (\textbf{C}) given a memory system, (e.g., experience, world rules, or maps), robots can simultaneously update the neural system and reasoning engine so that they can adapt to novel environments with a new set of rules.
Note that if one of the modules has a good initialization, all the three modules can be further finetuned simultaneously.
The five examples will cover all three learning types.

\subsection{Closed-form Solution} \label{sec:close-form}

We first illustrate the scenarios in which the LL cost $L$ in \eqref{eq:low-il} has closed-form solutions.
In this case, one can directly optimize the UL cost by solving
\begin{equation}\label{eq:ggs}
\min_{{\bm{\theta}},{\bm{\gamma}}} \quad U\left(f({\bm{\theta}}, \bm{x}), g({\bm{\mu}}^*({\bm{\theta}},{\bm{\gamma}})), M({\bm{\gamma}}, {\bm{\nu}}^*({\bm{\theta}},{\bm{\gamma}}))\right),
\end{equation}
where the LL solutions ${\bm{\mu}}^*({\bm{\theta}},{\bm{\gamma}})$ and ${\bm{\nu}}^*({\bm{\theta}},{\bm{\gamma}})$, that contain the implicit components $\frac{\partial {\bm{\mu}}^*}{\partial {\bm{\theta}}}$, $\frac{\partial {\bm{\mu}}^*}{\partial {\bm{\gamma}}}$, $\frac{\partial {\bm{\nu}}^*}{\partial {\bm{\theta}}}$, and $\frac{\partial {\bm{\nu}}^*}{\partial {\bm{\gamma}}}$, can be calculated directly due to the closed-form of ${\bm{\mu}}^*$ and ${\bm{\nu}}^*$. As a result, standard gradient-based algorithms can be applied in updating ${\bm{\theta}}$ and ${\bm{\gamma}}$ with gradients given by  \eqref{eq:solution}.
In this case, there is {\bf no} approximation error induced by the LL minimization, in contrast to existing widely-used implicit and unrolled differentiation methods that require ensuring a sufficiently small or decreasing LL optimization error for guaranteeing the convergence~\citep{ji2021bilevel}. 

One possible problem in computing \eqref{implicit_part} is that the implicit components can contain expensive matrix inversions. Let us consider a simplified quadratic case in \eqref{eq:low-il} with ${\bm{\mu}}^*=\arg\min_{{\bm{\mu}}}\frac{1}{2}\|f({\bm{\theta}},x){\bm{\mu}}-y\|^2$. The closed-form solution takes the form of ${\bm{\mu}}^*=[f({\bm{\theta}},x)^\top f({\bm{\theta}},x)]^{-1}y$, which can induce a computationally expensive inversion of a possibly large matrix $f({\bm{\theta}},x)^\top f({\bm{\theta}},x)$. To address this problem, one can again formulate the matrix-inverse-vector computation $H^{-1}y$ as solving a linear system $\min_v \frac{1}{2}v^\top Hv-v^\top y$ using any optimization methods with efficient matrix-vector products. 

Many symbolic costs can be effectively addressed through closed-form solutions. 
For example, both linear quadratic regulator (LQR) \citep{shaiju2008formulas} and Dijkstra's algorithm \citep{dijkstra1959note} can be solved with a determined optimal solution.
To demonstrate the effectiveness of IL for closed-form solutions, we next present two examples in path planning to utilize the neural model for reducing the search and sampling space of symbolic optimization.

\subsection*{Example 1: Path Planning}

Path planning is a computational process to determine a path from a starting point to a destination within an environment. It typically involves navigating around obstacles and may also optimize certain objectives such as the shortest distance, minimal energy use, or maximum safety. 
Path planning algorithms are generally categorized into global planning, which utilizes global maps of the environment, and local planning, which relies on real-time sensory data. 
We will enhance two widely-used algorithms through IL: A$^*$ search for global planning and cubic spline for local planning, both of which offer closed-form solutions.

\subsection*{Example 1.A: Global Path Planning}

\paragraph{Background}

The A$^*$ algorithm is a graph search technique that aims to find the global shortest feasible path between two nodes \citep{hart1968formal}.
Specifically, A$^*$ selects the path passing through the next node $n\in G$ that minimizes
\begin{equation}
    C(n) = s(n) + h(n),
\end{equation}
where $s(n)$ is the cost from the start node to $n$ and $h(n)$ is a heuristic function that predicts the cheapest path cost from $n$ to the goal.
The heuristic cost can take the form of various metrics such as the Euclidean and Chebyshev distances.
It can be proved that if the heuristic function $h(n)$ is admissible and monotone, i.e., $\forall m\in G, h(n) \le h(m) + d(m,n)$, where $d(m,n)$ is the cost from $m$ to $n$, A$^*$ is guaranteed to find the optimal path without searching any node more than once \citep{dechter1985generalized}.
Due to this optimality, A$^*$ became one of the most widely used methods in path planning \citep{paden2016surveymotionplanning,smith2012dual,algfoor2015comprehensive}.

However, A$^*$ encounters significant limitations, particularly in its requirement to explore a large number of potential nodes. 
This exhaustive search process can be excessively time-consuming, especially for low-power robot systems.
To address this, recent advancements showed significant potential for enhancing efficiency by predicting a more accurate heuristic cost map using data-driven methods \citep{choudhury2018data, yonetani2021path, kirilenko2023transpath}.
Nevertheless, these algorithms utilize optimal paths as training labels, which face challenges in generalization, leading to a bias towards the patterns observed in the training datasets.

\begin{figure}
    \centering
    \includegraphics[width=\linewidth]{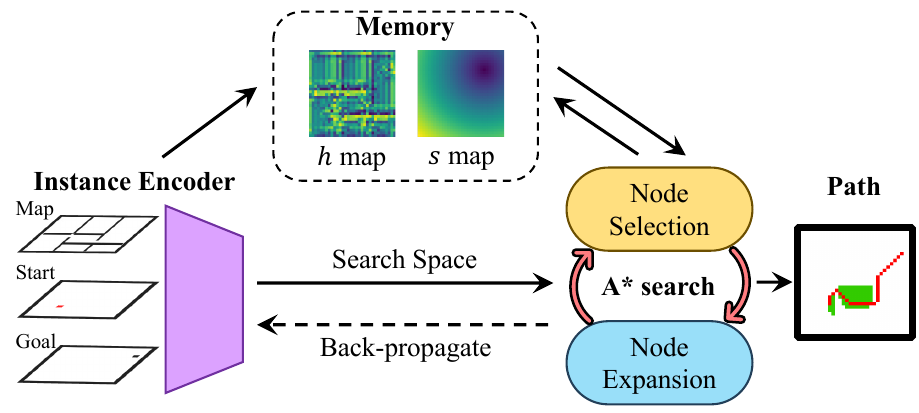}
    \caption{The framework of iA* search. The network predicts a confined search space, leading to overall improved efficiency. The A$^*$ search algorithm eliminates the label dependence, resulting in a self-supervised path planning framework.}
    \label{fig:iastar}
\end{figure}

\paragraph{Approach}

To address this limitation, we leverage the IL framework to remove dependence on path labeling.
Specifically, we utilize a neural network $f$ to estimate the $h$ value of each node, which can be used to reduce the search space.
Subsequently, we integrate a differentiable A$^*$ module, serving as the symbolic reasoning engine $g$ in \eqref{eq:low-il}, to determine the most efficient path.
This results in an effective framework depicted in \fref{fig:iastar}, which we refer to as imperative A$^*$ (iA*) algorithm.
Notably, the iA* framework can operate on a self-supervised basis inherent in the IL framework, eliminating the need for annotated labels.

Specifically, the iA* algorithm is to minimize
\begin{subequations}\label{eq:iastar}
\begin{align}
\min_{{\bm{\theta}}} \quad & U\left(f(\bm{x}, {\bm{\theta}}), g({\bm{\mu}}^*), M({\bm{\nu}}^*)\right), \label{eq:high-iastar} \\
\textrm{s.t.} \quad & {\bm{\mu}}^*,{\bm{\nu}}^* = \arg\min_{{\bm{\mu}},{\bm{\nu}}} L(f, g({\bm{\mu}}), M({\bm{\nu}})), \label{eq:low-iastar}
\end{align}
\end{subequations}
where $\bm{x}$ denotes the inputs including the map, start node, and goal node, ${\bm{\mu}}$ is the set of paths in the solution space, ${\bm{\nu}}$ is the accumulated $h$ values associated with a path in ${\bm{\mu}}$, ${\bm{\mu}}^*$ is the optimal path, ${\bm{\nu}}^*$ is the accumulated $h$ values associated with the optimal path ${\bm{\mu}}^*$, and $M$ is the intermediate $s$ and $h$ maps.

The lower-level cost $L$ is defined as the path length (cost)
\begin{equation}
    L({\bm{\mu}}, M, g) \doteq  C_l({\bm{\mu}}^*, M),
\end{equation}
where the optimal path ${\bm{\mu}}^*$ is derived from the A$^*$ reasoning $g$ given the node cost maps $M$.
Thanks to its closed-form solution, this LL optimization is directly solvable. The UL optimization focuses on updating the network parameter ${\bm{\theta}}$ to generate the $h$ map. Given the impact of the $h$ map on the search area of A$^*$, the UL cost $U$ is formulated as a combination of the search area $C_a({\bm{\mu}}^*, {\bm{\nu}}^*)$ and the path length $C_l({\bm{\mu}}^*)$. This is mathematically represented as:
\begin{equation}
    U({\bm{\mu}}^*, M)  \doteq w_a C_a({\bm{\mu}}^*, {\bm{\nu}}^*) + w_l C_l({\bm{\mu}}^*, M),
\end{equation}
where $w_a$ and $w_l$ are the weights to adjust the two terms, and $C_a$ is the search area computed from the accumulated $h$ map with the optimal path ${\bm{\mu}}^*$.
In the experiments, we define the $s$ cost as the Euclidean distance. The input $\bm{x}$ is represented as a three-channel 2-D tensor, with each channel dedicated to a specific component: the map, the start node, and the goal node. Specifically, the start and goal node channels are represented as a one-hot 2-D tensor, where the indices of the ``ones'' indicate the locations of the start and goal node, respectively.
This facilitates a more nuanced and effective representation of path planning problems.

\paragraph{Optimization}

As shown in \eqref{eq:solution}, once the gradient calculation is completed, we backpropagate the cost $U$ directly to the network $f$.
This process is facilitated by the closed-form solution of the A$^*$ algorithm. For the sake of simplicity, we employ the differentiable A$^*$ algorithm as introduced by \cite{yonetani2021path}. 
It transforms the node selection into an \texttt{argsoftmax} operation and reinterprets node expansion as a series of convolutions, leveraging efficient implementations and AutoDiff in PyTorch \citep{paszke2019pytorch}.

Intuitively, this optimization process involves iterative adjustments.
On one hand, the $h$ map enables the A$^*$ algorithm to efficiently identify the optimal path, but within a confined search area. 
On the other hand, the A$^*$ algorithm's independence from labels allows further refinement of the network.
This is achieved by the back-propagating of the search area and length cost through the differentiable A$^*$ reasoning.
This mutual connection encourages the network to generate increasingly smaller search areas over time, enhancing overall efficiency.
As a result, the network inclines to focus on more relevant areas, marked by reduced low-level reasoning costs, improving the overall search quality.

\subsection*{Example 1.B: Local Path Planning}

\paragraph{Background}
End-to-end local planning, which integrates perception and planning within a single model, has recently attracted considerable interest, particularly for its potential to enable efficient inference through data-driven methods such as reinforcement learning \citep{hoeller2021learning, wijmans2019dd, lee2024learning, ye2021auxiliary} and imitation learning \citep{sadat2020perceive, shah2023gnm, loquercio2021learning}. Despite these advancements, significant challenges persist. Reinforcement learning-based methods often suffer from sample inefficiency and difficulties in directly processing raw, dense sensor inputs, such as depth images. Without sufficient guidance during training, reinforcement learning struggles to converge on an optimal policy that generalizes well across various scenarios or environments.
Conversely, imitation learning relies heavily on the availability and quality of labeled trajectories. Obtaining these labeled trajectories is particularly challenging for robotic systems that operate under diverse dynamics models, thereby limiting their broad applicability in flexible robotic systems.

\begin{figure}
    \centering
    \includegraphics[width=\linewidth]{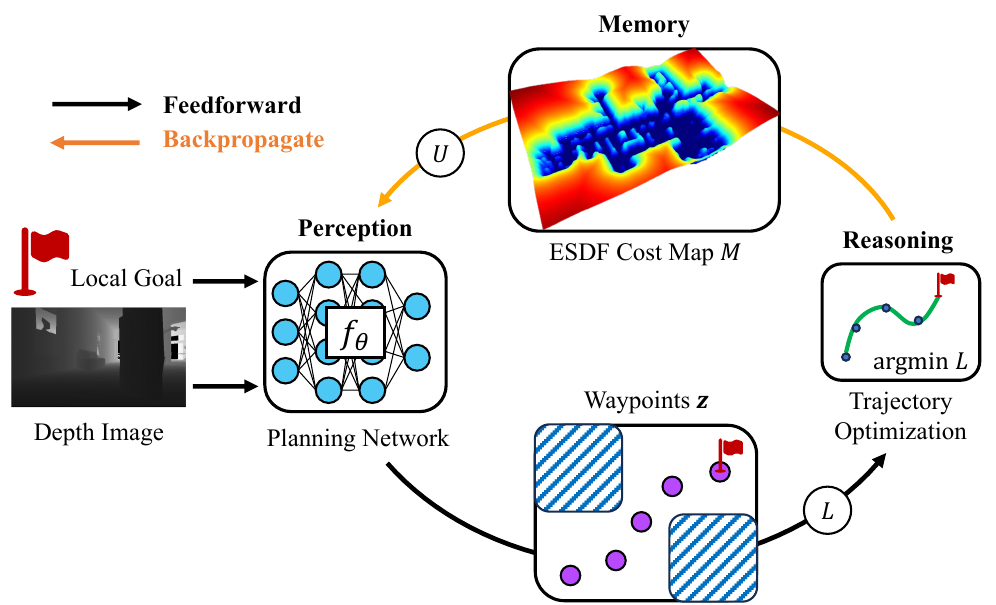}
    \caption{The framework of iPlanner. The higher-level network predicts waypoints, which are interpolated by the lower-level optimization to ensure path continuity and smoothness.}
    \label{fig:iplanner}
\end{figure}

\paragraph{Approach}
To address these challenges, we introduce IL to local planning and refer to it as imperative local planning (iPlanner), as depicted in \fref{fig:iplanner}. Instead of predicting a continuous trajectory directly, iPlanner uses the network to generate sparse waypoints, which are then interpolated using a trajectory optimization engine based on a cubic spline. This approach leverages the strengths of both neural and symbolic modules: neural networks excel at dynamic obstacle detection, while symbolic modules optimize multi-step navigation strategies under dynamics. By enforcing the network output sparse waypoints rather than continuous trajectories, iPlanner effectively combines the advantages of both modules.
Specifically, iPlanner can be formulated as:
\begin{subequations}\label{eq:iplanner}
\begin{align}
\min_{{\bm{\theta}}} \quad & U\left(f(\bm{x}, {\bm{\theta}}), g({\bm{\mu}}^*), M({\bm{\mu}}^*)\right), \\
\textrm{s.t.} \quad & {\bm{\mu}}^* = \arg\min_{{\bm{\mu}}\in \mathbb{T}} L(\bm z, {\bm{\mu}} ),
\end{align}
\end{subequations}
where $\bm x$ denotes the system inputs including a local goal position and the sensor measurements such as depth images, $\theta$ is the parameters of a network $f$, $\bm z = f(\bm{x}, \bm{\theta})$ denotes the generated waypoints, which are subsequently optimized by the path optimizer $g$, $\bm{\mu}$ represents the set of valid paths within the constrained space $\mathbb{T}$.
The optimized path, ${\bm{\mu}}^*$, acts as the optimal solution to the LL cost $L$, which is defined by tracking the intermediate waypoints and the overall continuity and smoothness of the path:
\begin{equation}
    L \doteq D(\bm{\mu}, \bm z) + A(\bm{\mu}), \label{eq:low-iplanner}
\end{equation}
where $D(\bm{\mu}, \bm z)$ measures the path continuity based on the \nth{1}-order derivative of the path and $A(\bm{u})$ calculates the path smoothness based on \nth{2}-order derivative. Specifically, we employ the cubic spline interpolation which has a closed-form solution to ensure this continuity and smoothness. This is also essential for generating feasible and efficient paths.
On the other hand, the UL cost $U$ is defined as:
\begin{equation}
    U \doteq w_G C^G({\bm{\mu}}^*, x_{\text{goal}}) + w_L C^L({\bm{\mu}}^*) + w_M M({\bm{\mu}}^*),
\end{equation}
where $C^G({\bm{\mu}}^*, x_{\text{goal}})$ measures the distance from the endpoint of the generated path to the goal $x_{\text{goal}}$, assessing the alignment of the planned path with the desired destination; $C^L({\bm{\mu}}^*)$ represents the motion loss, encouraging the planning of shorter paths to improve overall movement efficiency; $M({\bm{\mu}}^*)$ quantifies the obstacle cost, utilizing information from a pre-built Euclidean signed distance fields (ESDF) map to evaluate the path safety; and $w_G$, $w_L$, and $w_M$ are hyperparameters, allowing for adjustments in the planning strategy based on specific performance objectives.

\paragraph{Optimization}
During training, we leverage the AutoDiff capabilities provided by PyTorch \citep{paszke2019pytorch} to solve the BLO in IL. For the LL trajectory optimization, we adopt the cubic spline interpolation implementation provided by PyPose \citep{wang2023pypose}. It supports differentiable batched closed-form solutions, enabling a faster training process. Additionally, the ESDF cost map is convoluted with a Gaussian kernel, enabling a smoother optimization space. 
This setup allows the loss to be directly backpropagated to update the network parameters in a single step, rather than requiring iterative adjustments. 
As a result, the UL network and the trajectory optimization can mutually improve each other, enabling a self-supervised learning process.

\subsection{First-order Optimization}\label{sec:1st-order}

We next illustrate the scenario that the LL cost $L$ in \eqref{eq:low-il} uses first-order optimizers such as GD. Because GD is a simple differentiable iterative method, one can leverage \textit{unrolled optimization} listed in \algref{alg:main} to solve BLO via AutoDiff.
It has been theoretically proved by \cite{ji2021bilevel,ji2022will} that when the LL problem is strongly convex and smooth, unrolled optimization with LL gradient descent can approximate the hypergradients $\nabla_{\bm{\theta}}$ and $\nabla_{\bm{\gamma}}$ up to an error that decreases linearly with the number of GD steps.
For the \textit{implicit differentiation}, we could leverage the optimality conditions of the LL optimization to compute the implicit gradient.
To be specific, using Chain rule and optimality of ${\bm{\mu}}^*$ and ${\bm{\nu}}^*$, we have the stationary points
\begin{equation}\label{eq:stationary_condition}
    \frac{\partial L}{\partial {\bm{\mu}}} ({\bm{\mu}}^*,{\bm{\nu}}^*,{\bm{\theta}},{\bm{\gamma}})=\frac{\partial L}{\partial {\bm{\nu}}} ({\bm{\mu}}^*,{\bm{\nu}}^*,{\bm{\theta}},{\bm{\gamma}})=0,
\end{equation}
which, by taking differentiation over ${\bm{\theta}}$ and ${\bm{\gamma}}$ and noting the implicit dependence of ${\bm{\mu}}^*$ and ${\bm{\nu}}^*$ on ${\bm{\theta}}$ and ${\bm{\gamma}}$, yields
\begin{subequations}
    \begin{align}\label{eq:uv}
   \underbrace{\frac{\partial^2L}{\partial f\partial {\bm{\mu}}}\frac{\partial f}{\partial {\bm{\theta}}}}_{A} +  \underbrace{\frac{\partial^2 L}{\partial {\bm{\mu}}^2}}_{B} \frac{\partial{\bm{\mu}}^*}{\partial {\bm{\theta}}} +  \underbrace{\frac{\partial^2L}{\partial{\bm{\nu}}\partial{\bm{\mu}}}}_{C}\frac{\partial{\bm{\nu}}^*}{\partial {\bm{\theta}}} &= 0,
    \\    \underbrace{ \frac{\partial^2L}{\partial f\partial {\bm{\nu}}}\frac{\partial f}{\partial {\bm{\theta}}}}_{\widetilde A} +  \underbrace{\frac{\partial^2 L}{\partial {\bm{\nu}}^2}}_{\widetilde B}\frac{\partial{\bm{\nu}}^*}{\partial {\bm{\theta}}} +  \underbrace{\frac{\partial^2L}{\partial{\bm{\mu}}\partial{\bm{\nu}}}}_{\widetilde C}\frac{\partial{\bm{\mu}}^*}{\partial {\bm{\theta}}} &= 0.
    \end{align}
\end{subequations}
Assume that the concatenated matrix $ \Big[\begin{array}{c} B,\; C \\
    \widetilde C, \; \widetilde B
\end{array}\Big]$ invertible at ${\bm{\mu}}^*,{\bm{\nu}}^*,{\bm{\theta}},{\bm{\gamma}}$. Then, it can be derived from the linear equations in \eqref{eq:uv} that the implicit gradients $\frac{\partial {\bm{\mu}}^*}{\partial {\bm{\theta}}}$ and $\frac{\partial{\bm{\nu}}^*}{\partial {\bm{\theta}}}$
\begin{equation}\label{eq:2nd-solution}
   \bigg[\begin{array}{c} \frac{\partial {\bm{\mu}}^*}{\partial {\bm{\theta}}} \\
\frac{\partial {\bm{\nu}}^*}{\partial {\bm{\theta}}}
\end{array}\bigg] = -  \bigg[\begin{array}{c} B,\; C \\
    \widetilde C, \; \widetilde B
\end{array}\bigg]^{-1} \bigg[\begin{array}{c}
A \\ \widetilde A
\end{array}\bigg],
\end{equation}
which, combined with \eqref{eq:solution}, yields the UL gradient $\nabla_{\bm{\theta}} U$ as 
\begin{subequations}
\begin{align}\label{eq:U_grad}
    \nabla_{\bm{\theta}} U =&\frac{\partial U}{\partial f}\frac{\partial f}{\partial {\bm{\theta}}} +   \bigg[\begin{array}{c} \frac{\partial U}{\partial {\bm{\mu}}^*} \\
\frac{\partial U}{\partial {\bm{\nu}}^*}
\end{array}\bigg]^\top \bigg[\begin{array}{c} \frac{\partial {\bm{\mu}}^*}{\partial {\bm{\theta}}} \\
\frac{\partial {\bm{\nu}}^*}{\partial {\bm{\theta}}}
\end{array}\bigg]
\\ = &\frac{\partial U}{\partial f}\frac{\partial f}{\partial {\bm{\theta}}} - \underbrace{\bigg[\begin{array}{c} \frac{\partial U}{\partial {\bm{\mu}}^*} \\
\frac{\partial U}{\partial {\bm{\nu}}^*}
\end{array}\bigg]^\top  \bigg[\begin{array}{c} B,\; C \\
    \widetilde C, \; \widetilde B
\end{array}\bigg]^{-1} }_{D}\bigg[\begin{array}{c}
A \\ \widetilde A
\end{array}\bigg]
\end{align}
\end{subequations}
where the vector-matrix-inverse-product $D$ can be efficiently approximated by solving a linear system w.r.t.~a vector $v$
\begin{equation}
\begin{aligned}\label{eq:ls}
\min_{v}\quad \frac{1}{2}v^\top\bigg[\begin{array}{c} B,\; C \\
    \widetilde C, \; \widetilde B
\end{array}\bigg] v + \bigg[\begin{array}{c} \frac{\partial U}{\partial {\bm{\mu}}^*} \\
\frac{\partial U}{\partial {\bm{\nu}}^*}
\end{array}\bigg]^\top v.
\end{aligned}
\end{equation}
A similar linear system can be derived for computing $\nabla_{\bm{\gamma}} U$. 
Then, the practical methods should first use a first-order optimizer to solve the lower-level problem in~\eqref{eq:low-il} to obtain approximates $\hat{\bm{\mu}}$ and $\hat{\bm{\nu}}$ of the solutions ${\bm{\mu}}^*$ and ${\bm{\nu}}^*$, which are then incorporated into \eqref{eq:U_grad} to obtain an approximate $\widehat\nabla_{\bm{\theta}} U$ of the upper-level gradient $\nabla_{\bm{\theta}} U$ (similarly for $\widehat\nabla_{\bm{\gamma}} U$).
Then, the upper-level gradient approximates $\widehat\nabla_{\bm{\theta}} U$ and $\widehat\nabla_{\bm{\gamma}} U$ are used to optimize the target variables ${\bm{\theta}}$ and ${\bm{\gamma}}$.

\paragraph{Approximation} As mentioned in \sref{sec:solution}, one could approximate the solutions in \eqref{eq:U_grad} by assuming the second-order derivatives are small and thus can be ignored. In this case, we can directly use the fully first-order estimates without taking the implicit differentiation into account as 
\begin{equation}\label{eq:first_approx}
 \widehat\nabla_{\bm{\theta}} U = \frac{\partial U}{\partial f} \frac{\partial f}{\partial {\bm{\theta}}},\quad \widehat\nabla_{\bm{\gamma}} U = \frac{\partial U}{\partial M} \frac{\partial M}{\partial {\bm{\gamma}}},
\end{equation}
which are evaluated at the approximates $\hat {\bm{\mu}}$ and $\hat{\bm{\nu}}$ of the LL solution ${\bm{\mu}}^*,{\bm{\nu}}^*$. 
We next demonstrate it with an example of inductive logical reasoning using first-order optimization.

\paragraph{Discussion on theoretical guarantee and optimality}
To achieve theoretical guarantees, one sufficient condition is that the LL function $L$ in \eqref{eq:low-il} is twice differentiable and strongly convex respect to $\phi$. The UL function $U$ can be possibly non-convex. Based on \citep[Theorem 1]{ji2021bilevel}, the proposed first-order method, which solves the linear system in \eqref{eq:ls} and uses $\hat{\bm{\mu}}$ and $\hat{\bm{\nu}}$ of the solutions ${\bm{\mu}}^*$ and ${\bm{\nu}}^*$, can achieve the first-order $\epsilon$-accurate stationary optimality, i.e., $\|\nabla_\psi\|\leq \epsilon$, if we use $\log\frac{1}{\epsilon}$-level steps for the  approximation in \eqref{eq:first_approx}. Moreover, when the UL function $U$ in \eqref{eq:ggs} is strongly convex with respect to $\theta$ and $\gamma$, we can further show that the obtained solution achieves the global optimality, based on \citep[Theorem 9]{ji2021lower}.

\subsection*{Example 2: Inductive Logical Reasoning}

Given few known facts, logical reasoning aims at deducting the truth value of unknown facts with formal logical rules~\citep{iwanska1993logical,overton2013competence,halpern2013thought}.
Different types of logic have been invented to address distinct problems from various domains, including propositional logic~\citep{wang2019satnet}, linear temporal logic~\citep{xie2021embedding}, and first-order-logic (FOL)~\citep{cropper2021popper}.
Among them, FOL decomposes a logic \textit{term} into \textit{lifted} \textit{predicates} and \textit{variables} with quantifiers, which has strong generalization power and can be applied to arbitrary entities with any compositions.
Due to the strong capability of generalization, FOL has been widely leveraged in knowledge graph reasoning~\citep{yang2019nlil} and robotic task planning~\citep{chitnis2022nsrt}. 
However, traditional FOL requires a human expert to carefully design the predicates and rules, which is tedious.
Automatically summarizing the FOL predicates and rules is a long-standing problem, which is known as inductive logic programming (ILP).
However, existing works study ILP only in simple structured data like knowledge graphs.
To extend this research into robotics, we will explore how IL can make ILP work with high-dimensional data like RGB images.

\paragraph{Background}
One stream of the solutions to ILP is based on forward search algorithms~\citep{cropper2021popper,shindo2023aILP,cropper2024maxsynth}.
For example, Popper constructs answer set programs based on failures, which can significantly reduce the hypothesis space~\citep{cropper2021popper}.
However, as FOL is a combinatorial problem, search-based methods can be extremely time-consuming as the samples scale up.
Recently, some works introduced neural networks to assist the search process~\citep{yang2017neurallp,yang2019nlil,yang2022logicdef} or directly implicitly represent the rule~\citep{dong2019nlm}.
To name a few, NeurlLP re-formulates the FOL rule inference into the multi-hop reasoning process, which can be represented as a series of matrix multiplications~\citep{yang2017neurallp}.
Thus, learning the weight matrix becomes equivalent to inducting the rules.
Neural logic machines, on the other hand, designed a new FOL-inspired network structure, where the rules are implicitly stored in the network weights~\citep{dong2019nlm}.

Despite their promising results in structured symbolic datasets, such as binary vector representations in BlocksWorld \citep{dong2019nlm} and knowledge graphs \citep{yang2017neurallp}, their capability of handling high-dimensional data like RGB images is rarely explored.

\begin{figure}
    \centering
    \includegraphics[width=\linewidth]{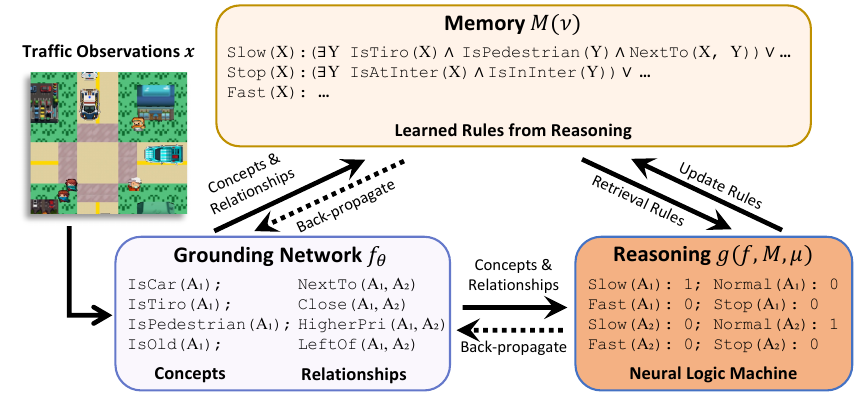}
    \caption{The iLogic pipeline, which simultaneously conducts rule induction and high-dimensional data grounding.}
    \label{fig:iLogic}
\end{figure}

\paragraph{Approach}
To address this gap, we verify the IL framework with the visual action prediction (VAP) task in the LogiCity \citep{li2024logicity}, a recent logical reasoning benchmark.
In VAP, a model must simultaneously discover traffic rules, identify the concept of agents and their spatial relationships, and predict the agents' actions.
As shown in \fref{fig:iLogic}, we utilize a grounding network $f$ to predict the agent concepts and spatial relationships, which takes the observations such as images as the model input.
The predicted agent concepts and their relationships are then sent to the reasoning module for rule induction and action prediction. The learned rules from the reasoning process are stored in the memory and retrieved when necessary.
For example, the grounding network may output concepts ``\texttt{IsTiro($A_1$)}'', ``\texttt{IsPedestrian($A_2$)},'' and their relationship ``\texttt{NextTo($A_1$, $A_2$)}''. Then the reasoning engine applies the learned rules, ``\texttt{Slow($X$)}$\leftarrow$($\exists$ $Y$ \texttt{IsTiro($X$)} $\land$ \texttt{IsPedestrian($Y$)} $\land$ \texttt{NextTo($X$, $Y$)}) $\lor$ ...'', stored in the memory module to infer the next actions ``\texttt{Slow($A_1$)}'' or ``\texttt{Normal($A_2$)}'', as is displayed in \fref{fig:vap:example}. Finally, the predicted actions and the observed actions from the grounding networks are used for loss calculation.
In summary, we formulate this pipeline in \eqref{eq:iLogic} and refer to it as imperative logical reasoning (iLogic):
\begin{subequations}\label{eq:iLogic}
\begin{align}
    \min_{{\bm{\theta}}}& \;\; U(f({\bm{\theta}},\bm{x}), g({\bm{\mu}}^*), M({\bm{\nu}}^*)), \\
    \operatorname{s.t.}& \;\; {\bm{\mu}}^*,~{\bm{\nu}}^*=\argminC_{{\bm{ {\bm{\mu}},~{\bm{\nu}}}}} \; L(f, g({M, \bm{\mu}}), M({\bm{\nu}})).
\end{align}
\end{subequations}
Specifically, in the experiments we use the same cross-entropy function for the UL and LL costs, i.e., $U \doteq L \doteq -\sum a_{i}^o\log a_{i}^p$, where $a_{i}^o$ and $a_{i}^p$ are the observed and predicted actions of the $i$-th agent, respectively. This results in a self-supervised simultaneous grounding and rule induction pipeline for logical reasoning.
Additionally, $f$ can be any grounding networks and we use a feature pyramid network (FPN)~\citep{lin2017fpn} as the visual encoder and two MLPs for relationships; $g$ can be any logical reasoning engines and we use a neural logical machine (NLM) \citep{dong2019nlm} with parameter ${\bm{\mu}}$; and $M$ can be any memory modules storing the learned rules and we use the MLPs in the NLM parameterized by ${\bm{\nu}}$. 
More details about the task, model, and list of concepts are presented in the \sref{sec:exp-iLogic}.

\begin{figure}
    \centering
    \includegraphics[width=\linewidth]{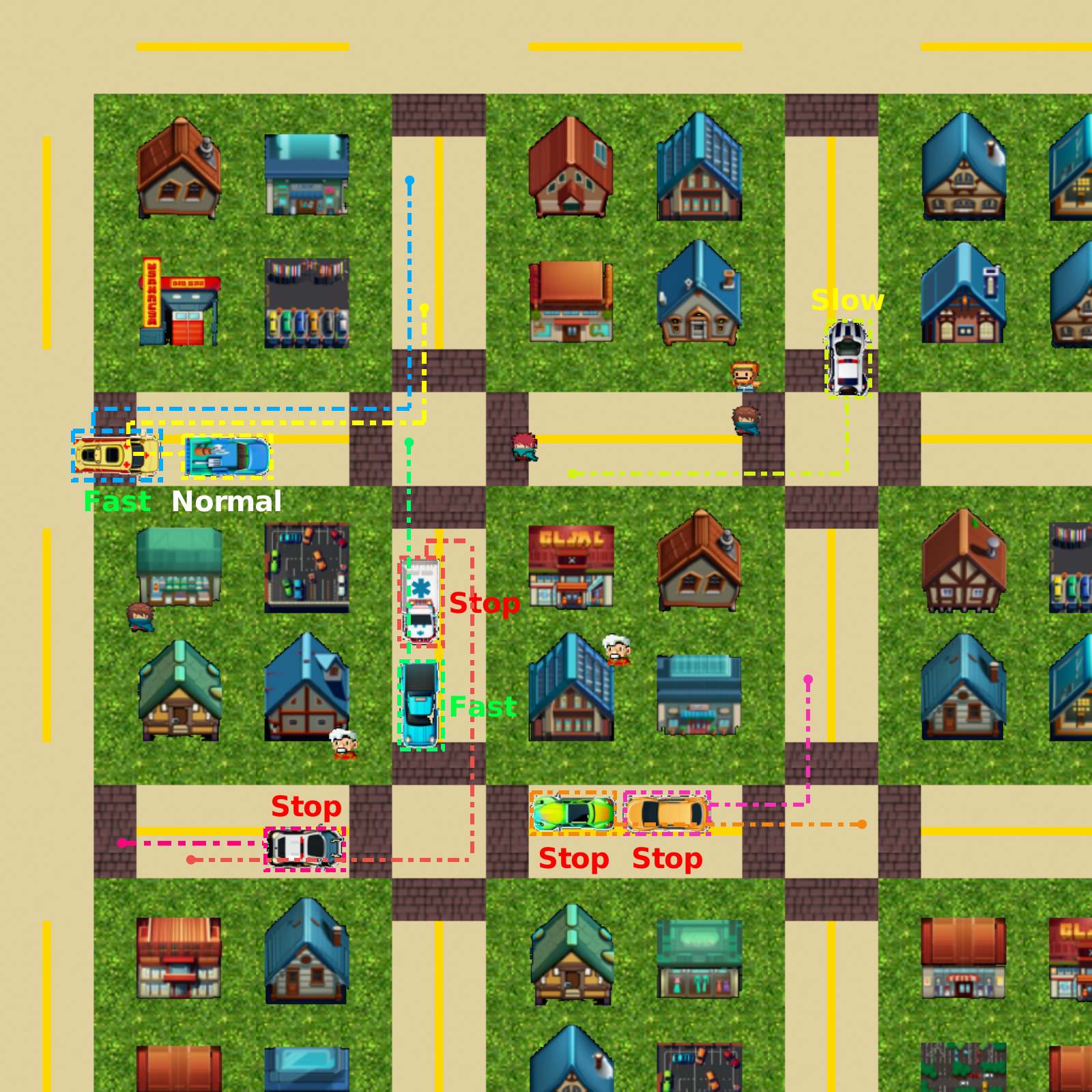}
    \caption{One snapshot of the visual action prediction task of LogiCity benchmark. The next actions of each agent are reasoned by our iLogic based on groundings and learned rules.}
    \label{fig:vap:example}
\end{figure}

\paragraph{Optimization}
Given that gradient descent algorithms can update both the grounding networks and the NLM, we can apply the first-order optimization to solve iLogic.
The utilization of task-level action loss enables efficient self-supervised training, eliminating the necessity for explicit concept labels. Furthermore, BLO, compared to single-level optimization, helps the model concentrate more effectively on learning concept grounding and rule induction, respectively. This enhances the stability of optimization and decreases the occurrence of sub-optimal outcomes. Consequently, the model can learn rules more accurately and predict actions with greater precision.

\subsection{Constrained Optimization}\label{sec:constrained}

We next illustrate the scenarios in which the LL cost $L$ in \eqref{eq:low-il} is subject to the general constraints in \eqref{eq:il-constraint}. We discuss two cases with equality and inequality constraints, respectively.
Constrained optimization is a thoroughly explored field, and related findings were presented in \citep{dontchev2009implicit} and summarized in \citep{gould2021deep}.
This study will focus on the integration of constrained optimization into our special form of BLO \eqref{eq:il} under the framework of IL.

\paragraph{Equality Constraint}

In this case, the constraint in \eqref{eq:il-constraint} is
\begin{equation}\label{equal_constraint}
    \xi(M({\bm{\gamma}}, {\bm{\nu}}), {\bm{\mu}}, f) = 0.
\end{equation}
Recall that $\bm \phi = [{\bm{\mu}}^\top,{\bm{\nu}}^\top]^\top$ is a vector concatenating the symbol-like variables ${\bm{\mu}}$ and ${\bm{\nu}}$, and $\bm \psi =[{\bm{\theta}}^\top,{\bm{\gamma}}^\top]^\top$ is a vector concatenating the neuron-like variables ${\bm{\theta}}$ and ${\bm{\gamma}}$.
Therefore, the implicit gradient components can be expressed as differentiation of $\bm \phi^*$ to $\bm \psi$, and we have
\begin{equation}
  \nabla \bm \phi^*(\bm \psi) =
  \begin{pmatrix}
 \frac{\partial {\bm{\mu}}^*}{\partial {\bm{\theta}}} &  \frac{\partial {\bm{\mu}}^*}{\partial {\bm{\gamma}}} \\
\frac{\partial {\bm{\nu}}^*}{\partial {\bm{\theta}}} & \frac{\partial {\bm{\nu}}^*}{\partial {\bm{\gamma}}} \\
\end{pmatrix}.
\end{equation}
\begin{lemma}
Assume the LL cost $L(\cdot)$ and the constraint $\xi(\cdot)$ are \nth{2}-order differentiable near $({\bm{\theta}},{\bm{\gamma}},\bm \phi^*({\bm{\theta}},{\bm{\gamma}}))$ and the Hessian matrix $H$ below is invertible, we then have 
\begin{equation}
    \nabla \bm \phi^*(\bm \psi) = H^{-1} L_{\bm{\phi}}^\top[L_{\bm{\phi}} H^{-1}L_{\bm{\phi}}^\top]^{-1}(L_{\bm{\phi}} H^{-1}L_{\bm{\phi \psi}}-L_{\bm{\psi}}) - H^{-1}L_{\bm{\phi \psi}},
\end{equation}
where the derivative matrices are given by
\begin{equation}
\begin{aligned}\label{eq:cons_gra}
L_{\bm{\phi}} = & \frac{\partial  \xi(M({\bm{\gamma}}, {\bm{\nu}}^*), {\bm{\mu}}^*, f)}{\partial {\bm{\phi}}},\quad L_{\bm{\psi}} =  \frac{\partial  \xi(M({\bm{\gamma}}, {\bm{\nu}}^*), {\bm{\mu}}^*, f)}{\partial {\bm{\psi}}},
\\ L_{\bm{\phi \psi}} =& \frac{\partial^2 L(f, g({\bm{\mu}}^*), M({\bm{\gamma}}, {\bm{\nu}}^*))}{\partial \bm\psi\partial \bm\phi} - \lambda \frac{\partial^2  \xi(M({\bm{\gamma}}, {\bm{\nu}}^*), {\bm{\mu}}^*, f)}{\partial \bm\psi\partial \bm\phi},
\\ H =& \frac{\partial^2 L(f, g({\bm{\mu}}^*), M({\bm{\gamma}}, {\bm{\nu}}^*))}{\partial^2 \bm\phi} - \lambda \frac{\partial^2  \xi(M({\bm{\gamma}}, {\bm{\nu}}^*), {\bm{\mu}}^*, f)}{\partial^2 \bm\phi}, 
\end{aligned}
\end{equation}
and duel variable $\lambda\in\mathbb{R}$ satisfies $\lambda L_{\bm{\phi}} = \frac{\partial L\xi(f, g({\bm{\mu}}), M({\bm{\gamma}}, {\bm{\nu}}))}{\partial \bm\phi}$.
\end{lemma}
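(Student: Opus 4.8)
The plan is to derive the implicit gradient by differentiating the KKT optimality conditions of the constrained lower-level problem in \eqref{eq:low-il}--\eqref{eq:il-constraint}. First I would form the Lagrangian $\mathcal{L}(\bm\phi,\lambda;\bm\psi) = L(f,g({\bm{\mu}}),M({\bm{\gamma}},{\bm{\nu}})) - \lambda\, \xi(M({\bm{\gamma}},{\bm{\nu}}),{\bm{\mu}},f)$, so that a minimizer $\bm\phi^*(\bm\psi)$ with multiplier $\lambda(\bm\psi)$ satisfies the stationarity condition $\partial \mathcal{L}/\partial \bm\phi = L_{\bm\phi}^{\mathcal{L}} - \lambda\, \xi_{\bm\phi} = \bm 0$ together with primal feasibility $\xi(\cdot) = 0$. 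Here the relation $\lambda L_{\bm{\phi}} = \partial(L\xi)/\partial\bm\phi$ in the statement encodes precisely this stationarity (with the shorthand $L_{\bm\phi}$ reused for $\xi_{\bm\phi}$). These two vector equations in $(\bm\phi^*,\lambda)$ hold identically in $\bm\psi$ near the reference point, which is what licenses implicit differentiation.

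Next I would differentiate both equations with respect to $\bm\psi$, using the chain rule and the implicit dependence $\bm\phi^* = \bm\phi^*(\bm\psi)$, $\lambda = \lambda(\bm\psi)$. Differentiating the stationarity equation gives $L_{\bm{\phi\psi}} + H\,\nabla\bm\phi^*(\bm\psi) - \xi_{\bm\phi}^\top \nabla\lambda = \bm 0$, where $H$ is the Hessian of the Lagrangian in $\bm\phi$ (the $L_{\bm{\phi\psi}}$ and $H$ of \eqref{eq:cons_gra}, already incorporating the $-\lambda\,\partial^2\xi$ terms). Differentiating the feasibility equation $\xi(\bm\phi^*(\bm\psi),\bm\psi)=0$ gives $L_{\bm{\psi}} + L_{\bm{\phi}}\,\nabla\bm\phi^*(\bm\psi) = \bm 0$, using the notation $L_{\bm\phi},L_{\bm\psi}$ for the partials of $\xi$. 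This is a linear system in the two unknowns $\nabla\bm\phi^*(\bm\psi)$ and $\nabla\lambda$.

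Then I would solve this system by block elimination. Since $H$ is assumed invertible, the first equation yields $\nabla\bm\phi^*(\bm\psi) = H^{-1}(L_{\bm\phi}^\top \nabla\lambda - L_{\bm{\phi\psi}})$. Substituting into the second equation gives $L_{\bm\psi} + L_{\bm\phi} H^{-1} L_{\bm\phi}^\top \nabla\lambda - L_{\bm\phi}H^{-1}L_{\bm{\phi\psi}} = \bm 0$, hence $\nabla\lambda = [L_{\bm\phi}H^{-1}L_{\bm\phi}^\top]^{-1}(L_{\bm\phi}H^{-1}L_{\bm{\phi\psi}} - L_{\bm\psi})$, which is well defined because the Schur complement $L_{\bm\phi}H^{-1}L_{\bm\phi}^\top$ is invertible (this follows from invertibility of $H$ and full row rank of the constraint Jacobian $L_{\bm\phi}$, i.e. LICQ — a point I would flag as an implicit hypothesis alongside the stated invertibility of $H$). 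Back-substituting $\nabla\lambda$ into the expression for $\nabla\bm\phi^*(\bm\psi)$ produces exactly the claimed formula $\nabla\bm\phi^*(\bm\psi) = H^{-1}L_{\bm\phi}^\top[L_{\bm\phi}H^{-1}L_{\bm\phi}^\top]^{-1}(L_{\bm\phi}H^{-1}L_{\bm{\phi\psi}} - L_{\bm\psi}) - H^{-1}L_{\bm{\phi\psi}}$.

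The main obstacle is less the algebra — which is a routine block-matrix inversion — than justifying that implicit differentiation is legitimate: one needs $\bm\phi^*(\bm\psi)$ and $\lambda(\bm\psi)$ to exist as differentiable functions in a neighborhood of the reference point. This is where the $C^2$ assumption on $L$ and $\xi$ and the invertibility of $H$ (plus the tacit constraint-qualification ensuring the Schur complement is nonsingular) enter: together they make the KKT map a local diffeomorphism, so the implicit function theorem applies to the stacked system $(\partial\mathcal{L}/\partial\bm\phi,\ \xi) = \bm 0$. I would state this explicitly at the start of the proof and then let the block elimination run.
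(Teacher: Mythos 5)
Your proposal follows essentially the same route as the paper's own proof: form the Lagrangian, write the KKT stationarity and feasibility conditions, differentiate both with respect to $\bm\psi$ treating $\bm\phi^*$ and $\lambda$ as implicit functions, and solve the resulting linear system by block elimination using the invertibility of $H$. Your version is in fact slightly more careful than the paper's, since you make the back-substitution explicit and flag that invertibility of the Schur complement $L_{\bm\phi}H^{-1}L_{\bm\phi}^\top$ requires a constraint qualification that the paper leaves tacit.
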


\begin{proof}
First, since this is a constrained problem, one can define the Lagrangian function
\begin{equation}
\begin{aligned}
\mathfrak{L}(\bm\psi,\bm\phi,\lambda) = L(f, g({\bm{\mu}}), M({\bm{\gamma}}, {\bm{\nu}}))-\lambda \xi(M({\bm{\gamma}}, {\bm{\nu}}), {\bm{\mu}}, f).
\end{aligned}
\end{equation}
Assume $\bm\phi^*$ to be the solution given input $\bm\psi$. Then, a $\lambda$ exists such that the Lagrangian has zero gradients at $\bm\phi^*$ and $\lambda$. Thus, taking derivatives w.r.t.~$\bm\phi$ and $\lambda$ yields
\begin{equation}
\begin{aligned}\label{eq:stationary}
    \Bigg[\begin{array}{c} \frac{\partial L(f, g({\bm{\mu}}^*), M({\bm{\gamma}}, {\bm{\nu}}^*))}{\partial \bm\phi}-\lambda\frac{\partial \xi(M({\bm{\gamma}}, {\bm{\nu}}^*), {\bm{\mu}}^*, f)}{\partial \bm\phi} \\
    \xi(M({\bm{\gamma}}, {\bm{\nu}}^*), {\bm{\mu}}^*, f)
    \end{array}\Bigg] = \mathbf{0}.
\end{aligned}
\end{equation}
Note that if $\lambda=0$, then this means that the gradient $\frac{\partial L(f, g({\bm{\mu}}^*), M({\bm{\gamma}}, {\bm{\nu}}^*))}{\partial \bm\phi}=\mathbf{0}$ (i.e., the solution of the unconstrained problems already satisfies the constraint). If $\lambda>0$, then we can obtain from \eqref{eq:stationary} that 
\begin{equation}    
    \frac{\partial L(f, g({\bm{\mu}}^*), M({\bm{\gamma}}, {\bm{\nu}}^*))}{\partial \bm\phi}=\lambda\frac{\partial \xi(M({\bm{\gamma}}, {\bm{\nu}}^*), {\bm{\mu}}^*, f)}{\partial \bm\phi}, 
\end{equation}
which means that the gradient of $L$ w.r.t.~$\bm\phi$ has the same direction as the gradient of constraint $\xi$ w.r.t.~$\bm\phi$ at the optimal solution ${\bm{\mu}}^*$ and ${\bm{\nu}}^*$.
Taking the derivative of \eqref{eq:stationary} w.r.t.~$\bm\psi$ and noting that $\bm\phi^*$ and $\lambda$ are functions of $\bm\psi$, thus
\begin{subequations}
\begin{align}
&\frac{\partial^2 L}{\partial \bm\psi\partial \bm\phi}  + \frac{\partial^2 L}{\partial^2 \bm\phi}  \nabla \bm\phi^*- \frac{\partial \xi}{\partial \bm\phi} \nabla \lambda - \lambda (\frac{\partial^2 \xi}{\partial \bm\psi\partial \bm\phi}+\frac{\partial^2 \xi}{\partial^2 \bm\phi}\nabla \bm\phi^*) = \mathbf{0}, \label{eq:cons_imp_1}
\\&\frac{\partial \xi}{\partial \bm\psi} + \frac{\partial \xi}{\partial \bm\phi}\nabla \bm\phi^*=\mathbf{0},\label{eq:cons_imp_2}
\end{align}
\end{subequations}
which together indicates that
\begin{equation}
    \nabla \bm\phi^* = H^{-1} L_{\bm\phi}^\top[L_{\bm\phi} H^{-1}L_{\bm\phi}^\top]^{-1}(L_{\bm\phi} H^{-1}L_{\bm\phi \bm\psi}-L_{\bm\psi}) - H^{-1}L_{\bm\phi \bm\psi},
\end{equation}
which completes the proof. 
\end{proof}

\paragraph{Inequality Constraint}

In this case, the constraint in \eqref{eq:il-constraint} is
\begin{equation}\label{inequal_constraint}
\xi(M({\bm{\gamma}}, {\bm{\nu}}), {\bm{\mu}}, f) \leq 0.
\end{equation}
The analysis is similar to the equality case, with minor differences. 
Since the solution $\bm\phi^*$ with the inequality constraint \eqref{inequal_constraint} is a local minimum of the original inequality-constrained problem, it is reasonable to assume that the inequality constraints are active at $\bm\phi^*$.
Assume $\bm\phi^*(\bm\psi)$ exists, functions $L(\cdot)$ and $\xi(\cdot)$ are \nth{2}-order differentiable in the neighborhood of $(\bm\psi,\bm\phi^*(\bm\psi))$, and the inequality constraint is active at $\bm\phi^*(\bm\psi)$. Then, we can derive the same implicit derivative $\nabla \bm\phi^*(\bm\psi)$ as in \eqref{eq:cons_gra} of the equality case, except that the dual variable $\lambda\leq 0$ and $\nabla \bm\phi^*(\bm\psi)$ needs to be computed using one-sided partial $\lambda=0$ due to non-differentiability, meaning we approach the point from only one direction (either left or right) to define the gradient, as the function may have abrupt changes or `kinks' at that point. 

\begin{proof}
    Because the inequality constraints are active at $\bm\phi^*$ (i.e., $\xi(M({\bm{\gamma}}, {\bm{\nu}}^*), {\bm{\mu}}^*, f)=0$), the remaining proof exactly follows the same steps as in the equality case.
\end{proof}

While we have derived closed-form solutions for the equality constraint in \eqref{equal_constraint} and the inequality constraint in \eqref{inequal_constraint}, calculating the matrix inversions $H^{-1}$ and $[L_{\bm\phi} H^{-1}L_{\bm\phi}^\top]^{-1}$ can be computationally expensive. 
To mitigate this, one may employ the approach discussed in \sref{sec:close-form}, which involves approximating the matrix-inverse-vector product $H^{-1}y$ by solving the linear system $\min_v \frac{1}{2}v^\top Hv - v^\top y$.
This complicated computing process can sometimes be simplified by specific robot settings.
We next present an example of robot control illustrating this process.

\paragraph{Discussion on theoretical guarantee and optimality} When the Hessian matrices $\frac{\partial^2 L} {\partial^2 \phi}$ and $\frac{\partial^2 \xi} {\partial^2 \phi}$  of the LL function $L$ and the constraint function $\xi$ are invertible, we can guarantee the matrix $H$ in \eqref{eq:cons_gra} is invertible. One sufficient condition for such invertibility is that $L$ and $\xi$ are strongly convex. 
Motivated by the approximate Karush-Kuhn-Tucker (KKT) condition \citep{andreani2010new}, which is characterized as an
optimality condition for nonlinear program, we can use the residual function in \citep[Eq. (15)]{yao2024constrained} as an optimality measurement. As a result, the optimality can be established and guaranteed under these conditions.

\subsection*{Example 3: Model Predictive Control}

Modeling and control of nonlinear dynamics precisely is critical for robotic applications such as aerial manipulation~\citep{he2023image, geng2020cooperative} and off-road navigation~\citep{triest2023learning}. 
Although classic methods based on physics have excellent generalization abilities, they rely heavily on problem-specific parameter tuning. 
Methods like state feedback~\citep{mellinger2011minimum} and optimal control~\citep{garcia1989model, ji2020robust} require precise system modeling to predict and control dynamics effectively.
However, it is challenging to accurately model unpredictable factors such as wind gusts, boundary layer effects, uneven terrain, or hidden dynamics of chaotic nonlinear effects. 

\paragraph{Background}
In recent years, researchers have spent efforts to integrate physical modeling into data-driven methods and have gained remarkable success in the field of system control~\citep{o2022neural,hu2023off}.
For example, \cite{amos2018differentiable} developed the differentiable model predictive control (MPC) to combine the physics-based modeling with data-driven methods, enabling learning dynamic models and control policies in an end-to-end manner~\citep{jin2020pontryagin}.
However, many prior studies depend on expert demonstrations or labeled data for supervised learning~\citep{jin2022learning, song2022policy}. While supervised learning yields effective results in the training datasets, it suffers from challenging conditions such as unseen environments and external disturbances.

\begin{figure}[t]
    \centering
    \includegraphics[width=0.98\linewidth]{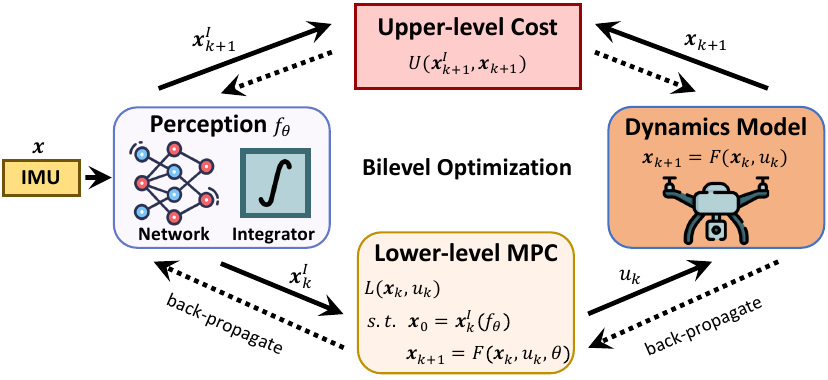}
    \caption{The framework of iMPC. The IMU model predicts the current state $\bm{x}_k^I$. The differentiable MPC minimizes the lower-level cost $L$ to get the optimal action $\bm{u}_k$, which controls the dynamics model to the next predicted state ($\bm{x}_{k+1}$) and actuates the real UAV to the next state measured by the IMU model ($\bm{x}^I_{k+1}$). The upper-level optimization minimizes the discrepancy between the two states $\bm{x}_{k+1}$ and $\bm{x}^I_{k+1}$.}
    \label{fig:impc}
\end{figure}

\paragraph{Approach}

To eliminate dependence on human demonstrations, we incorporate differentiable MPC into the IL framework, hereafter termed imperative MPC (iMPC). We present a specific application, the IMU-based attitude control for unmanned aerial vehicles (UAVs) as an example, where a network denoises the raw IMU measurements and predicts the UAV attitude, which is then used by MPC for attitude control. It is worth noting that, although designed for attitude control, iMPC can be adjusted to other control problems.

As illustrated in \fref{fig:impc}, iMPC consists of a perception network $f({\bm{\theta}}, \bm{x})$ and a physics-based reasoning engine based on MPC.
Specifically, the encoder $f$ takes raw sensor measurements, and here are IMU measurements $\bm{x}$, to predict the HL parameters for MPC, and here is UAV's current state (attitude) $\bm{x}_{k}^I$.
The MPC takes the current attitude as initial state and solves for the optimal states and control sequence ${\bm{\mu}}^* = \{{\bm{\mu}}_k^*\}_{1:T} = \{\bm{x}_k, \bm{u}_k\}_{1:T}$ over a time horizon $T$ under the constraints of system dynamics $F(\cdot)$, actuator limit, etc.
Following the IL framework, the iMPC can be formulated as:
\begin{subequations}\label{eq:control}
\begin{align}
\min_{{\bm{\theta}}} \quad & U\left(f(\bm{x}; {\bm{\theta}}), g({\bm{\mu}}^*)\right), \\
\textrm{s.t.} \quad & {\bm{\mu}}^* = \arg\min_{{\bm{\mu}}} L(f, g({\bm{\mu}})), \\
&\textrm{s.t.} \quad  \bm{x}_{k+1} = F(\bm{x}_k, \bm{u}_k), \quad k = 1,\cdots,T \\
&\quad \quad~\bm{x}_{0} = \bm{x}(t_0), \\
&\quad \quad~\bm{x}_k \in \mathcal{X}; \; \bm{u}_k \in \mathcal{U},
\end{align}
\end{subequations}
where the LL optimization is the standard MPC optimization to output an optimal control sequence ${\bm{\mu}}^*=\{{\bm{\mu}}_k\}_{1:T}$. Here, $L$ is designed to minimize the tracking error and control effort: 
\begin{equation}\label{eq:mpc_cost1}
    L({\bm{\mu}}_k) \doteq \sum_{k=0}^{T-1}\left(\Delta \bm \mu_k^\top\bm{Q}_k\Delta \bm \mu_k + \bm{p}_k \Delta \bm \mu_k \right),
\end{equation}
where $ \Delta \bm \mu_k = {\bm{\mu}}_k - {\bm{\mu}}_k^{\text{ref}}$, ${\bm{\mu}}_k^{\text{ref}}$ is the reference state trajectory, and $\bm{Q}_k$ and $\bm{q}_k$ are the weight matrices to balance tracking performance and control effort, respectively. 
The optimal control action $\bm{u}_k$ will be commanded to both the modeled UAV (dynamics) and the real UAV, leading to the next state $\bm{x}_{k+1}$ and $\bm{x}^I_{k+1}$ (measured by the IMU network).
The UL cost is defined as the Euclidean distance between $\bm{x}_{k+1}$ and $\bm{x}^I_{k+1}$
\begin{equation}
    U({\bm{\theta}}) \doteq \| \bm{x}_{k+1}^I-\bm{x}_{k+1} \|_2.
\end{equation}
This discrepancy between the predicted states and the IMU network measurement reflects the models's imperfectness, indicating the learnable parameters ${\bm{\theta}}$ in both the network and the MPC module can be adjusted.
Specifically, we adopt an IMU denoising model, AirIMU \citep{qiu2023airimu} as the perception network, which is a neural model that can propagate system uncertainty through a time horizon. The differentiable MPC (Diff-MPC) from PyPose \citep{wang2023pypose} is selected as the physics-based reasoning module.

\paragraph{Optimization}
Similar to other examples, the key step for the optimization is to compute the implicit gradient $\frac{\partial {\bm{\mu}}*}{\partial {\bm{\theta}}}$ from~\eqref{eq:cons_gra}. Thanks to the Diff-MPC which uses a problem-agnostic AutoDiff implementation that applies one extra optimization iteration at the optimal point in the forward pass to enable the automatic gradient computation for the backward pass, our iMPC does not require the computation of gradients for the entire unrolled chain or analytical derivatives via implicit differentiation.
This allows us to backpropagate the UL cost $U$ through the MPC to update the network parameters ${\bm{\theta}}$ in one step.
Additionally, iMPC not only avoids the heavy reliance on annotated labels leveraging the self-supervision from the physical model but also obtains dynamically feasible predictions through the MPC module. This results in a ``physics-infused'' network, enabling more accurate attitude control, as demonstrated in \sref{exp:impc}.

\subsection{Second-order Optimization}\label{sec:2nd-order}

We next illustrate the scenario that LL cost $L$ in \eqref{eq:low-il} requires \nth{2}-order optimizers such as Gauss-Newton (GN) and Levenberg–Marquardt (LM). Because both GN and LM are iterative methods, one can leverage \textit{unrolled differentiation} listed in \algref{alg:main} to solve BLO.
For the second strategy \textit{implicit differentiation}, we could combine \eqref{eq:U_grad} with a linear system solver in \eqref{eq:ls} in \sref{sec:1st-order} to compute the implicit gradient $\nabla_{\bm{\theta}} U$ and  $\nabla_{\bm{\gamma}} U$. Similar to the practical approach with LL first-order optimization, we can first use a second-order optimizer to solve the LL problem in~\eqref{eq:low-il} to obtain approximates $\hat{\bm{\mu}}$ and $\hat{\bm{\nu}}$ of the solutions ${\bm{\mu}}^*$ and ${\bm{\nu}}^*$, which are then incorporated into \eqref{eq:U_grad} to obtain an approximate $\widehat\nabla_{\bm{\theta}} U$ of the UL gradient $\nabla_{\bm{\theta}} U$ 
(similarly for $\widehat\nabla_{\bm{\gamma}} U$).
Then, the UL gradient approximates $\widehat\nabla_{\bm{\theta}} U$ and $\widehat\nabla_{\bm{\gamma}} U$ are used to optimize the target variables ${\bm{\theta}}$ and ${\bm{\gamma}}$.
The first-order approximation without any second-order derivative computations as in \eqref{eq:first_approx} can be also developed for a practical training speedup. 

\paragraph{Discussion on theoretical guarantee and optimality} The theoretical guarantee is similar to that for the first-order methods in \sref{sec:1st-order}, as long as the second-order LL optimizers such as Gauss-Newton and LM approximate $\mu^*$ and $\nu^*$ to the target accuracy $\epsilon$. For example, if the LL problem is strongly convex, the second-order optimization can find an $\epsilon$-accurate solution with an accelerated linear convergence rate \citep[Theorem 9 and 10]{ji2021lower}.

\subsection*{Example 4: SLAM}

Simultaneous localization and mapping (SLAM) is a critical technology in computer vision and robotics \citep{aulinas2008slam,cadena2016past}. 
It aims to simultaneously track the trajectory of a robot and build a map of the environment.
SLAM is essential in diverse robotic applications, such as indoor navigation \citep{wang2017non}, underwater exploration \citep{suresh2020active}, and space missions \citep{dor2021visual}.
A SLAM system generally adheres to a front-end and back-end architecture. Specifically, the front-end is typically responsible for interpreting sensor data and providing an initial estimate of the robot's trajectory and the environment map; the back-end refines the initial estimate by performing global optimization to improve overall accuracy.

\paragraph{Background}

Recent advancements in the field of SLAM have shown that supervised learning-based methods can deliver remarkable performance in front-end motion estimation \citep{wang2021tartanvo, teed2021droid}. These methods apply machine learning algorithms that depend on external supervision, usually in the form of a labeled dataset. Once trained, the model can make estimations without being explicitly programmed for the task. Meanwhile, geometry-based approaches remain crucial for the system's back-end, primarily tasked with reducing front-end drift \citep{qin2018vins,xu2025airslam}. These techniques employ geometrical optimization, such as pose graph optimization (PGO) \citep{pgo_labbe2014online}, to maintain global consistency and minimize trajectory drift. However, their front-end and back-end components operate independently and are connected only in one direction. This means that the data-driven front-end model cannot receive feedback from the back-end for joint error correction. As a result, this decoupled approach might lead to suboptimal performance, adversely affecting the overall performance of the current SLAM systems.

\begin{figure}[t]
    \centering
    \includegraphics[width=\linewidth]{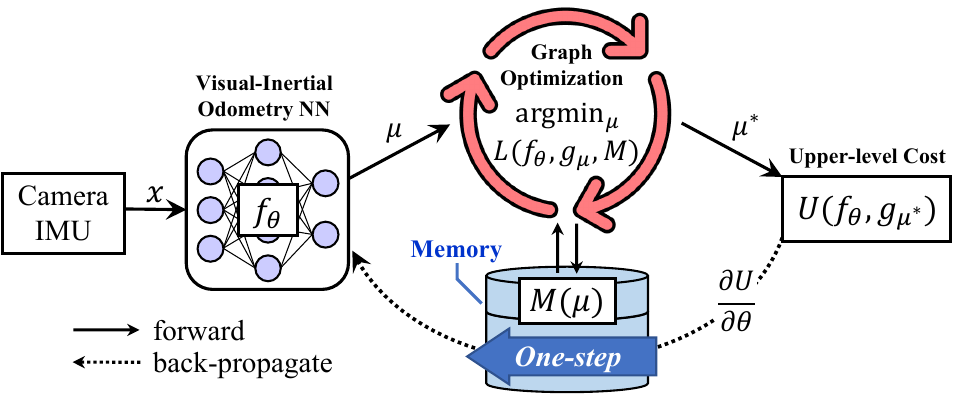}
    \caption{The framework of iSLAM. On the forward path, the odometry module $f$ (front-end) predicts the robot trajectory ${\bm{\mu}}$. Then the pose graph optimization (back-end) minimizes the cost $L$ in several iterations to get optimal poses ${\bm{\mu}}^*$. On the backward path, the cost $U$ is back-propagated through the map $M$ with a ``one-step'' strategy to update the network parameters ${\bm{\theta}}$.}
    \label{fig:iSLAM}
\end{figure}

\paragraph{Approach}

In response to the problem of suboptimal performance of the decoupled approach, we introduce IL into the SLAM system to jointly optimize the front-end and back-end components.
Specifically, we reformulate SLAM as a BLO between the front-end and back-end as:
\begin{subequations}\label{eq:iSLAM}
\begin{align}
    \min_{{\bm{\theta}}}& \;\; U(f({\bm{\theta}},\bm{x}), L({\bm{\mu}}^*)), \\
    \operatorname{s.t.}& \;\; {\bm{\mu}}^*=\arg\min_{{\bm{\mu}}} \; L(f, {\bm{\mu}}, M),
\end{align}
\end{subequations}
where $f({\bm{\theta}},\bm{x})$ is the front-end odometry network with parameter ${\bm{\theta}}$ and input $\bm{x}$; $L(f,{\bm{\mu}},M)$ is the back-end geometry-based reasoning (optimization) which enforces global consistency. Specifically, ${\bm{\mu}}$ is the poses and velocities to be optimized; $M$ is a map (memory) to store the historical estimations; and $U$ is the UL loss for model training. We omit unused symbols in \eqref{eq:il}, and let $U$ depend directly on $L$ instead of $g$ as in the general form. This framework is illustrated in \fref{fig:iSLAM}, which we refer to as imperative SLAM (iSLAM).

The choice of front-end odometry network and the back-end optimization are flexible in iSLAM. 
Here we illustrate one example of using a stereo visual-inertial odometry network as the front-end and a pose-velocity graph optimization as the back-end \citep{fu2024islam}.
Specifically, the front-end odometry consists of two independent efficient modules, i.e., a stereo odometry network and an IMU network \citep{qiu2023airimu}.
The stereo odometry network is responsible for pose estimation, while the IMU network estimates both pose and velocity. These estimations are then incorporated as constraints (edges) in the back-end pose-velocity graph, linking the pose and velocity nodes. This setup allows for the optimization of pose and velocity nodes to reduce inconsistency between the visual and inertial measurements.
As a result, the LL cost $L$ can be any estimation inconsistency between graph nodes:
\begin{equation}
    L = \operatorname{PVG} (\hat p_{1}^{v} \cdots \hat p_{k}^{v}, \hat p_{1}^{i} \cdots \hat p_{k}^{i}, \hat v_{1}^{i} \cdots \hat v_{k}^{i}),
\end{equation}
where $\rm{PVG}$ denotes the summation of the inconsistencies calculated based on edge constraints in the pose-velocity graph; $\hat p_k^v$ and $\hat p_k^i$ are the estimates of transformations between the $(k-1)^{\text{th}}$ and $k^{\text{th}}$ image frame from the stereo odometry network and the IMU network, respectively; $\hat v_k^i$ is the estimated delta velocity from the IMU network between the $(k-1)^{\text{th}}$ and $k^{\text{th}}$ image frame. The pose and velocity nodes are combined to denote the LL variables ${\bm{\mu}}=\{p_{0}, p_{1},\cdots, p_{k}, v_{1}, v_{2}, \cdots, v_{k}\}$.
Intuitively, this LL solves the best poses and velocities based on both the visual and inertial observations, thus achieving better trajectory accuracy by combining the respective strengths of both camera and IMU.

In the process of UL optimization, the front-end model acquires knowledge from the feedback provided by the back-end.
In this example, UL cost $U$ is picked the same as $L$, although they are not necessarily equal in the general IL framework.
Specifically, $U$ is calculated on the optimal estimations ${\bm{\mu}}^*$ after PVGO and then back-propagated to networks $f$ to tune its parameter ${\bm{\theta}}$ using gradient descent. More details about the iSLAM can be found at \sref{exp:islam}.

The structure of iSLAM results in a self-supervised learning method.
At the back-end (lower-level), the robot's path is adjusted through PVGO to ensure geometric consistency between the visual and inertial motion estimations, whereas, at the front-end (upper-level), the model parameters are updated to incorporate the knowledge derived from the back-end, improving its performance and generalizability in unseen environments.
This formulation under the IL framework seamlessly integrates the front-end and back-end into a unified optimization problem, facilitating reciprocal enhancement between the two components.

\paragraph{Optimization} One can leverage Equation \eqref{eq:2nd-solution} to calculate the implicit gradients. However, due to the uniqueness of \eqref{eq:iSLAM}, we can apply an efficient approximation method, which we refer to as a ``one-step'' back-propagation strategy.
It utilizes the nature of stationary points to avoid unrolling the inner optimization iterations. 
Specifically, according to the chain rule, the gradient of $U$ with respect to ${\bm{\theta}}$ is
\begin{equation}
    \frac{\partial U}{\partial {\bm{\theta}}} = \frac{\partial U}{\partial f}\frac{\partial f}{\partial {\bm{\theta}}} + \frac{\partial U}{\partial L^*}\left(\frac{\partial L^*}{\partial f}\frac{\partial f}{\partial {\bm{\theta}}} + \frac{\partial L^*}{\partial {\bm{\mu}}^*}\frac{\partial {\bm{\mu}}^*}{\partial {\bm{\theta}}}\right),
\end{equation}
where $L^*$ is the short note of lower-level cost at its solution $L(f,g({\bm{\mu}}^*),M)$.
Intuitively, $\frac{\partial {\bm{\mu}}^*}{\partial {\bm{\theta}}}$ embeds the gradients from the inner optimization iterations, which is computationally heavy. However, if we assume the optimization converges (either to the global or local optimal), we have a stationary point where $\frac{\partial L^*}{\partial {\bm{\mu}}^*}\approx0$. This eliminates the complex gradient term $\frac{\partial {\bm{\mu}}^*}{\partial {\bm{\theta}}}$. In this way, we bypass the LL optimization iterations and thereby can back-propagate the gradients in one step. Note that the precondition for using this method is that $U$ incorporates $L$ as the sole term involving ${\bm{\mu}}^*$, otherwise it will introduce a constant error term on convergence, as analyzed in Equation \eqref{eq:error-term}.

\subsection{Discrete Optimization} \label{sec:discrete}

In the case where the LL cost $L$ in \eqref{eq:low-il} involves discrete optimization, the semantic attributes $\bm z$ in \fref{fig:framework} are usually defined in a discrete space.
However, the existence of discrete variables leads to extreme difficulty in solving IL, especially the computation of implicit gradient.
To overcome this challenge, we will present several gradient estimators for $\nabla_{\bm{\theta}} g \doteq \nabla_\theta \mathbb{E}_{f(\bm z | \bm \theta)} [g(\bm z, \mu, M)]$, where the reasoning engine $g$ functions over discrete latent variables $\bm z$.
In this section, we assume the network $f$ outputs the distribution function where $\bm z$ is sampled, i.e., $\bm z \sim f(\bm z| {\bm{\theta}})$.
Note that the difference is that $\bm z$ is the input to $g$, while $\bm{\mu}$ is the variable to optimize.
For simplicity, we omit $M$ and ${\bm{\mu}}$ in the reasoning engine $g(\bm{z})$.

\paragraph{Log-derivative Estimator} The log-derivative estimator, denoted as $\hat g'_l$, converts the computation of a non-expectation to an expectation using the log-derivative trick (LDT), which can then be approximated by random sampling.
Specifically, it can be derived with the following procedure:
\begin{subequations}
\begin{align}
\hat g'_l  & \doteq \nabla_\theta \mathbb{E}_{f(\bm z| \theta)}[g(\bm z)] \\
&= \nabla_\theta \int f(\bm z| \theta) g(\bm z) d\bm z \\ 
&= \int \nabla_\theta f(\bm z| \theta) g(\bm z)  d\bm z \quad \text{(Leibniz rule)} \label{eq:non-exp}\\ 
&= \int f(\bm z| \theta) \frac{\nabla_\theta  f(\bm z| \theta)}{f(\bm z| \theta)}  g(\bm z) d\bm z \quad \left(\text{Multiply } \frac{f(\bm z| \theta)}{f(\bm z| \theta)}\right) \\ 
&= \int f(\bm z| \theta) \nabla_\theta \log  f(\bm z| \theta)  g(\bm z) d\bm z  \quad \text{(The LDT)} \label{eq:exp}\\
&= \mathbb{E}_{f(\bm z| \theta)}[\nabla_\theta \log  f(\bm z| \theta)  g(\bm z)]\\
& \approx \frac{1}{S} \sum\limits_{i=1}^S \nabla_\theta \log f(\bm z_i| \theta)  g(\bm z_i), \quad \bm{z}_i \sim f(\bm{z}| {\bm{\theta}}),
\end{align}
\end{subequations}
where $\bm{z}_i$ is an instance among $S$ samplings. 
Note that $\nabla_\theta f(\bm z| \theta)$ is \textbf{not} a distribution function and thus \eqref{eq:non-exp} is not an expectation and cannot be approximated by sampling. In contrast, \eqref{eq:exp} is an expectation after the LDT is applied.
The \textit{log-derivative} estimator is a well-studied technique widely used in reinforcement learning \citep{sutton2018reinforcement}.
The advantage is that it is \textbf{unbiased} but the drawback is that it often suffers from \textbf{high variance}~\citep{grathwohl2017backpropagation}.

\paragraph{Reparameterization Estimator}
An alternative approach is applying the \textit{reparameterization trick}, which reparameterizes the non-differentiable variable $\bm z$ by a differentiable function $\bm{z}=T({\bm{\theta}}, \epsilon)$, where $\epsilon \sim \mathcal{N}(0,1)$ is a standard Gaussian random variable.
Denote the estimator as $\hat g'_r$, then
\begin{subequations}
    \begin{align}
    \hat g'_r
    & \doteq \mathbb{E}\left[\frac{\partial }{\partial {\bm{\theta}}} g (\bm{z})\right] = \mathbb{E}\left[\frac{\partial g}{\partial T}  \frac{\partial T}{\partial {\bm{\theta}}}\right], \quad \epsilon \sim \mathcal{N}(0, 1)\\
    & \approx \frac{1}{S}\sum_{i=1}^{S} \frac{\partial g}{\partial T(\epsilon_s)}  \frac{\partial T (\epsilon_i)}{\partial {\bm{\theta}}} , \quad \epsilon_i \sim \mathcal{N}(0, 1)
    \end{align}
\end{subequations}
where $\epsilon_i$ is an instance among $S$ samplings. 
Compared to the log-derivative estimator, the reparameterization estimator has a \textbf{lower variance} but sometimes introduces \textbf{bias} when model assumptions are not perfectly aligned with the true data, or if there are approximations in the model \citep{li2022implicit}.
Reparameterization trick has also been well studied and widely used in variational autoencoders \citep{doersch2016tutorial}.

\paragraph{Control Variate Estimator}\label{para:control_variate} We next introduce a simple but effective variance-reduced estimator based on control variate \citep{nelson1990control}, as defined in \lemmaref{lemma:control-variate}.
It has been widely used in Monte Carlo estimation methods \citep{geffner2018using,richter2020vargrad,leluc2022quadrature}.

\begin{lemma}\label{lemma:control-variate}
    A control variate is a zero-mean random variable used to reduce the variance of another random variable. Assume a random variable $X$ has an \textbf{unknown} expectation $\alpha$, and another random variable $Y$ has a known expectation $\beta$. Construct a new random variable $X^*$ so that
    \begin{equation} \label{eq:control-estimator}
            X^*=X+{\bm{\gamma}}(Y-\beta), 
    \end{equation}
    where $\bm{c} \doteq Y-\beta$ is a control variate, then the new random variable $X^*$ has $\alpha$ as its expectation but with a \textbf{smaller variance} than $X$, when the constant ${\bm{\gamma}}$ is properly chosen.
\end{lemma}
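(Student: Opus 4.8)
The plan is to verify the two assertions of \lemmaref{lemma:control-variate} in turn: first that $X^*$ is unbiased for $\alpha$, and second that a suitable choice of ${\bm{\gamma}}$ strictly reduces the variance. For unbiasedness I would simply apply linearity of expectation to \eqref{eq:control-estimator}. Since $\mathbb{E}[Y]=\beta$ by assumption, the control variate $\bm c = Y-\beta$ has zero mean, hence $\mathbb{E}[X^*]=\mathbb{E}[X]+{\bm{\gamma}}\,\mathbb{E}[\bm c]=\alpha$. This requires nothing beyond integrability of $X$ and $Y$, and does not depend on ${\bm{\gamma}}$ at all.

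For the variance claim I would expand the variance of the right-hand side of \eqref{eq:control-estimator} as a quadratic in ${\bm{\gamma}}$,
\[
\mathrm{Var}(X^*) = \mathrm{Var}(X) + 2{\bm{\gamma}}\,\mathrm{Cov}(X,Y) + {\bm{\gamma}}^2\,\mathrm{Var}(Y).
\]
Assuming $\mathrm{Var}(Y)>0$, this parabola opens upward and is minimized at ${\bm{\gamma}}^\star = -\,\mathrm{Cov}(X,Y)/\mathrm{Var}(Y)$. Substituting ${\bm{\gamma}}^\star$ back yields
\[
\mathrm{Var}(X^*) = \mathrm{Var}(X)\bigl(1-\rho_{XY}^2\bigr) \le \mathrm{Var}(X),
\]
where $\rho_{XY}$ is the Pearson correlation coefficient of $X$ and $Y$, and the inequality is strict whenever $\mathrm{Cov}(X,Y)\neq 0$. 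To make precise the phrase ``when the constant ${\bm{\gamma}}$ is properly chosen,'' I would additionally observe that $\mathrm{Var}(X^*)<\mathrm{Var}(X)$ holds for every ${\bm{\gamma}}$ lying strictly between $0$ and $2{\bm{\gamma}}^\star$, i.e., for any sufficiently small step in the direction of $-\mathrm{Cov}(X,Y)$.

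There is no genuine obstacle here — the result is essentially a one-line computation once the variance is expanded — so the only care needed is with the side conditions: $X$ and $Y$ must be square-integrable for the variances and covariance to exist, $\mathrm{Var}(Y)$ must be nonzero for ${\bm{\gamma}}^\star$ to be well defined, and $\mathrm{Cov}(X,Y)\neq 0$ is required for a \emph{strict} reduction (if $X$ and $Y$ are uncorrelated, the control variate is useless). In the application $X$ plays the role of the raw gradient estimator and $Y$ of a correlated surrogate with known mean; I would note in passing that ${\bm{\gamma}}^\star$ itself depends on the unknown $\mathrm{Cov}(X,Y)$, so in practice one substitutes a running empirical estimate, but that implementation point lies outside the lemma statement.
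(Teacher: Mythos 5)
Your proof is correct and follows essentially the same route as the paper's: expand $\mathrm{Var}(X^*)$ as a quadratic in ${\bm{\gamma}}$, minimize to get ${\bm{\gamma}}^\star=-\mathrm{Cov}(X,Y)/\mathrm{Var}(Y)$, and substitute back to obtain $(1-\rho_{X,Y}^2)\mathrm{Var}(X)\le\mathrm{Var}(X)$; your observation that any ${\bm{\gamma}}$ strictly between $0$ and $2{\bm{\gamma}}^\star$ suffices also mirrors the paper's remark following its proof. The only additions are your explicit check of unbiasedness and the integrability/nondegeneracy side conditions, which the paper leaves implicit.
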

\begin{proof}
The variance of the new variable $X^*$ is
    \begin{equation}\label{ea:varX}
        \rm{Var}(X^*)=\rm{Var}(X) + {\bm{\gamma}}^2 \rm{Var}(Y) + 2{\bm{\gamma}} \rm{Cov}(X,Y),
    \end{equation}
where $\rm{Var}(X) = \mathbb{E}\|X\|^2 - \mathbb{E}^2\|X\|$ denotes the variance of a random variable, and $\rm{Cov}(X, Y) = \mathbb{E}[(X-\mathbb{E}X)^\top(Y-\mathbb{E} Y)]$ denotes the covariance of two random variables.
To minimize $\rm{Var}(X^*)$, we differentiate 
\eqref{ea:varX} w.r.t. ${\bm{\gamma}}$ and set the derivative to zero. This gives us the optimal value of ${\bm{\gamma}}$ as
    \begin{equation}\label{eq:gamma-value}
        {\bm{\gamma}}^*=-\frac{\rm{Cov}(X,Y)}{\rm{Var}(Y)}.
    \end{equation}
Substitute \eqref{eq:gamma-value} into \eqref{ea:varX}, we have the minimum variance 
\begin{equation}
    \rm{Var}^*(X^*)=(1-\rho^2_{X,Y})\rm{Var}(X), 
\end{equation}
where $\rho_{X,Y} = \frac{\rm{Cov}(X,Y)}{\sqrt{\rm{Var}(X)\rm{Var}(Y)}}$ is the correlation of the two random variables.
Since $\rho_{X,Y}\in[-1,1]$, we have $\rm{Var}(X^*)\leq \rm{Var}(X)$; when $\rho_{X,Y}$ is close to $1$, $\rm{Var}(X^*)$ will get close to 0.
This means that as long as we can design a random variable $Y$ that is correlated to $X$, we can reduce the estimation variance of $X$, which completes the proof.
\end{proof}

In practice, we usually have no direct access to ${\bm{\gamma}}^*$, and thus cannot guarantee the optimality of the control variate for arbitrarily chosen $Y$. However, without loss of generality, we can assume $X$ and $Y$ are positively correlated and solve the inequality ${\bm{\gamma}}^2\rm{Var}(Y)+2{\bm{\gamma}} \rm{Cov}(X,Y) < 0$, then ${\bm{\gamma}} \in (-\frac{2\rm{Cov}(X,Y)}{\rm{Var}(Y)},0)$. This means that as long as ${\bm{\gamma}}$ is within the range, $\rm{Var}(X^*)$ is less than $\rm{Var}(X)$. When $X$ and $Y$ are highly correlated and are of similar range, $\rm{Cov}(X, Y) \approx \rm{Var}(Y)$, thus ${\bm{\gamma}}^* = -1$ is the most common empirical choice.

Inspired by this control variate technique, we can construct a variance-reduced gradient estimator for $\nabla_{\bm{\theta}} g$ by designing a surrogate network (control variate) $s(\bm{z})$ approximating the reasoning process $g(\bm{z})$ \citep{grathwohl2017backpropagation}. 
Specifically, substitute ${\bm{\gamma}}=-1$ into \eqref{eq:control-estimator}, we have $X^* = X - Y + \beta$, this gives us the new \textbf{control variate estimator} $\hat g'_c$:
\begin{equation}\label{eq:control-variate}
    \hat g'_c =  \hat g'_l -  \hat g'_s + s'(\bm{z}),
\end{equation}
where $\hat g'_l$ is the sampling \textit{log-derivative} estimator with high variance, $\hat g'_s$ denotes the \textit{log-derivative} estimator of the surrogate network $s(\bm{z})$, and $s'(\bm{z})$ is another gradient estimator, which is the expectation of $\hat g'_s$.
For example, when $s'(\bm{z})$ is a reparameterization estimator for $\hat g'_s$, a properly trained system will have $\hat g'_l$ and $\hat g'_s$ highly-correlated, $\mathbb{E}\left[ \hat g'_s\right]=s'(\bm{z})$, then $\hat g'_c$ is an \textbf{unbiased} and \textbf{low-variance} estimator.

Using this control variate-based estimator, we can pass low-variance gradients through the semantic attribute $\bm{z}$ in the discrete space. This strategy does not assume the form of an LL optimization problem and thus can be applied to various applications. Note that the surrogate network is only needed for gradient estimation during training and is \textbf{unused} in the inference stage, meaning that the computational complexity is not increased. Next, we present an example using this technique to evaluate IL in the discrete space.

\subsection*{Example 5: Min-Max MTSP}

The multiple traveling salesman problem (MTSP) seeks tours for multiple agents such that all cities are visited exactly once while minimizing a cost function defined over the tours.
MTSP is critical in many robotic applications where a team of robots is required to collectively visit a set of target locations, e.g., unmanned aerial vehicles~\citep{sundar2013algorithms, ma2019coordinated}, automated agriculture~\citep{carpio2021mp}, warehouse logistics~\citep{wurman2008coordinating,10109784}.
The MTSP is renowned for its NP-hardness~\citep{bektas2006multiple}.
Intuitively, MTSP involves many decision variables including assigning the cities to the agents and determining the visiting order of the assigned cities. For example, an MTSP with \(100\) cities and \(10\) agents involves \(\frac{100! \times 99!}{10! \times 89!} \approx 10^{20000}\) possible solutions, while the number of atoms in the observable universe is estimated to be ``merely'' $10^{78}$ to $10^{82}$~\citep{gaztanaga2023mass}.

MTSP can be categorized by Min-Sum MTSP and Min-Max MTSP.
Intuitively, Min-Sum MTSP aims to minimize the sum of tour lengths over all agents, while Min-Max MTSP aims to reduce the longest tour length.
Though Min-Sum MTSP has been extensively studied \citep{bertazzi2015min}, Min-Max MTSP remains under-explored, and many techniques for Min-Sum MTSP may not directly apply to Min-Max MTSP.
From the perspective of learning-based approaches, Min-Max MTSP is particularly challenging because the Min-Max operation is non-differentiable, which makes it harder to use gradient-based optimization and leads to a more complicated search for the global optimum.
In this example, we will focus on the Min-Max MTSP, while our method may also apply to the Min-Sum MTSP. For the sake of brevity, we will use the abbreviation ``MTSP'' to refer to the Min-Max MTSP unless indicated otherwise.

\paragraph{Background}
Due to the high complexity, classic MTSP solvers such as Google's OR-Tools routing library meet difficulties for large-scale problems such as those with more than 500 cities \citep{ortools_routing}. To overcome this issue, there has been a notable shift towards employing neural network models to tackle MTSP~\citep{hu2020reinforcement, xin2021neurolkh,miki2018applying, vinyals2015pointer,kool2018attention,nazari2018reinforcement,park2021schedulenet,ouyang2021improving, khalil2017learning,d2020learning, wu2021learning}.
However, these methods still have fundamental limitations, particularly in their ability to generalize to new, unseen situations, and in consistently finding high-quality solutions for large-scale problems.
Supervised models struggle with limited supervision on small-scale problems and lack feasible supervision for large-scale instances, leading to poor generalization~\citep{xin2021neurolkh,miki2018applying, vinyals2015pointer}.
Reinforcement learning (RL)-based approaches usually exploit implementations of the policy gradient algorithm, such as the Reinforce algorithm~\citep{williams1992simple} and its variants. However, they often face the issue of slow convergence and highly sub-optimal solutions~\citep{hu2020reinforcement,park2021schedulenet}.
Other strategies like training greedy policy networks~\citep{ouyang2021improving,nazari2018reinforcement, kool2018attention, khalil2017learning} and iteratively improving solutions~\citep{d2020learning, wu2021learning} are also proposed but often get stuck at local optima.

\begin{figure}[t]
	\centering
    \includegraphics[width=1\linewidth]{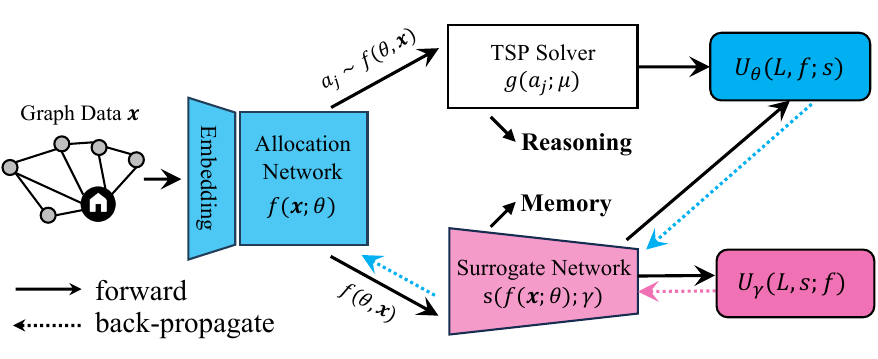}
    \caption{The framework of iMTSP. A surrogate network is introduced as the memory in the IL framework, constructing a low-variance gradient for the allocation network through the non-differentiable and discrete TSP solvers.}
	\label{fig:imtsp}
\end{figure}

\paragraph{Approach}

To overcome the above limitation, we reformulate MTSP as a BLO following the IL framework based on the control variate gradient estimator, as shown in \fref{fig:imtsp}.
This comprises a UL optimization, aiming to train an allocation network and a surrogate network assisting gradient estimation, and an LL optimization that solves decomposed single-agent TSPs.
Specifically, the allocation network $f(\bm{x}; {\bm{\theta}})$ takes the MTSP graph data $\bm{x}$ and produces a probability matrix, where its $(i, j)$-th entry represents the probability that city $i$ is assigned to agent $j$.
This decomposes the MTSP into multiple single-agent TSPs by random sampling the city allocation $a_j \sim f(\bm{x}; {\bm{\theta}})$ for agent $j$.
Then, a non-differentiable TSP solver $g$ is employed to quickly find an optimal tour ${\bm{\mu}}$ for single-agent TSPs based on the allocation $a_j$, where $j=1,\cdots, M$.
During training, the surrogate network $s({\bm{\gamma}})$, which serves as the memory module in the IL framework, is trained to have a low-variance gradient estimation through the non-differentiable reasoning engine $g$.
This results in a self-supervised formulation for MTSP, which we refer to as imperative MTSP (iMTSP):
\begin{subequations}\label{eq:imtsp}
\begin{align}
    & \min_{{\bm{\theta}}} U_{\bm{\theta}} \left(L({\bm{\mu}}^*), f({\bm{\theta}}); {\bm{\gamma}}\right);\quad \min_{\bm{\gamma}} U_{\bm{\gamma}}\left(L({\bm{\mu}}^*), s({\bm{\gamma}}); {\bm{\theta}}\right)\\
    &~\textrm{s.t.}\quad \mathbf{{\bm{\mu}}}^* = \arg\min_{\bm{\mu}} L({\bm{\mu}}),\\
    & \quad \quad~~ L \doteq \max_j D(g(a_j;{\bm{\mu}})), \quad j=1,2,\cdots,M, \\
    & \quad \quad~~ a_j \sim f(\bm{x}; {\bm{\theta}}),
\end{align}
\end{subequations}
where $D$ computes the traveling cost given a tour, and $L$ returns the maximum route length among all agents.
The UL optimization consists of two separate components: the UL cost $U_{\bm{\theta}}$ is to optimize the allocation network $f({\bm{\theta}})$, while $U_{\bm{\gamma}}$ is to optimize the surrogate network $s({\bm{\gamma}})$.
Inspired by the control variant gradient estimator \eqref{eq:control-variate} and to have a lower gradient variance, we design the UL costs $U_{\bm{\theta}}$ and $U_{\bm{\gamma}}$ as 
\begin{subequations}
\begin{align}\label{eq:upper-theta}
    U_{\bm{\theta}} & \doteq L({\bm{\mu}}^*)\log f(\bm{x};{\bm{\theta}}) -L^\prime\log f(\bm{x};{\bm{\theta}})+s(f(\bm{x};{\bm{\theta}});{\bm{\gamma}}), \\
    U_{\bm{\gamma}} & \doteq \rm{Var}( \frac{\partial U_{\bm{\theta}}}{\partial {\bm{\theta}}}),
\end{align}
\end{subequations}
where $L^\prime$ is the output of the surrogate network (control variate) but is taken as a constant by cutting its gradients.
The first instinct of training the surrogate network is to mimic the input-output relation of the TSP solver. However, highly correlated input-output mapping does not necessarily guarantee highly correlated gradients, which breaks the assumption of using control variate.
To overcome this issue, $U_{\bm{\gamma}}$ is set as $\rm{Var}(\frac{\partial U_{\bm{\theta}}}{\partial {\bm{\theta}}})$ to minimize its gradient variance.

\begin{algorithm}[t]
    \caption{The iMTSP algorithm}\label{alg:iMTSP}
    \begin{algorithmic}
    \STATE Given MTSP Graph data $\bm{x}$, TSP solver $g$, and randomly initialized allocation and surrogate networks $f({\bm{\theta}})$, $s({\bm{\gamma}})$. 
    \WHILE {not converged}
        \STATE $a_j \sim f(\bm{x};{\bm{\theta}})$;\quad
        \COMMENT {Sampling allocation}
        \STATE $L \gets \max_j D(g(a_j;{\bm{\mu}}))$; \quad
        \COMMENT{$g$ is non-differentiable}
        \STATE $L^\prime \gets s(f(\bm{x};{\bm{\theta}});{\bm{\gamma}})$; \quad
        \COMMENT{Cut gradients for $L'$}
        \STATE $  \triangledown {\bm{\theta}} \gets [L-L^\prime]\frac{\partial }{\partial {\bm{\theta}}}\log f+\frac{\partial }{\partial {\bm{\theta}}}s(f;{\bm{\gamma}})$; \quad
        \COMMENT{Compute $\frac{\partial U}{\partial {\bm{\theta}}}$}
        \STATE $ \triangledown {\bm{\gamma}} \gets \frac{\partial}{\partial {\bm{\gamma}}} \frac{\partial U}{\partial {\bm{\theta}}}^2$; \quad
        \COMMENT{Compute $\frac{\partial}{\partial {\bm{\gamma}}} \rm{Var}( \frac{\partial U}{\partial {\bm{\theta}}})$}
        \STATE ${\bm{\theta}} \gets {\bm{\theta}}-\alpha_1 \triangledown {\bm{\theta}}$; \quad
        \COMMENT{Update allocation network}
        \STATE ${\bm{\gamma}} \gets {\bm{\gamma}}-\alpha_2\triangledown {\bm{\gamma}}$; \quad
        \COMMENT{Update surrogate network}
    \ENDWHILE
\end{algorithmic}
\end{algorithm}

\paragraph{Optimization}

We next present the optimization process for iMTSP.
Intuitively, the gradient for the allocation network is the partial derivative of the UL cost $U_{\bm{\theta}}$ w.r.t. ${\bm{\theta}}$:
\begin{equation}\label{equ:grad}
    \frac{\partial U_{\bm{\theta}}}{\partial {\bm{\theta}}}=[L({\bm{\mu}}^*)-L^\prime]\frac{\partial }{\partial {\bm{\theta}}}\log f({\bm{\theta}})+\frac{\partial }{\partial {\bm{\theta}}}s(f({\bm{\theta}});{\bm{\gamma}}),
\end{equation}
which provides a low-variance gradient estimation through the non-differentiable TSP solver and the discrete decision space to the allocation network. Again, $L$ and $L^\prime$ are treated as constants instead of functions of network parameters ${\bm{\theta}}$.

On the other hand, the surrogate network $s(f(\cdot),\phi)$ is separately optimized, and updated with a single sample variance estimator. Since $\rm{Var}( \frac{\partial U}{\partial {\bm{\theta}}})= \mathbb{E}[(\frac{\partial U}{\partial {\bm{\theta}}})^2] - \mathbb{E}[(\frac{\partial U}{\partial {\bm{\theta}}})]^2$, the gradient of the surrogate network is:
\begin{equation}\label{eq:grad-gamma}
\frac{\partial U_{\bm{\gamma}}}{\partial {\bm{\gamma}}} =
    \frac{\partial}{\partial {\bm{\gamma}}}\mathbb{E}[ (\frac{\partial U_{\bm{\theta}}}{\partial {\bm{\theta}}})^2]-\frac{\partial}{\partial {\bm{\gamma}}}\mathbb{E}[ \frac{\partial U_{\bm{\theta}}}{\partial {\bm{\theta}}}]^2 = \frac{\partial}{\partial {\bm{\gamma}}}\mathbb{E}[ (\frac{\partial U_{\bm{\theta}}}{\partial {\bm{\theta}}})^2],
\end{equation}
where $\frac{\partial}{\partial {\bm{\gamma}}}\mathbb{E}[ \frac{\partial U_{\bm{\theta}}}{\partial {\bm{\theta}}}]^2$ is always zero since $\frac{\partial U_{\bm{\theta}}}{\partial {\bm{\theta}}}$ in \eqref{equ:grad} is always an unbiased gradient estimator.
This means that its expectation $\mathbb{E}[ \frac{\partial U}{\partial {\bm{\theta}}}]$ will \textbf{not} change with ${\bm{\gamma}}$.
As a result, in each iteration, we only need to alternatively compute \eqref{equ:grad} and \eqref{eq:grad-gamma} and then update the allocation and surrogate networks.
In practice, $\mathbb{E}[(\frac{\partial U}{\partial {\bm{\theta}}})^2]$ is implemented as the mean of squared gradient $\frac{\partial U_{\bm{\theta}}}{\partial {\bm{\theta}}}$.
We list a summarized optimization algorithm in \algref{alg:iMTSP} for easy understanding.

\section{Experiments} \label{sec:experiments}

We next demonstrate the effectiveness of IL in robot autonomy by evaluating the five robotic examples, i.e., logical reasoning in autonomous driving, path planning for mobile robots, model predictive control for UAV, SLAM for mobile robots, and min-max MTSP for multi-agent collaboration, and comparing their performance with the SOTA algorithms in their respective fields.

\subsection{Path Planning} \label{sec:exp-planning}

In this section, we will conduct the experiments for iA* and iPlanner for global and local planning, respectively.

\subsection*{Example A: Global Path Planning} \label{sec:exp-iastar}

To demonstrate the effectiveness of IL in global planning, we conduct both qualitative and quantitative evaluations for iA*. 

\subsubsection{Baselines}
To evaluate the generalization ability of the proposed iA* framework, we select both classic and data-driven methods as the baselines.
Specifically, we will compare iA* with the popular classic method, weighted A$^*$ \citep{pohl1970heuristic} and the latest imitation learning-based methods including neural A$^*$ \citep{yonetani2021nastar}, focal search with path probability map (FS + PPM) and weighted A$^*$ with correction factor (WA$^*$ + CF) \citep{kirilenko2023transpath}.

\begin{figure*}[t]
    \centering
    \vspace{3pt}
    \includegraphics[width=1\linewidth]{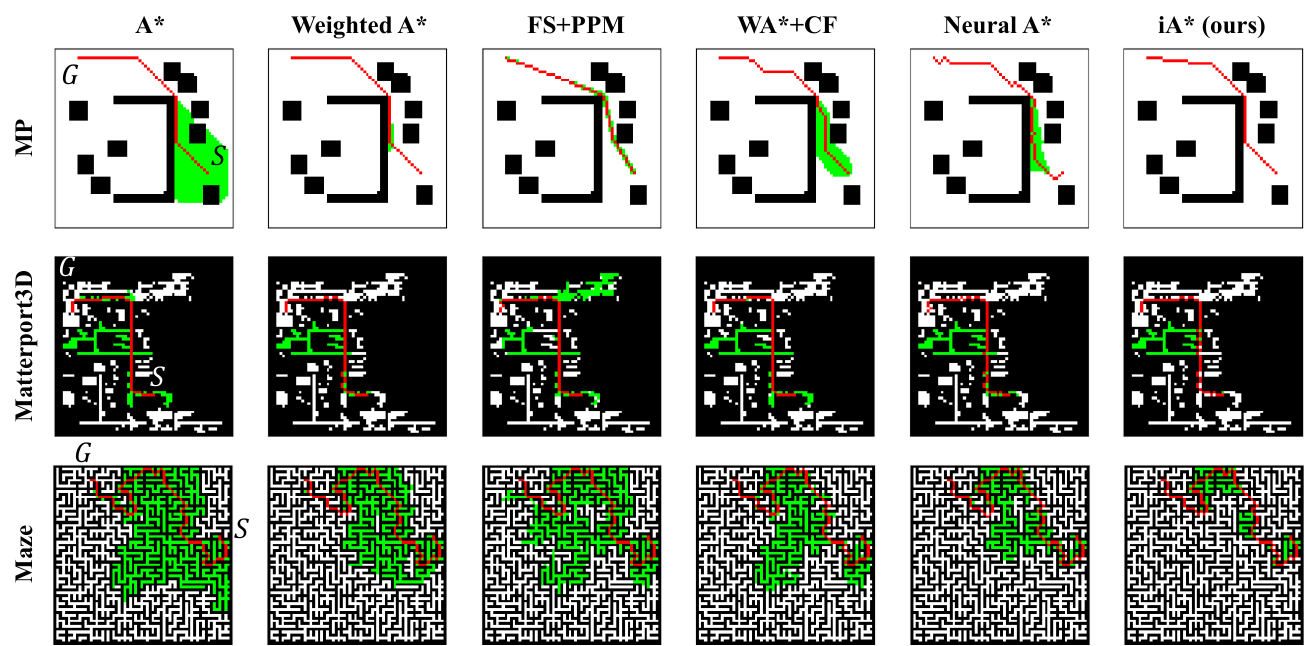}
    \caption{The qualitative results of path planning algorithms on three widely used datasets, including MP, Maze, and Matterport3D. The symbols $S$ and $G$ indicate the randomly selected start and goal positions. The optimal paths found by different path planning algorithms and their associated search space are indicated by red trajectories and green areas, respectively.}
    \label{fig:exp:iAstar-exp}
\end{figure*}

\begin{table}[t]
    \centering
    \caption{Path planning evaluation on the MP dataset. The symbol `$-$' denotes the method cannot provide meaningful results with that map size due to the large search space.}
    \label{tab:iastar:mp}
    \renewcommand{\tabcolsep}{2pt}
    \resizebox{\linewidth}{!}{
    \begin{tabular}{p{43pt}>{\centering}p{35pt}>{\centering}p{35pt}>{\centering}p{36pt}>{\centering}p{35pt}>{\centering}p{35pt}>{\centering\arraybackslash}p{35pt}}
        \toprule
        Metric &\textit{Exp}&  \textit{Rt}&\textit{Exp}&  \textit{Rt}&\textit{Exp}&  \textit{Rt}\\
        \midrule
        Map Size &\multicolumn{2}{c}{$64\times64$} &\multicolumn{2}{c}{$128\times128$} &\multicolumn{2}{c}{$256\times256$} \\
        \midrule
        WA$^*$ &63.5\% & 47.0\%& 70.1\%& 55.4\% &52.0\%&54.4\%\\
        Neural A$^*$ & 56.0\% & 24.5\% & \uline{67.2\%} & \uline{42.1\%} & \uline{44.9\%} & \uline{50.0\%} \\
        WA$^*$+CF & 68.8\% & 48.0\% & $-$ & $-$  & $-$ & $-$ \\
        FS+PPM&\textbf{88.3\%} & \textbf{83.6\%}& $-$ & $-$  & $-$ & $-$ \\
        \textbf{iA* (ours)} & \uline{69.4\%} & \uline{52.8\%} & \textbf{75.9\%} & \textbf{62.3\%} & \textbf{56.1\%} & \textbf{61.5\%} \\
        \bottomrule
    \end{tabular}
    }
\end{table}

\begin{table}[t]
    \centering
    \caption{Path planning evaluation on the Matterport3D dataset.}
    \label{tab:iastar:matterport3D}
    \renewcommand{\tabcolsep}{2pt}
    \resizebox{\linewidth}{!}{
    \begin{tabular}{p{43pt}>{\centering}p{35pt}>{\centering}p{35pt}>{\centering}p{36pt}>{\centering}p{35pt}>{\centering}p{35pt}>{\centering\arraybackslash}p{35pt}}
        \toprule
        Metric &\textit{Exp}&  \textit{Rt}&\textit{Exp}&  \textit{Rt}&\textit{Exp}&  \textit{Rt}\\
        \midrule
        Map Size &\multicolumn{2}{c}{$64\times64$} &\multicolumn{2}{c}{$128\times128$} &\multicolumn{2}{c}{$256\times256$} \\
        \midrule
        WA$^*$& 48.1\%& 36.8\%& 76.4\%& 79.6\%& 84.7\% &81.5\%\\
        Neural A$^*$ & 42.3\% & 28.4\% & \uline{65.6\%} & \uline{50.1\%} & \uline{79.4\%} & \uline{63.9\%} \\
        WA$^*$+CF & \uline{57.2\%} & \uline{42.7\%} & $-$ & $-$ & $-$ & $-$ \\
        FS+PPM & 9.3\%& $-$43.8\%& $-$ & $-$ & $-$ & $-$ \\
        \textbf{iA* (ours)} &\textbf{57.8\%} & \textbf{44.7\%} & \textbf{79.2\%} & \textbf{81.8\%} & \textbf{89.5\%} & \textbf{87.8\%} \\
        \bottomrule
    \end{tabular}
    }
\end{table}

\begin{table}[t]
    \centering
    \caption{Path planning evaluation on the Maze dataset.}
    \label{tab:iastar:maze}
    \renewcommand{\tabcolsep}{2pt}
    \resizebox{\linewidth}{!}{
    \begin{tabular}{p{43pt}>{\centering}p{35pt}>{\centering}p{35pt}>{\centering}p{36pt}>{\centering}p{35pt}>{\centering}p{35pt}>{\centering\arraybackslash}p{35pt}}
        \toprule
        Metric &\textit{Exp}&  \textit{Rt}&\textit{Exp}&  \textit{Rt}&\textit{Exp}&  \textit{Rt}\\
        \midrule
        Map Size &\multicolumn{2}{c}{$64\times64$} &\multicolumn{2}{c}{$128\times128$} &\multicolumn{2}{c}{$256\times256$} \\
        \midrule
        WA$^*$& 21.7\%& 27.4\%& 48.8\%& 11.7\%& 41.8\% &29.5\%\\
        Neural A$^*$ & 30.9\% & 40.8\% & \uline{60.5\%} & \uline{32.8\%} & \uline{51.0\%} & \uline{38.9\%} \\
        WA$^*$+CF&\textbf{47.3\%}& \textbf{45.3\%}& $-$ & $-$ & $-$ & $-$ \\
        FS+PPM& $-$8.9\% & $-$105.0\%& $-$ & $-$  & $-$ & $-$ \\
        \textbf{iA* (ours)} & \uline{37.8\%} & \uline{44.9\%} & \textbf{64.8\%} & \textbf{33.0\%} & \textbf{51.5\%} & \textbf{63.9\%} \\
        \bottomrule
    \end{tabular}
    }
\end{table}

\subsubsection{Datasets} 
To evaluate iA* in diverse environments, we select three widely used datasets in path planning, Motion Planning (MP) \citep{bhardwaj2017learning}, Matterport3D \citep{chang2017matterport3d}, and Maze \citep{maze-dataset}.
Specifically, the MP dataset contains a set of binary obstacle maps from eight environments; the Matterport3D dataset provides diverse indoor truncated signed distance function (TSDF) maps; and the Maze dataset contains sophisticated mazes-type maps. To better evaluate the generalization ability, we randomly sample the start and goal positions following \citep{yonetani2021nastar} to provide more diverse maps. It is essential to note that all methods only use the MP dataset for training, and use Matterport3D and Maze for testing, which is to demonstrate their generalization abilities.

\subsubsection{Metrics}
To evaluate the performance of different algorithms, we adopt the widely used metrics including the reduction ratio of node explorations (\textit{Exp}) \citep{yonetani2021nastar} and the reduction ratio of time (\textit{Rt}). \textit{Exp} is the ratio of the reduced search area relative to the classical A$^*$, which can be defined as $\textit{Exp} = 100 \times \frac{S^* - S}{S^*}$, where $S$ represents the search area of the method and $S^*$ is the search area of classical A$^*$. \textit{Rt} is the ratio of saved time relative to the classical A$^*$, which can be defined as $\textit{Rt} = 100 \times \frac{t^*-t}{t^*}$, where $t$ is the runtime of the method and $t^*$ is the runtime of classical A$^*$. Intuitively, \textit{Exp} aims to measure the search efficiency while \textit{Rt} aims to measure the runtime efficiency.

\subsubsection{Quantitative Performance}

Since the classical $A^*$ has a poor search efficiency for large maps, we resize the maps to different sizes, including $64\times64$, $128\times128$, and $256\times256$, to further demonstrate the efficiency of different algorithms.

\tref{tab:iastar:mp} summarizes the performance on the MP dataset. It can be seen that although iA* ranked \nth{2} for map size of $64\times64$, it achieves outstanding performance for larger maps with sizes $128\times128$ and $256\times256$. Specifically, it improves the runtime efficiency \textit{Rt} by 23\% for map size $256\times256$ compared to Neural A$^*$, the SOTA method.
This indicates that iA* has an advantage in larger maps, demonstrating the effectiveness of the upper-level predicted cost function for reducing the search space for LL path planning. 
We observe that FF+PPM and WA$^*$+CF are limited to a fixed shape of input instances due to their integration of the attention mechanism. Their network architectures include fully connected (FC) layers, which require predefined input and output shapes. The number of parameters within these FC layers varies with different input and output shapes. Consequently, adjustments and re-training procedures are necessary to accommodate varying input instance shapes.

\tref{tab:iastar:matterport3D} summarizes the path planning performance on the Matterport3D dataset, where iA* outperforms all methods in all metrics and map sizes. Specifically, iA* achieves a 12.7\% and a 37.4\% higher performance in search efficiency \textit{Exp} and runtime efficiency \textit{Rt} than the SOTA method neural A$^*$, respectively.
It is worth noting that FS+PPM, the \nth{1} ranked method on the $64\times64$ map category in the MP dataset, achieves the worst performance on the Matterport3D dataset, indicating that FS+PPM is fine-tuned overfitting to the MP dataset and cannot generalize to other types of maps.

\tref{tab:iastar:maze} summarizes the path planning performance on the Maze dataset, which is believed to be the most difficult map type due to its complicated routes.
It can be seen that the FS+PPM algorithm performs poorly, even increasing 105\% operation time and 8.9\% search area than the classical A$^*$ search. Additionally, iA* shows the most significant runtime efficiency improvements on large maps, i.e., a 64.2\% higher performance in \textit{Rt} than Neural A$^*$. This further demonstrates the generalization ability of the IL framework. 

\subsubsection{Qualitative Performance}
We next visualize their qualitative performance of all methods in \fref{fig:exp:iAstar-exp}, where each row shows the representative examples from the MP, Matterport3D, and Maze datasets, respectively. 
Specifically, the green area and the red line represent the search space and optimal paths found by each method, respectively.

For the MP dataset, as shown in the \nth{1} row of \fref{fig:exp:iAstar-exp},  it is observed that all the methods have a smaller search space than the classical A$^*$ and our iA* has the smallest search space, indicating that iA* achieves the best overall efficiency.
It is also observed that the path found by Neural A$^*$ is not smooth and all the other data-driven methods show slight differences in terms of path shape and length. This further indicates the effectiveness of our iA* framework.

The \nth{2} row of \fref{fig:exp:iAstar-exp} displays the path planning results from the Matterport3D dataset, which mainly focuses on indoor scenarios. It can be seen that the search space of FS+PPM and WA$^*$+CF are larger than the other methods. It is worth noting that FS+PPM has a larger search space than classical A$^*$.
Besides, all the optimal paths found by different methods are close to the optimal path of classical A$^*$.

In the \nth{3} row of \fref{fig:exp:iAstar-exp}, we show the performance of all baseline methods for the maze-type maps from the Maze dataset. 
From these examples, it can be observed that all data-driven methods successfully improve the search efficiency than classical A$^*$. 
Additionally, iA$*$ has the smallest search space, which means that it can even greatly improve the search efficiency of maze-level complicated maps.

\subsection*{Example B: Local Path Planning} \label{sec:exp-iplanner}

\subsubsection{Baselines}
To evaluate the robustness and efficiency of IL in local path planning, we will compare iPlanner against both classic and data-driven methods.
Specifically, we will compare with one of the SOTA classic planners, motion primitives planner (MPP) \citep{zhang2020falco,cao2022autonomous}, which received the \nth{1} place for most explored sectors in the DARPA Subterranean (SubT) challenge \citep{darpa_subt}.
Additionally, we will also compare with data-driven methods, including the supervised learning (SL)-based method \citep{loquercio2021learning} and the reinforcement learning (RL)-based method \citep{hoeller2021learning}, which will be simply denoted as SL and RL methods, respectively.

It is worth noting that MPP incorporates a 360$^\circ$ LiDAR as an onboard sensor, while iPlanner and the data-driven methods only use a depth camera, which has a narrower field of view (87$^\circ$) and noisier range measurements (15 \hertz).
Notably, the RL method was trained for a downward-tilted (30$^\circ$) camera, whereas the SL method used a front-facing camera setup.
For fairness, iPlanner will be evaluated under both camera setups, i.e., iPlanner and iPlanner (Tilt), although it was only trained for the front-facing camera.

\begin{table}[t]
\centering
\caption{The performance of SPL in local planning (Unit: \%).}
\label{tab:spl_table}
\resizebox{\linewidth}{!}{
\begin{tabular}{cccccc}
\toprule
\textbf{Method} & \textbf{Forest} & \textbf{Garage} & \textbf{Indoor} & \textbf{Matterport} & \textbf{Overall} \\
\midrule
MPP (LiDAR)& 95.09 & \textbf{89.42} & 85.82 & 74.18 & 86.13 \\
SL& 65.58 & 46.70 & 50.03 & 28.87 & 47.80 \\
RL (Tilt)& 95.08 & 69.43 & 61.10 & 59.24 & 71.21 \\
\midrule
\textbf{iPlanner} (Tilt) & 95.66 & 73.49 & 68.87 & 76.67 & 78.67 \\
\textbf{iPlanner} & \textbf{96.37} & 88.85 & \textbf{90.36} & \textbf{82.51} & \textbf{89.52} \\
\bottomrule
\end{tabular}}
\end{table}

\begin{figure}[t]
	\centering
	\subfloat[Planning runtime.\label{fig:computing-time}]{\includegraphics[height=0.341\linewidth]{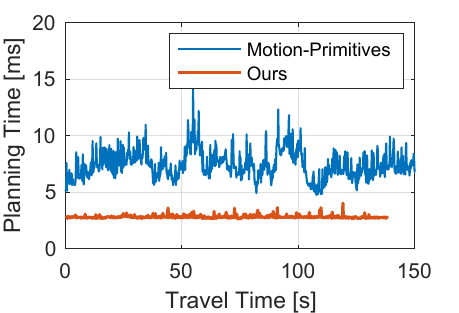}}
	\hfill
	\subfloat[Executed trajectory.\label{fig:trajectory}]{\includegraphics[height=0.325\linewidth]{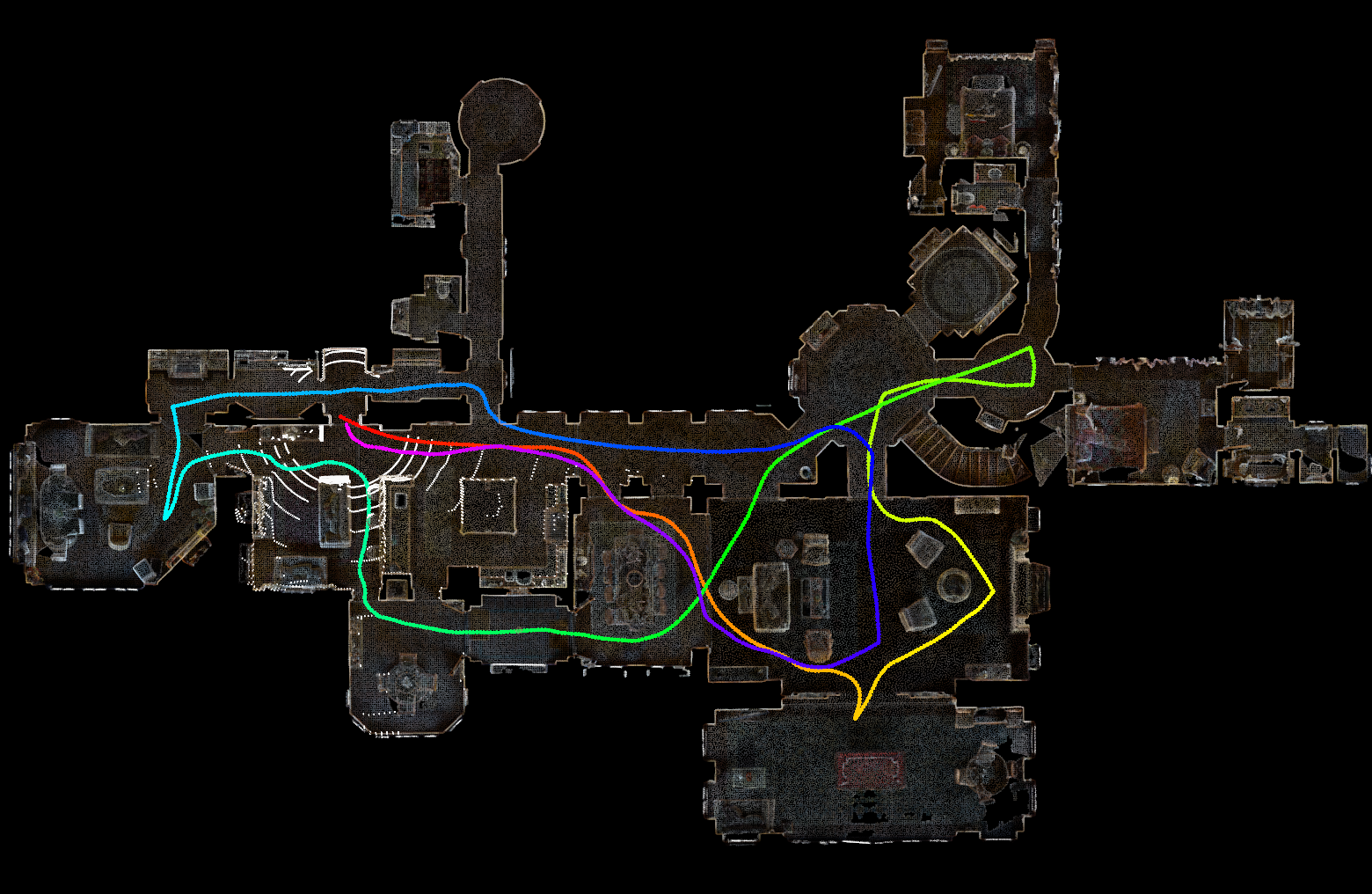}}
	\caption{\revise{An example of real-time planning using iPlanner. (a) shows the comparison of planning time. (b) shows the trajectory executed by iPlanner and the outline of the environment.}}
	\label{fig:planner}
\end{figure}

\subsubsection{Platforms}

To thoroughly evaluate the performance of local planning algorithms, we select four distinct simulated environments set up by \citep{cao2022autonomous}, including indoor, garage, forest, and Matterport3D \citep{chang2017matterport3d}.
In each of the environments, 30 pairs of start and goal positions are randomly selected in traversable areas for a comprehensive evaluation. It is worth noting that none of these environments was observed by iPlanner and the baseline data-driven methods during training.
Specifically, a laptop with a 2.6 G\hertz~Intel i9 CPU and an NVIDIA RTX 3080 GPU are used for running all methods.

\begin{figure}[t]
    \centering
    \includegraphics[width=1\linewidth]{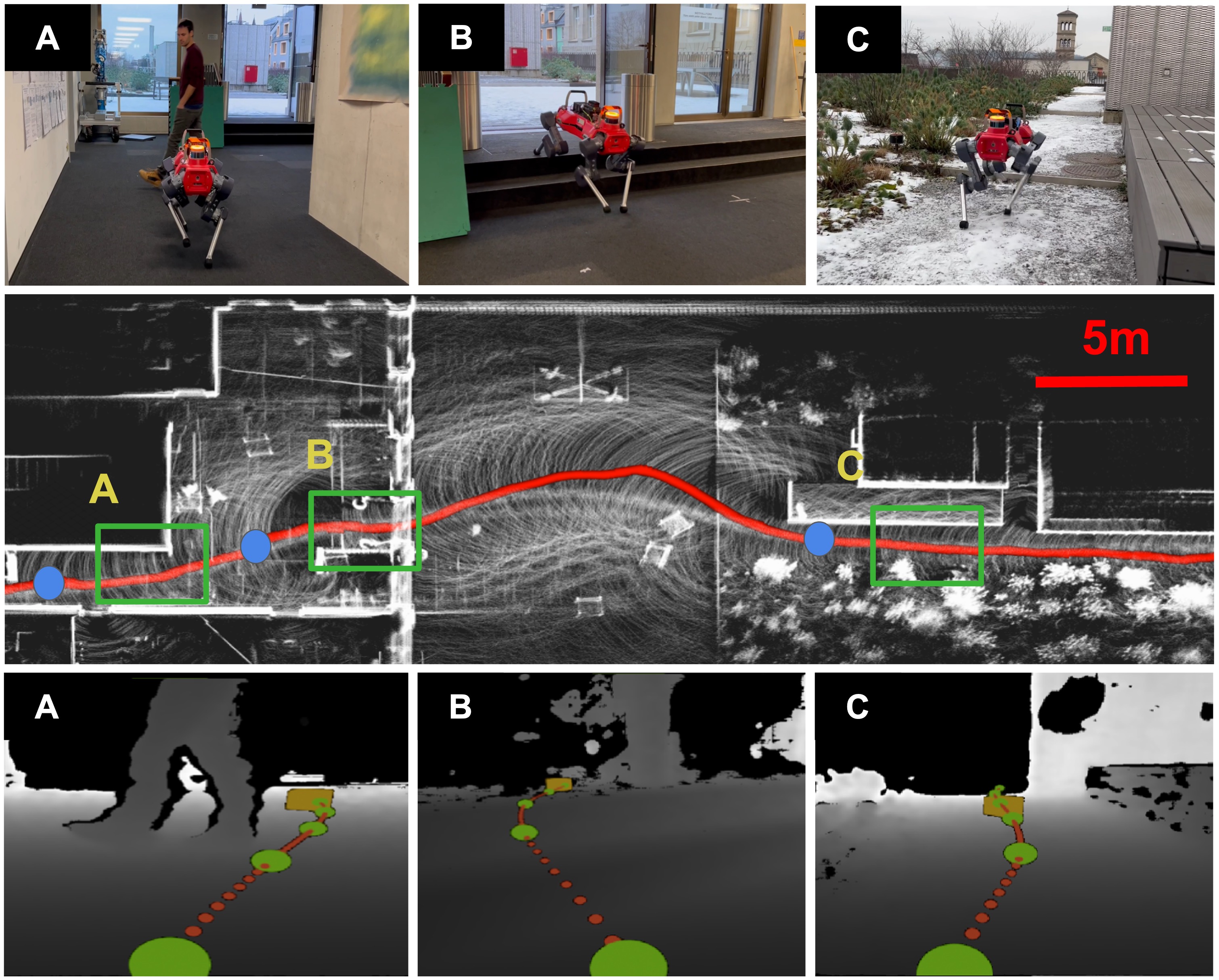}
    \caption{Real-world experiment for local path planning using iPlanner with a legged robot. The red curve indicates the robot's trajectory, beginning inside a building and then navigating to the outdoors. The robot follows a series of waypoints (blue) and plans in different scenarios marked by green boxes including (A) passing through doorways and avoid dynamic obstacles, (B) ascending stairs, and (C)  circumvent outdoor static obstacles.}
    \label{fig:iplanner-exp}
\end{figure}

\subsubsection{Metrics} The success rate weighted by path length (SPL) will be used for evaluation \citep{anderson2018evaluation}. It has been widely adopted by various path planning systems \citep{yokoyama2021success, ehsani2024spoc}.

\subsubsection{Quantitative Performance}

The overall performance of iPlanner and all baseline methods are detailed in \tref{tab:spl_table}. Notably, iPlanner demonstrates an average performance improvement of 87\% over the SL method and 26\% over the RL method in terms of SPL.
Additionally, iPlanner even outperformed MPP which utilizes LiDAR for perception in most of the environments. This further demonstrates the robustness and reliability of our IL framework.

\subsubsection{Efficiency Analysis}

\revise{We evaluate iPlanner's efficiency in Matterport3D. The robot follows 20 manual waypoints for global guidance but autonomously plans feasible paths and avoids local obstacles. As shown in Figure \ref{fig:computing-time}, iPlanner achieves an average $3\times$ speedup over MPP. The executed trajectory is shown in \fref{fig:trajectory}.}

\begin{figure*}[t]
    \centering
    \includegraphics[width=1\linewidth]{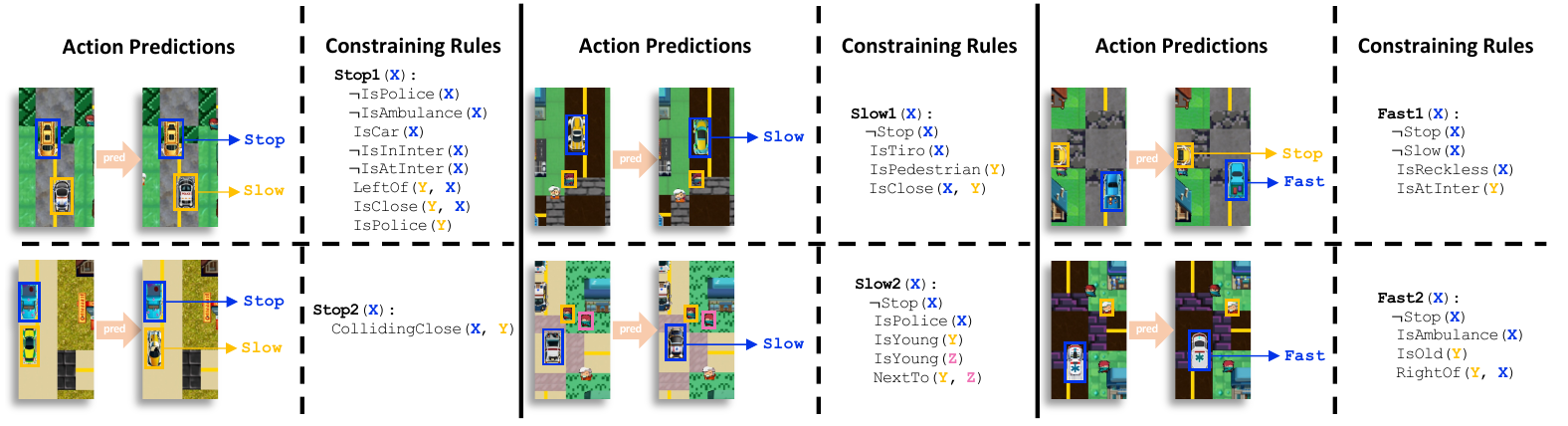}
    \caption{To predict each action, iLogic conducts rule induction with perceived groundings and the constraining rules exhibited on the right side and finally gets the accurate actions exhibited on the left side.}
    \label{fig:ilogic:vis}
\end{figure*}

\subsubsection{Real-world Tests}
We adopt an ANYmal-legged robot \citep{hutter2016anymal} which is equipped with an NVIDIA Jetson Orin and Intel Realsense D435 front-facing depth camera.
The real-world experiments involve both dynamic and static obstacles in indoor, outdoor, and mixed settings.
The robot is given a sequence of waypoints by a human operator, as illustrated in~\fref{fig:iplanner-exp}.
Specifically, the robot goes through areas including passing through doorways, circumventing both static and dynamic obstacles, and ascending and descending stairs.  These trials are designed to test the adaptability and reliability of iPlanner under varied and unpredictable environmental conditions. During the test, iPlanner can generate feasible paths online and guide the robot through all areas safely and smoothly, which demonstrates the generalization ability of our IL framework. 

\subsection{Logical Reasoning} \label{sec:exp-iLogic}

We conduct the task of visual action prediction for multiple agents in traffic to assess the efficacy of our IL framework.

\subsubsection{Datasets}

The VAP task in LogiCity \citep{li2024logicity} integrates concept grounding and logical reasoning involving agent concepts and spatial relationships.
It is particularly more challenging than other logical reasoning benchmarks with structured low-dimensional data, such as BlocksWorld \citep{dong2019nlm} or knowledge graphs \citep{cropper2021popper}.
The main reasons are that LogiCity requires learning abstract concepts from varying agent distributions in an urban setting; additionally, it also involves high-dimensional image data with diverse visual appearances, however, as a comparison, BlocksWorld only contains binary vectors.

\begin{table}[t]
    \centering
    \setlength{\tabcolsep}{1.25mm}
    \fontsize{8}{9} \selectfont
    \caption{Evaluation results of the \textit{Easy} mode in LogiCity.}
    \label{tab:ilogic:easy}
    \begin{tabular}{c|ccc|cc}
    \toprule
    Metric & \multicolumn{3}{c|}{Recall Rate}  & wAcc & $\textrm{Var}(\text{wAcc})$ \\
    Action & Slow  & Normal & Stop  & \multicolumn{2}{c}{Overall} \\
    \midrule
    ResNet+NLM & 0.671 & 0.644 & 0.995 & 0.724 & 0.00128\\
    \textbf{iLogic (ours)} & 0.678 & 0.679 & 0.995 & \textbf{0.740} & \textbf{0.00088} \\
    \bottomrule
    \end{tabular}
\end{table}

To be specific, this benchmark comprises 2D RGB renderings for multiple types of agents whose actions are constrained by first-order logic (FOL) rules. 
The training set consists of 100 different urban environments and 12 agents, the validation set includes 20 environments and 14 agents, and the test set contains 20 environments and 14 agents. Each environment encompasses 100 stimulation steps, with agent types remaining consistent within each world of the same subset. Notably, the groups of 14 agents in the validation and test sets are distinct, which aims to demonstrate the models' generalization abilities. According to the difficulty, the task is divided into two modes, i.e., the \textit{Easy} and \textit{Hard}. In each timestep, agents perform one of four actions: \texttt{Slow}, \texttt{Normal}, \texttt{Fast}, and \texttt{Stop}. The \textit{Easy} mode excludes the \texttt{Fast} action, while the \textit{Hard} mode includes all four actions, leading to the actions constrained by more complicated rules.

Actions in the VAP task are imbalanced; for instance, \texttt{Fast} actions in the \textit{Hard} model occur only about $\nicefrac{1}{4}$ to $\nicefrac{1}{9}$ as often as other actions, complicating the rule induction.

\subsubsection{Metrics}
To evaluate the model performance thoroughly, we utilize the weighted accuracy (wAcc), the variance of wAcc, and the prediction recall rate of each action as the metrics. Particularly, wAcc places greater emphasis on infrequently occurring data, reflecting the generalization ability; the variance of wAcc implies the optimization stability of the algorithms; and the recall rate assesses the accuracy with which a model learns the rules constraining each action.

\subsubsection{Baselines \& Implementation Details}
Since very little literature has studied FOL reasoning on RGB images, we don't have many baselines. As a result, we adopt the method proposed in LogiCity \citep{li2024logicity} that involves the SOTA visual encoder and reasoning engines. Particularly, we conduct experiments with two methods: (1) ResNet+NLM: ResNet as a visual encoder and NLM as the reasoning engine; (2) iLogic (ours): We train the ``ResNet+NLM'' within the IL framework, where the NLM is the LL problem.
Specifically, the visual encoder includes a concept predictor and a relationship predictor. The concept predictor involves a ResNet FPN and an ROI-Align layer, followed by a 4-layer MLP to predict agent concepts (unary predicates). The relationship predictor includes a 3-layer MLP to encode paired agent information into agent relationships (binary predicates). The reasoning engine involves an NLM with breath as 3 and depth as 4. Two AdamW optimizers with a learning rate of 0.001 are adopted for the visual encoder and the reasoning engine, respectively.

\begin{table}[t]
    \centering
    \setlength{\tabcolsep}{1.25mm}
    \fontsize{8}{9} \selectfont
    \caption{Evaluation results of the \textit{Hard} mode in LogiCity.}
    \label{tab:ilogic:hard}
    \begin{tabular}{c|cccc|cc}
    \toprule
    Metric & \multicolumn{4}{c|}{Recall Rate}  & wAcc & $\textrm{Var}(\text{wAcc})$ \\
    Action & Slow  & Normal & Fast  & Stop  & \multicolumn{2}{c}{Overall} \\
    \midrule
    ResNet+NLM & 0.552 & 0.575 & 0.265 & 0.999 & 0.400 & 0.00252\\
    \textbf{iLogic (ours)} & 0.459  & 0.598  &  0.338 &  0.999 & \textbf{0.442}  & \textbf{0.00083}\\
    \bottomrule
    \end{tabular}
\end{table}

\subsubsection{Evaluation Results}

For fairness, we mitigate the effect of randomness on the training process by running 10 times for each experiment and reporting their average performance.
As is displayed in \tref{tab:ilogic:easy} and \tref{tab:ilogic:hard}, the prediction recall rate of most actions using iLogic is higher than or equal to those of the baseline in both \textit{Easy} and \textit{Hard} mode. This proves that our iLogic can learn rules that constrain most actions more effectively.

Given the imbalanced number of actions, it is challenging to induct all the constrained rules simultaneously.
However, iLogic exhibits a marked improvement in wAcc, with 2.2\% and 10.5\% higher performance in \textit{Easy} and \textit{Hard} mode, respectively, demonstrating its stronger generalization ability. Besides, the variance of wAcc is 31.3\% and 67.1\% lower in \textit{Easy} and \textit{Hard} mode, respectively, which indicates higher optimization stability using the IL framework.

We present several qualitative examples in \fref{fig:ilogic:vis}, showing how actions are predicted from the learned rules. Since the actions are constrained by explicit rules in the LogiCity, a model has to learn these rules correctly to predict the actions. 
It can be seen that iLogic can induct the rules with perceived groundings on the right side and obtain the accurate actions exhibited on the left side of each example.

\subsubsection{Efficiency Analysis}

Due to the complicated BLO, the training time for iLogic is about 40\% longer than the supervised learning ``ResNet + NLM'', and this extra cost is well justified by the significant performance gain in \tref{tab:ilogic:easy} and \tref{tab:ilogic:hard}.
However, there is no additional computational cost in inference, with the system achieving a speed of approximately 19.78 frames/second on an RTX 3090 GPU.

\begin{table}[t]
    \centering
    \caption{The performance of UAV attitude control under different initial conditions. The ``IMU'' means the RMSE (unit: $\times 10^{-3}\rm{rad}$) of attitude estimation for the corresponding method.}
    \label{tab:impc:init}
  \resizebox{\linewidth}{!}{
    \begin{tabular}{cccccc}
        \toprule
        Initial & Methods & \textit{ST} ($\second$) & \textit{RMSE} (\degree) & \textit{SSE} (\degree) & IMU \\ \midrule
        \multirow{4}{*}{\centering 10\degree} & IMU+MPC & $ 0.287 $ & $ 0.745 $ & $ 0.250  $ & $ 7.730  $ \\ 
        & IMU$^+$+MPC & $ 0.281 $ & $ 0.692 $ & $ 0.193  $ & $ 6.470 $ \\ 
        & IMU+MPC$^+$ & $ 0.283 $ & $ 0.726 $ & $ 0.216 $ & $ 7.370 $ \\ 
        & \textbf{iMPC (ours)} & $ \mathbf{0.275} $ & $ \mathbf{0.691} $ & $ \mathbf{0.185} $ & $ \mathbf{6.310} $ \\
        \midrule
        \multirow{4}{*}{\centering 15\degree} & IMU+MPC & $ 0.330 $ & $ 0.726$ & $ 0.262$ & $ 8.390 $ \\ 
        & IMU$^+$+MPC & $ 0.299 $ & $ 0.691 $ & $ 0.213 $ & $ 7.020 $ \\ 
        & IMU+MPC$^+$ & $ 0.321 $ & $ 0.721 $ & $ 0.230 $ & $ 7.980 $ \\ 
        & \textbf{iMPC (ours)} & $ \mathbf{0.296} $ & $ \mathbf{0.688} $ & $ \mathbf{0.201} $ & $ \mathbf{6.800} $ \\ 
        \midrule
        \multirow{4}{*}{\centering 20\degree} & IMU+MPC & $ 0.336 $ & $ 0.728 $ & $ 0.263 $ & $ 8.370 $ \\ 
        & IMU$^+$+MPC & $ 0.318 $ & $ 0.685 $ & $ 0.217 $ & $ 7.270 $ \\ 
        & IMU+MPC$^+$ & $ 0.324 $ & $ 0.715 $ & $ 0.224 $ & $ 8.090 $ \\ 
        & \textbf{iMPC (ours)} & $ \mathbf{0.317} $ & $ \mathbf{0.684} $ & $ \mathbf{0.208} $ & $ \mathbf{7.010} $ \\ 
        \bottomrule
    \end{tabular}}
\end{table}

\subsection{Optimal Control} \label{exp:impc}

To validate the effectiveness of the iMPC framework, we test it on the attitude control for a quadrotor UAV. Specifically, we will show its ability to stabilize (hover) the UAV with different initial conditions under the effects of wind gusts.

\subsubsection{Baselines}
To demonstrate the effectiveness of the IL framework for jointly optimizing the data-driven IMU model and MPC, we select four baselines including (1) IMU+MPC: classic IMU integrator with a regular Diff-MPC; (2) IMU$^+$+MPC: data-driven IMU model with a regular Diff-MPC; (3) IMU+MPC$^+$: classic IMU integrator with a Diff-MPC with learnable moment of inertia (MOI); and (4) iMPC (ours): IMU$^+$ with MPC$^+$ trained with IL framework. Specifically, ``IMU$^+$'' refers to the data-driven IMU denoising and integration model from \citep{qiu2023airimu}. 
\revise{Additionally, for the wind disturbance test, we select an RL approach with a commonly used approach, proximal policy optimization (PPO) \citep{schulman2017proximal} as the baseline.}

\subsubsection{Implementation Details}

Due to efficiency considerations, we use a lightweight network for the data-driven IMU model, which includes a 2-layer multi-layer perceptron (MLP) as the encoder to get the feature embeddings from both the accelerometer and the gyroscope.
Two decoders are implemented each comprising two linear layers to address the specific corrections needed for the accelerometer and the gyroscope. These decoders are dedicated to accurately predicting the necessary adjustments for each sensor type, thereby enhancing the overall precision of the IMU readings in dynamic conditions.
For both the encoder and the decoder, we use \texttt{GELU} as the activation function for its effectiveness in facilitating smoother non-linear transformations. 
For RL, we take pre-integrated IMU measurements as the observation, output UAV action (thrust and torques) in OpenAI gym \citep{brockman2016openai}, and designed the reward to minimize the difference between the current and target UAV pose.

\subsubsection{Simulation Environment}
To accurately measure the performance of different controllers, we build a simulation environment for a quadrotor UAV with a standard 6 degree-of-freedom (DoF) dynamics model running at 1 K\hertz~\citep{gabriel2007quadrotor,abdellah2004dynamic} and an onboard IMU sensor running at 200 \hertz.
To simulate real-world effects, such as environmental disturbance, actuator uncertainty, sensor noise, and other unknown behaviors, we inject zero-mean Gaussian noise with a standard deviation of 1e-4 to the control input and zero-mean Gaussian noise with a standard deviation of 8.73e-2 to the UAV attitude at 1K\hertz.
Additionally, we employ the sensor noise models documented by the~\cite{epsonG365IMU} IMU, including initial bias, bias instability, and random walk for both the accelerometer and gyroscope. 
This produces the same noise level as a typical UAV tracking system using a visual-inertial odometry system~\citep{xu2023airvo} for attitude estimation.

\begin{table}[t]
    \centering
    \caption{The learned UAV MOI error using iMPC under different initial conditions with an initial MOI of 50\% offset.}
    \label{tab:moi}
    \resizebox{\linewidth}{!}{
    \begin{tabular}{c|cc|cc|cc}
    \toprule
    Initial Offset & Initial & Error & Initial & Error & Initial & Error \\ 
    \midrule
     50\% & $ 10 \degree $ & $ 2.67 \%$ & $ 15 \degree $ & $ 3.41 \%$ & $ 20 \degree $ & $ 2.22 \%$    \\
    \bottomrule
    \end{tabular}}
\end{table}

\begin{figure}[t]
    \centering
    \includegraphics[width=0.98\linewidth]{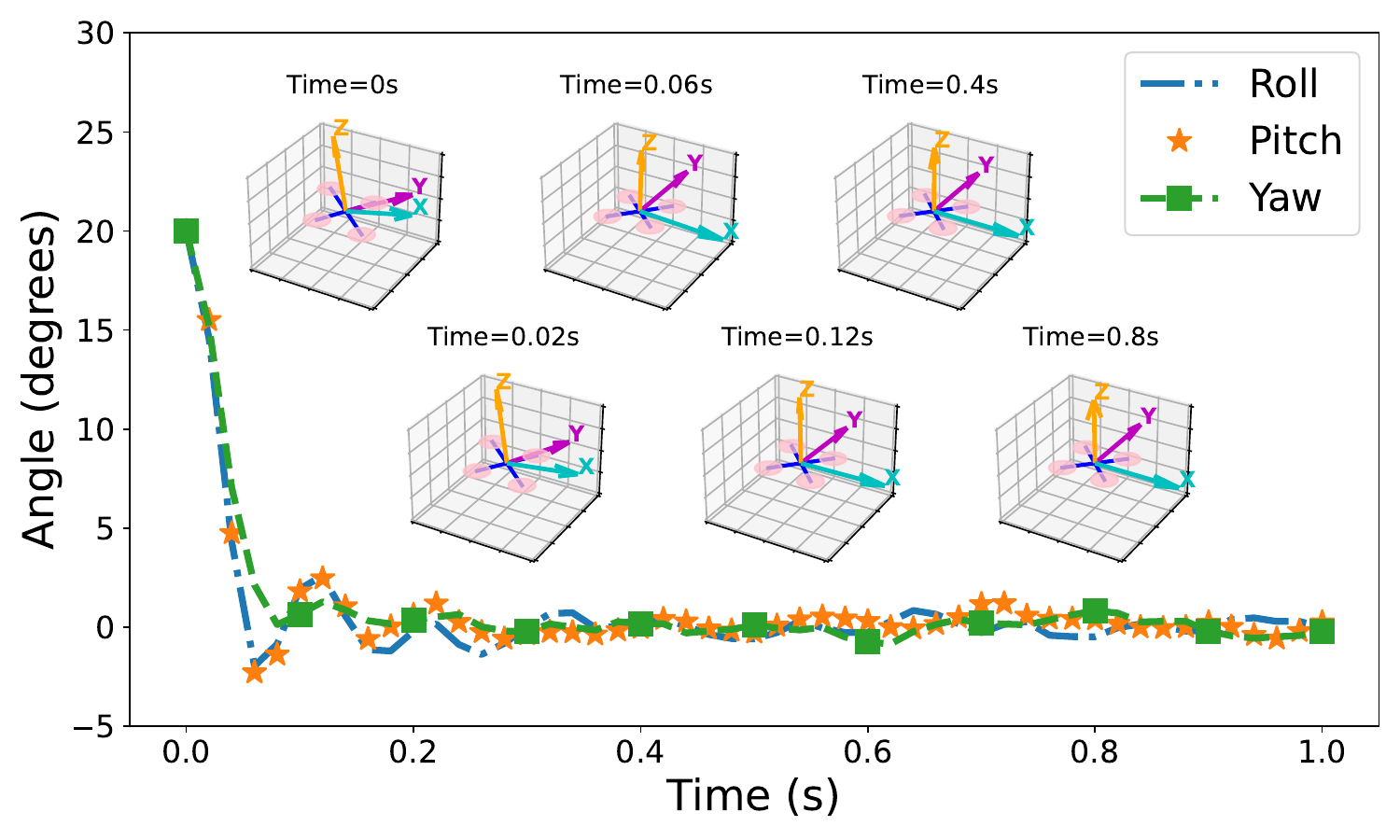}
    \caption{The UAV attitude quickly returns to a stable (zero) state for an initial condition of 20\degree~using iMPC as the controller.}
    \label{fig:mpc-response}
\end{figure}

\begin{figure*}[t]
  \centering
  \subfloat[UAV pitch angle response when encountering an \textbf{impulse wind} disturbance at 0.2 \second~for different speeds. PPO loses control at a wind speed of 15 \meter/\second, while iMPC loses control at 45 \meter/\second. \label{fig:wind-impulse}]{
    \includegraphics[width=0.48\linewidth]{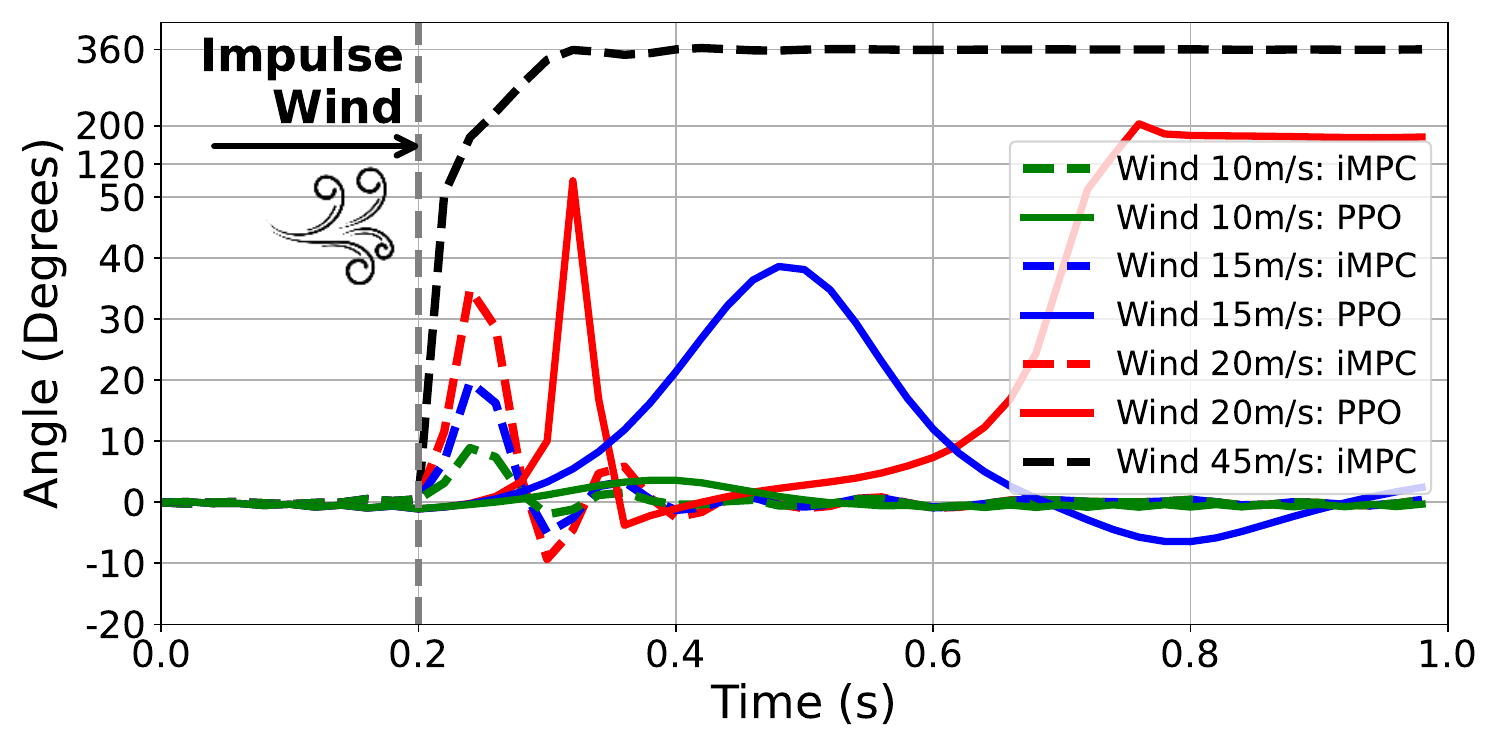}
  } \hfill
  \subfloat[UAV pitch angle response when the encountering a \textbf{step wind} disturbance at 0.2 \second~and lasting for 0.3 \second~for different speeds. PPO loses control at a wind speed of 15 \meter/\second, while iMPC loses control at 35 \meter/\second. \label{fig:wind-step}]{
    \includegraphics[width=0.48\linewidth]{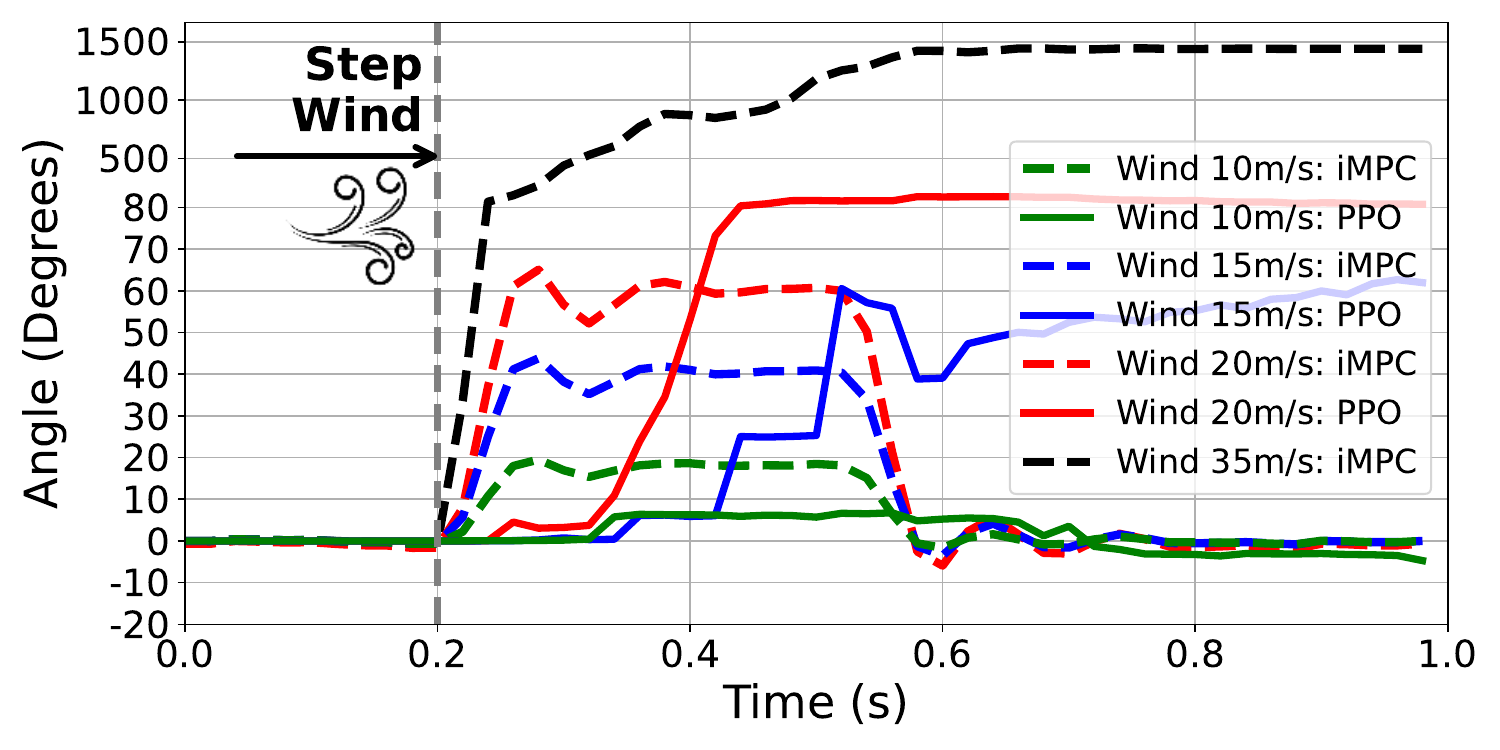}
  }
  \caption{\centering Control performance of iMPC and PPO under different levels of wind disturbance.
  }
  \label{fig:wind_gust}
\end{figure*}

\subsubsection{Metrics}

To evaluate the performance of a controller, we use three widely-used metrics, including settling time (\textit{ST}), root-mean-square error (\textit{RMSE}), and steady-state error (\textit{SSE}).
Specifically, \textit{ST} is the time for the UAV to enter and remain within $\pm 1.5 \degree$ of its final steady attitude, measuring how quickly the system settles; \textit{RMSE} is the root-mean-square of the difference between estimated and the desired attitude; and \textit{SSE} is the absolute difference between the steady and desired attitude, evaluating the control accuracy.

\begin{table}[t]
    \centering
    \caption{UAV attitude control performance under \textbf{impulse wind}, where `\ding{55}' means losing control at that wind speed.}
    \label{tab:wind-impulse}
  \resizebox{\linewidth}{!}{
    \begin{tabular}{cccccc}
        \toprule
        Wind & Methods & \textit{ST} ($\second$) & \textit{RMSE} (\degree) & \textit{SSE} (\degree) & IMU \\
        \midrule
        \multirow{4}{*}{\centering 10 \meter/\second} & IMU+MPC & $ 0.406 $ & $ 0.360 $ & $ 0.169 $ & $ 9.190 $ \\ 
        & IMU$^+$+MPC & $ 0.392 $ & $ 0.355 $ & $ 0.142 $ & $ 9.010 $ \\ 
        & IMU+MPC$^+$ & $ 0.406 $ & $ 0.360 $ & $ 0.168 $ & $ 9.180 $ \\ 
        & \revise{RL (PPO)} & \revise{$ 0.574 $} & \revise{$ 1.730 $} & \revise{$ 1.426 $} & \revise{\ding{55}} \\
        & \textbf{iMPC (ours)} & $ \mathbf{0.368} $ & $ \mathbf{0.353} $ & $ \mathbf{0.132} $ & $ \mathbf{8.660} $ \\ 
        \midrule
        \multirow{4}{*}{\centering 15 \meter/\second} & IMU+MPC & $ 0.454 $ & $ 0.328 $ & $ 0.120 $ & $ 8.160 $ \\ 
        & IMU$^+$+MPC & $ 0.450 $ & $ 0.316 $ & $ 0.065 $ & $ 7.730 $ \\ 
        & IMU+MPC$^+$ & $ 0.454 $ & $ 0.325 $ & $ 0.117 $ & $ 8.140 $ \\ 
        & \revise{RL (PPO)} & \revise{\ding{55}} & \revise{\ding{55}} & \revise{\ding{55}} & \revise{\ding{55}} \\
        & \textbf{iMPC} & $ \mathbf{0.430} $ & $ \mathbf{0.312} $ & $ \mathbf{0.060} $ & $ \mathbf{7.570} $ \\ 
        \midrule
        \multirow{4}{*}{\centering 20 \meter/\second} & IMU+MPC & $ 0.486 $ & $ 0.361 $ & $ 0.186 $ & $ 6.550 $ \\ 
        & IMU$^+$+MPC & $ 0.476 $ & $ 0.356 $ & $ 0.139 $ & $ 6.330 $ \\ 
        & IMU+MPC$^+$ & $ 0.484 $ & $ 0.361 $ & $ 0.185 $ & $ 6.520 $ \\ 
        & \revise{RL (PPO)} & \revise{\ding{55}} & \revise{\ding{55}} & \revise{\ding{55}} & \revise{\ding{55}} \\
        & \textbf{iMPC } & $ \mathbf{0.470} $ & $ \mathbf{0.354} $ & $ \mathbf{0.135} $ & $ \mathbf{6.160} $ \\ 
        \bottomrule
    \end{tabular}
    }
\end{table}

\subsubsection{Robustness to Initial Conditions}

We first evaluate the performance of the UAV attitude control under different initial conditions.
The performance of different controllers is shown in \tref{tab:impc:init} with an initial attitude of 10\degree~to 20\degree~of all three Euler angles (roll, pitch, and yaw).
Each experiment was repeated 10 times to ensure the accuracy and reliability of the results. It is clear that our iMPC can stabilize the UAV for all initial conditions within the test range. It is also worth noting that all methods show a stable performance with a small standard deviation thus we omit those numbers in the table. Compared to the baseline methods, iMPC ends up with less \textit{SSE}, \textit{RMSE}, and \textit{ST}.
\fref{fig:mpc-response} shows an example that the UAV attitude quickly returns to a stable (zero) state for an initial condition of 20\degree~using our iMPC.

To investigate how the lower-level MPC helps the upper-level IMU learning, we show the IMU attitude estimation performance in the last column of \tref{tab:impc:init}.
We can see that due to the physics knowledge from dynamics and as well as the lower-level MPC, the denoising and prediction performance of the IMU module get significantly improved compared to the separately trained case. 
Therefore, the imperative learning framework enforces the improvement of both the IMU network and the final control performance.

Since the UAV MOI is learned in iMPC, it is important to show the final estimated MOI is close to the real value after the IL learning process. 
To illustrate this, we set an initial MOI with 50\% offset from its true value and show the final estimated MOI error using iMPC in \tref{tab:moi}.
It can be seen that iMPC can learn the MOI with a final error of less than 3.5\%.
Additionally, if we jointly consider the performance in \tref{tab:impc:init} and \tref{tab:moi} and compare iMPC with IMU$^+$+MPC, and IMU+MPC$^+$ with IMU+MPC, where ``MPC$^+$'' indicates the learned MOI is in the control loop, we can conclude that a better learned MOI can lead to a better attitude control performance, with smaller SSE and settling time.

\begin{table}[t]
    \centering
    \caption{UAV attitude control performance under \textbf{step wind}, where `\ding{55}' means losing control at that wind speed.}
    \label{tab:wind-step}
  \resizebox{\linewidth}{!}{
    \begin{tabular}{cccccc}
        \toprule
        Wind & Methods & \textit{ST} ($\second$) & \textit{RMSE} (\degree) & \textit{SSE} (\degree) & IMU \\
        \midrule
        \multirow{4}{*}{\centering 10 \meter/\second} & IMU+MPC & $ 0.596 $ & $ 0.396 $ & $ 0.189 $ & $ 6.870 $ \\ 
        & IMU$^+$+MPC & $ 0.594 $ & $ 0.373 $ & $ 0.183 $ & $ 6.110 $ \\ 
        & IMU+MPC$^+$ & $ 0.588 $ & $ 0.392 $ & $ 0.179 $ & $ 6.830 $ \\ 
        & \revise{RL (PPO)} & \revise{$ 0.754 $} & \revise{$ 1.882 $} & \revise{$ 1.478 $} & \revise{\ding{55}} \\
        & \textbf{iMPC (ours)} & $ \mathbf{0.574} $ & $ \mathbf{0.352} $ & $ \mathbf{0.168} $ & $ \mathbf{5.990} $ \\ 
        \midrule
        \multirow{4}{*}{\centering 15 \meter/\second} & IMU+MPC & $ 0.684 $ & $ 0.337 $ & $ 0.172 $ & $ 7.190 $ \\ 
        & IMU$^+$+MPC & $ 0.670 $ & $ 0.309 $ & $ 0.167 $ & $ 6.740 $ \\ 
        & IMU+MPC$^+$ & $ 0.684 $ & $ 0.337 $ & $ 0.171 $ & $ 7.180 $ \\ 
        & \revise{RL (PPO)} & \revise{\ding{55}} & \revise{\ding{55}} & \revise{\ding{55}} & \revise{\ding{55}} \\
        & \textbf{iMPC (ours)} & $ \mathbf{0.620} $ & $ \mathbf{0.297} $ & $ \mathbf{0.149} $ & $ \mathbf{6.730} $ \\ 
        \midrule
        \multirow{4}{*}{\centering 20 \meter/\second} & IMU+MPC & $ 0.704 $ & $ 0.376 $ & $ 0.240 $ & $ 6.300 $ \\ 
        & IMU$^+$+MPC & $ 0.690 $ & $ 0.317 $ & $ 0.173 $ & $ 6.120 $ \\ 
        & IMU+MPC$^+$ & $ 0.704 $ & $ 0.372 $ & $ 0.253 $ & $ 6.300 $ \\ 
        & \revise{RL (PPO)} & \revise{\ding{55}} & \revise{\ding{55}} & \revise{\ding{55}} & \revise{\ding{55}} \\
        & \textbf{iMPC (ours)} & $ \mathbf{0.676} $ & $ \mathbf{0.311} $ & $ \mathbf{0.150} $ & $ \mathbf{6.110} $ \\ \bottomrule
    \end{tabular}}
\end{table}

\subsubsection{Robustness to Wind Disturbance}

We next evaluate the control performance under the wind disturbance to validate the robustness of the proposed approach. Specifically, we inject the wind disturbance by applying an external force $F = \frac{1}{2} \cdot C_d \cdot \rho \cdot A \cdot V^2$ and torque $\tau = r \times F$ to the nominal system based on the drag equation, where $C_d$ is the drag coefficient, $\rho$ is the air density, $A$ is the frontal area of the object facing the wind, $V$ is the wind speed, $r$ is the length of lever arm. 
Hence, for a fixed UAV configuration, different wind speeds $V$ lead to different disturbance levels.
We conduct two sets of experiments: in the first set, we add impulse wind gust to the UAV in the direction of opposite heading during the hover with varying wind speed $ 10/15/20/45\  \meter/\second$; in the second set, a step wind is added to the system in the same direction during hover and lasts for 0.3 \second~for different wind speed $ 10/15/20/35\  \meter/\second$. The external wind starts at 0.2 \second~after entering a steady hover.
During RL (PPO) training, we apply external wind disturbances of up to $10 \meter/\second$. In contrast, our hybrid approach with MPC provides robustness without requiring exposure to a wide range of wind conditions during training.

The pitch angle responses of the UAV using iMPC under the two different wind disturbances are shown in \fref{fig:wind-impulse} and \fref{fig:wind-step}, respectively. It can be seen that the wind affects the UAV attitude with different peak values under different wind speeds. The vehicle can resist wind and return to hover quickly even under $20\meter/\second^2$ impulse or step wind. Note that the maximum wind speed the system can sustain under step wind is smaller than that under impulse wind due to the long duration of the disturbance. When the wind speed is too high, the UAV can no longer compensate for the disturbance and the UAV flipped and finally crashed.

\begin{table}[t]
    \centering
    \caption{The average RMSE drifts on KITTI dataset. $r_{\text{rel}}$ is rotational RMSE drift (\degree/100 \meter), $t_{\text{rel}}$ is translational RMSE drift (\%) evaluated on various segments with length 100–800 \meter.}
    \label{tab:kitti_vo}
    \begin{threeparttable}
    \resizebox{0.9\linewidth}{!}{
    \begin{tabular}{c|c|c|cc}
        \toprule
        Methods & Inertial & Supervised & $r_{\text{rel}}$ & $t_{\text{rel}}$ \\ \midrule
        DeepVO & \ding{55} & \ding{51} & 5.966 & 5.450 \\ 
        UnDeepVO & \ding{55} & \ding{55} & 2.394 & 5.311 \\ 
        TartanVO & \ding{55} & \ding{51} & 3.230 & 6.502 \\ 
        DROID-SLAM & \ding{55} & \ding{51} & \textbf{0.633} & 5.595 \\ 
        \textbf{iSLAM (VO Only)} & \ding{55} & \ding{55} & 1.101 & \textbf{3.438} \\ 
        \midrule
        \cite{wei2021unsupervised} & \ding{51} & \ding{55} & 0.722 & 5.110 \\ 
        \cite{yang2022efficient} & \ding{51} & \ding{51} & 0.863 & 2.403 \\
        DeepVIO & \ding{51} & \ding{55} & 1.577 & 3.724 \\ 
        \textbf{iSLAM} (ours) & \ding{51} & \ding{55} & \textbf{0.262} & \textbf{2.326} \\
        \bottomrule
    \end{tabular}
    }
    \end{threeparttable}
\end{table}

Detailed comparisons with other approaches are presented in \tref{tab:wind-impulse} and \tref{tab:wind-step} for the two scenarios with different disturbances, respectively.
It is evident that iMPC leads to better attitude control performance with smaller \textit{SSE} and a faster \textit{ST}. Particularly compared to IMU+MPC and IMU$^+$+MPC, iMPC reduces noises with a more accurate attitude state prediction. Additionally, iMPC further reduces the IMU prediction error in terms of RMSE. Therefore, even in the presence of significant external wind disturbances, iMPC can still control the vehicle attitude robustly. This demonstrates that the IL framework can simultaneously enhance both the MPC parameter learning and IMU denoising and prediction. 
Compared to the RL baseline PPO, iMPC demonstrates significantly better performance, especially under wind disturbances that were not encountered during training. While PPO performs similarly under wind conditions seen during training, as shown in \fref{fig:wind-impulse} and \fref{fig:wind-step}, it completely loses control of the UAV when faced with out-of-distribution disturbances. These results further highlight the robustness of the IL framework, benefiting from a hybrid approach that integrates prior physical knowledge.

\subsubsection{Efficiency Analysis}
Compared to a conventional approach of IMU+MPC, iMPC achieves similar efficiency because the MPC dominates the total computation time and the overhead from the learning-based IMU inference is negligible. In fact, iMPC runs at 50 Hz control frequency with a 200 Hz IMU measurement, which is at the same level as standard MPC. On the other hand, when compared to RL, although the inference time of iMPC is longer as expected due to the additional requirement of solving an online optimization, such physics-based optimization also brings prior knowledge into the training process, thereby overcoming the typical low sampling efficiency issue of RL. In fact, the experiment shows the PPO needs an extra 221.28\% time for training due to the low sampling efficiency.

\subsection{SLAM} \label{exp:islam}

We next provide a comprehensive evaluation of the iSLAM framework in terms of estimation accuracy and runtime. We will also provide assessments on the unique capabilities brought by IL, including the mutual enhancement between the front-end and the back-end of a SLAM system and its self-supervised adaptation to new environments.

\subsubsection{Datasets}

To ensure thorough evaluation, we adopt two widely-used datasets, including KITTI \citep{geiger2013vision} and EuRoC \citep{burri2016euroc}.
They have diverse environments and motion patterns: KITTI incorporates high-speed long-range movements in driving scenarios, while EuRoC features aggressive motions in indoor environments.

\subsubsection{Metrics}
Following prior works \citep{qiu2021airdos}, we use the widely-used absolute trajectory error (ATE), relative motion error (RME), and root mean square error (RMSE) of rotational and translational drifts as evaluation metrics.

\begin{table*}[ht]
    \centering
    \caption{\centering Absolute Trajectory Errors (ATE) on the EuRoC dataset.}
    \label{tab:euroc}
    \resizebox{0.9\textwidth}{!}{
    \begin{tabular}{c|cccccccccccc}
        \toprule
        Methods & MH01 & MH02 & MH03 & MH04 & MH05 & V101 & V102 & V103 & V201 & V202 & V203 & \textbf{Avg} \\ \midrule
        DeepV2D & 0.739 & 1.144 & 0.752 & 1.492 & 1.567 & 0.981 & 0.801 & 1.570 & \textbf{0.290} & 2.202 & 2.743 & 1.354 \\ 
        DeepFactors & 1.587 & 1.479 & 3.139 & 5.331 & 4.002 & 1.520 & 0.679 & 0.900 & 0.876 & 1.905 & \textbf{1.021} & 2.085 \\ 
        TartanVO & 0.783 & \textbf{0.415} & 0.778 & 1.502 & 1.164 & 0.527 & 0.669 & 0.955 & 0.523 & 0.899 & 1.257 & 0.869 \\ 
        \textbf{iSLAM (VO Only)} & \underline{0.320} & 0.462 & \underline{0.380} & \underline{0.962} & \underline{0.500} & \underline{0.366} & \underline{0.414} & \underline{0.313} & 0.478 & \underline{0.424} & 1.176 & \underline{0.527} \\ 
        \textbf{iSLAM (ours)} & \textbf{0.302} & \underline{0.460} & \textbf{0.363} & \textbf{0.936} & \textbf{0.478} & \textbf{0.355} & \textbf{0.391} & \textbf{0.301} & \underline{0.452} & \textbf{0.416} & \underline{1.133} & \textbf{0.508} \\ 
        \bottomrule
    \end{tabular}
    }
\end{table*}

\begin{figure}[t]
  \centering
  \subfloat[\centering Decrease of the ATE.\label{fig:decrease-a}]{
    \includegraphics[width=0.47\linewidth]{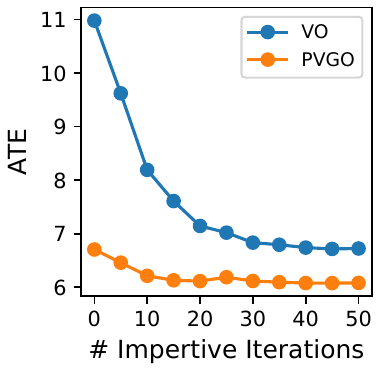}
  }
  \hfill
  \subfloat[\centering Decrease of error percentage.\label{fig:decrease-b}]{
    \includegraphics[width=0.485\linewidth]{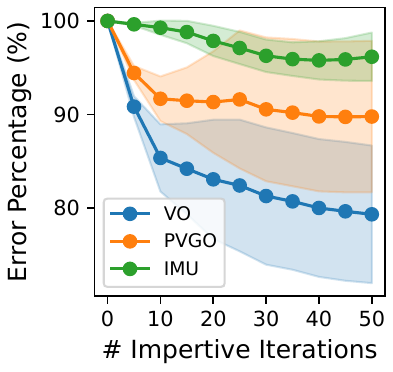}
  }
  \caption{(a) The ATE of our VO and PVGO w.r.t. number of imperative iterations. (b) The decrease in error percentage. The error before imperative learning is treated as 100\%. The ATE is used for VO and PVGO to calculate the percentage, while the relative displacement error is used for IMU. The solid lines are mean values while the transparent regions are the variances.
  }
  \label{fig:decrease}
\end{figure}

\begin{figure*}[t]
    \centering
        \subfloat{
        \includegraphics[width=0.187\linewidth]{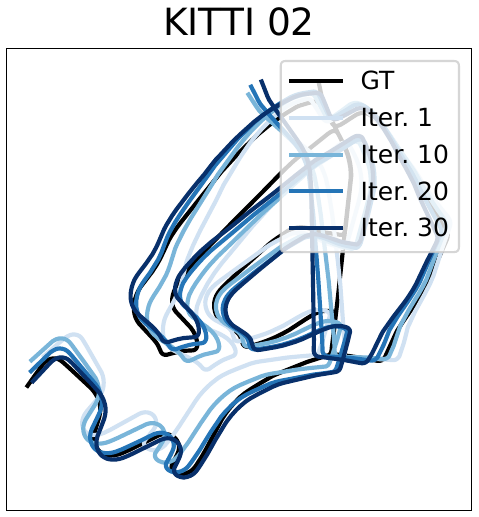}
    }
    \subfloat{
        \includegraphics[width=0.187\linewidth]{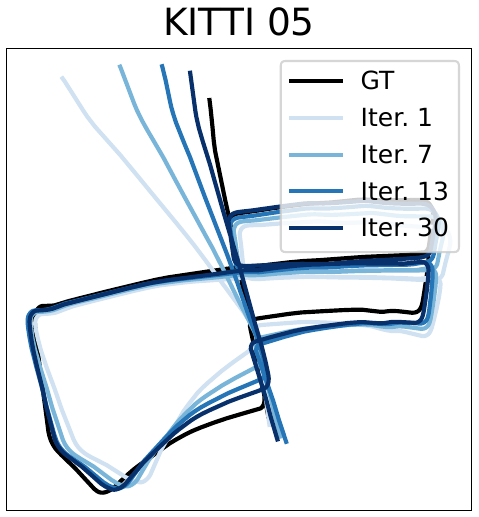}
    }
    \subfloat{
        \includegraphics[width=0.187\linewidth]{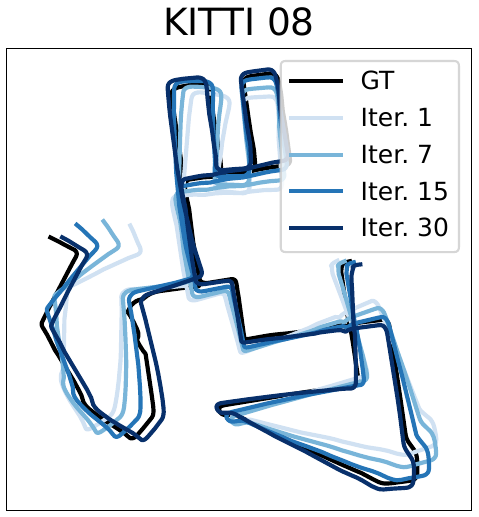}
    }
    \subfloat{
        \includegraphics[width=0.187\linewidth]{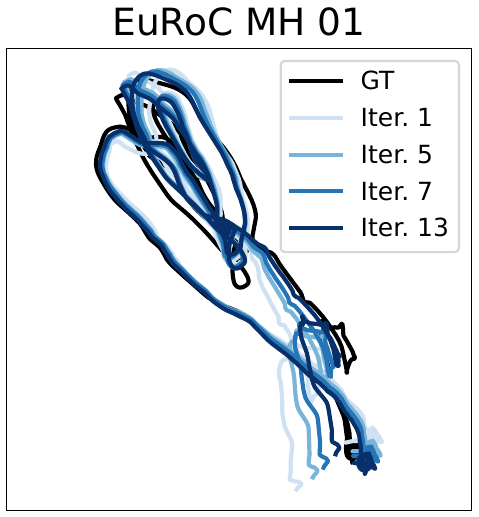}
    }
    \subfloat{
        \includegraphics[width=0.187\linewidth]{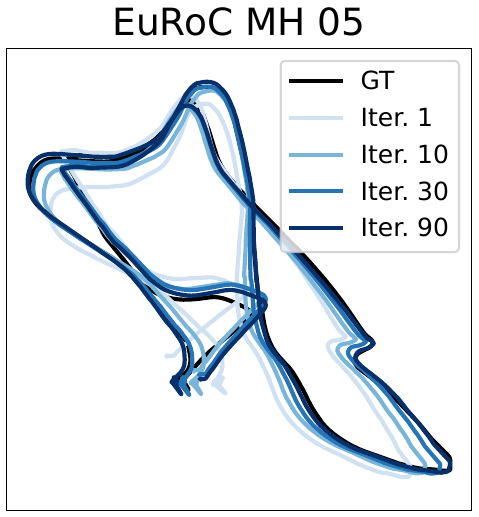}
    }
    \caption{The predicted trajectories from the front-end are improved concerning the number of imperative iterations in iSLAM.}
    \label{fig:improve}
\end{figure*}

\subsubsection{Accuracy Evaluation}

We first assess the localization accuracy of iSLAM on the KITTI dataset, which has been widely used in previous works on various sensor setups. To facilitate a fair comparison, in \tref{tab:kitti_vo}, we evaluate our standalone VO component against existing VO networks, including DeepVO \citep{supe_wang2017deepvo}, UnDeepVO \citep{unsup_li2018undeepvo}, TartanVO \citep{wang2021tartanvo}, and DROID-SLAM \citep{teed2021droid}, and compare the full iSLAM to other learning-based visual-inertial methods, which are \cite{wei2021unsupervised}, \cite{yang2022efficient}, and DeepVIO \citep{han2019deepvio}. Sequences 00 and 03 are omitted in our experiment since they lack completed IMU data.
Notably, some methods such as DeepVO and \citep{yang2022efficient} were supervisedly trained on KITTI, while our iSLAM was self-supervised. Even though, iSLAM outperforms all the competitors. Additionally, it is noteworthy that our base model, TartanVO, doesn't exhibit the highest performance due to its lightweight design. Nevertheless, through imperative learning, we achieve much lower errors with a similar model architecture. 

We also test iSLAM on the EuRoC benchmark, where the presence of aggressive motion, substantial IMU drift, and significant illumination changes pose notable challenges to SLAM algorithms \citep{qin2018vins}. However, both our standalone VO and the full iSLAM generalize well to EuRoC. As shown in \tref{tab:euroc}, iSLAM achieves an average ATE 62\% lower than DeepV2D \citep{supe_teed2018deepv2d}, 76\% lower than DeepFactors \citep{czarnowski2020deepfactors}, and 42\% lower than TartanVO \citep{wang2021tartanvo}.

\subsubsection{Efficiency Analysis}

Efficiency is a crucial factor for SLAM systems in real-world robot applications. 
Here we conduct the efficiency assessments for iSLAM on an RTX4090 GPU.
Our stereo VO can reach a real-time speed of 29-31 frames per second (FPS). The scale corrector only uses about 11\% of inference time, thus having minimal impact on the overall efficiency. The IMU module achieves an average speed of 260 FPS, whereas the back-end achieves 64 FPS, when evaluated independently. The entire system operates at around 20 FPS. Note that the imperative learning framework offers broad applicability across different front-end and back-end designs, allowing for great flexibility in balancing accuracy and efficiency.
We next measure the runtime of our ``one-step'' back-propagation strategy against the conventional unrolling approaches \citep{tang2018ba, teed2021droid} during gradient calculations. A significant runtime gap is observed: our ``one-step'' strategy is on average 1.5$\times$ faster than the unrolling approach.

\subsubsection{Effectiveness Validation}\label{sec:effective}

We next validate the effectiveness of imperative learning in fostering mutual improvement between the front-end and back-end components in iSLAM.
The reduction of ATE and error percentage w.r.t. imperative iterations is depicted in \fref{fig:decrease}. One imperative iteration refers to one forward-backward circle between the front-end and back-end over the entire trajectory. As observed in \fref{fig:decrease-a}, the ATE of both VO and PVGO decreases throughout the learning process. Moreover, the performance gap between them is narrowing, indicating that the VO model has effectively learned geometry knowledge from the back-end through BLO. \fref{fig:decrease-b} further demonstrates that imperative learning leads to an average error reduction of 22\% on our VO network and 4\% on the IMU module after 50 iterations. Meanwhile, the performance gain in the front-end also improves the PVGO result by approximately 10\% on average. This result confirms the efficacy of mutual correction between the front-end and back-end components in enhancing overall accuracy. 

The estimated trajectories are visualized in \fref{fig:improve}. As observed, the trajectories estimated by the updated VO model are much closer to the ground truth, which further indicates the effectiveness of the IL framework for SLAM.

\begin{table*}[t]
  \centering
    \caption{Performance of different methods on large-scale MTSPs. iMTSP significantly outperforms the Google OR-Tools \citep{ortools_routing} and provides an average of $3.2 \pm 0.01\%$ shorter maximum route length than the RL baseline \citep{hu2020reinforcement}.}
  \resizebox{0.9\linewidth}{!}{
  \begin{tabular}{ccc|ccccccc|c}
    \toprule
    \multicolumn{3}{c|}{Training Setting} & \multicolumn{7}{c|}{\# Test Cities}& \multirow{2}{*}{Avg gap}\\
    \cmidrule{1-10}
    Model& \# Agents & \# Cities & 400 & 500 & 600 & 700 & 800 & 900 & 1000\\
    \midrule
         Google OR-Tools & 10 & $-$ & 10.482 & 10.199 & 12.032 & 13.198 & 14.119 & 18.710 & 18.658 &277.1\%\\
         RL baseline  & 10 & 50 & 3.058 & 3.301 & 3.533 & 3.723 & 3.935 & \textbf{4.120} & \textbf{4.276} & 0.2\%\\
         \textbf{iMTSP (ours)} & 10 & 50&\textbf{3.046}&\textbf{3.273}&\textbf{3.530}&\textbf{3.712}&\textbf{3.924}&4.122&4.283&0\\
         \midrule
         Google OR-Tools &10& $-$ & 10.482 & 10.199 & 12.032 & 13.198 & 14.119 & 18.710 & 18.658&290.1\%\\
         RL baseline &10& 100& 3.035 & 3.271 & 3.542 & 3.737 & 3.954 & 4.181 & 4.330&4.3\%\\
         \textbf{iMTSP (ours)} &10& 100 &\textbf{3.019}&\textbf{3.215}&\textbf{3.424}&\textbf{3.570}&\textbf{3.763}&\textbf{3.930}&\textbf{4.042}&0\\
         \midrule
         Google OR-Tools &15& $-$ & 10.293 & 10.042 & 11.640 & 13.145 & 14.098 & 18.471 & 18.626&349.9\%\\
         RL baseline &15& 100& 2.718 &2.915&3.103&3.242&3.441&3.609&3.730&6.3\%\\
         \textbf{iMTSP (ours)} &15& 100&\textbf{2.614}&\textbf{2.771}&\textbf{2.944}&\textbf{3.059}&\textbf{3.221}&\textbf{3.340}&\textbf{3.456}&0\\
         \bottomrule
  \end{tabular}
  }
  \label{tab:MTSP_results}
\end{table*}

\begin{figure*}
\begin{subfigure}[t]{.33\textwidth}
  \centering
  \includegraphics[width=\linewidth]{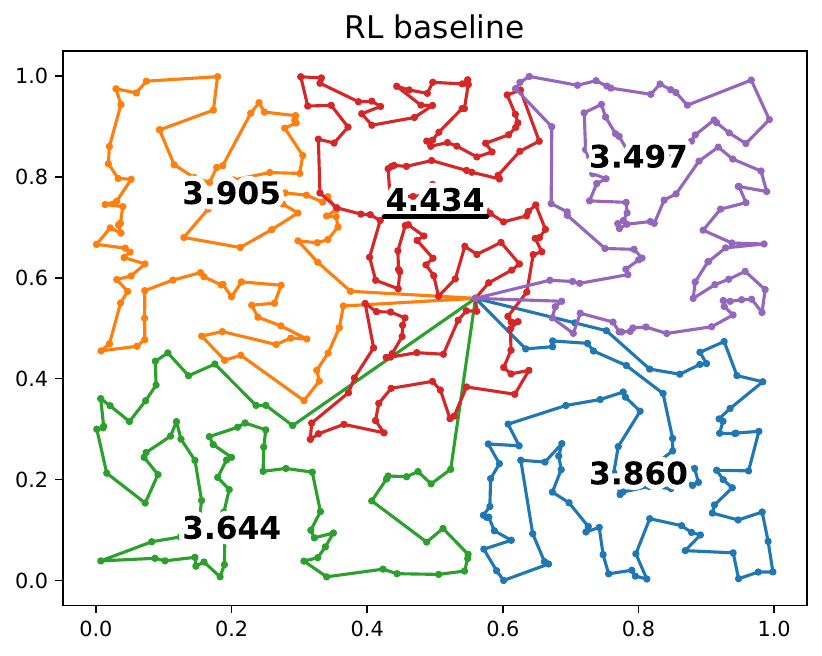}
  \caption{\centering RL baseline.}
  \label{fig:mtsp:rl-1}
\end{subfigure}
\begin{subfigure}[t]{.33\textwidth}
  \centering
  \includegraphics[width=\linewidth]{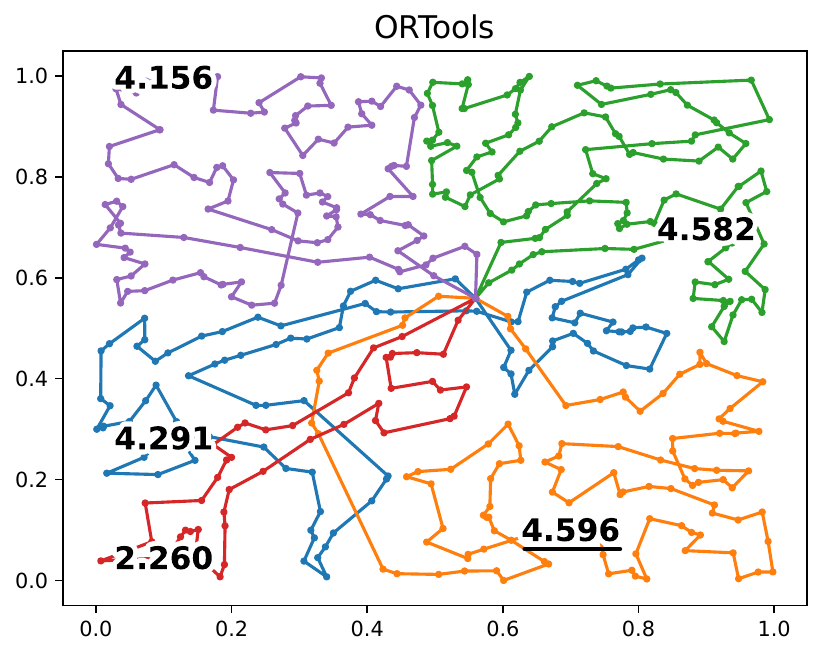}
  \caption{\centering OR-Tools with $1800 \second$ runtime.}
  \label{fig:mtsp:or-1}
\end{subfigure}
\begin{subfigure}[t]{.33\textwidth}
  \centering
  \includegraphics[width=\linewidth]{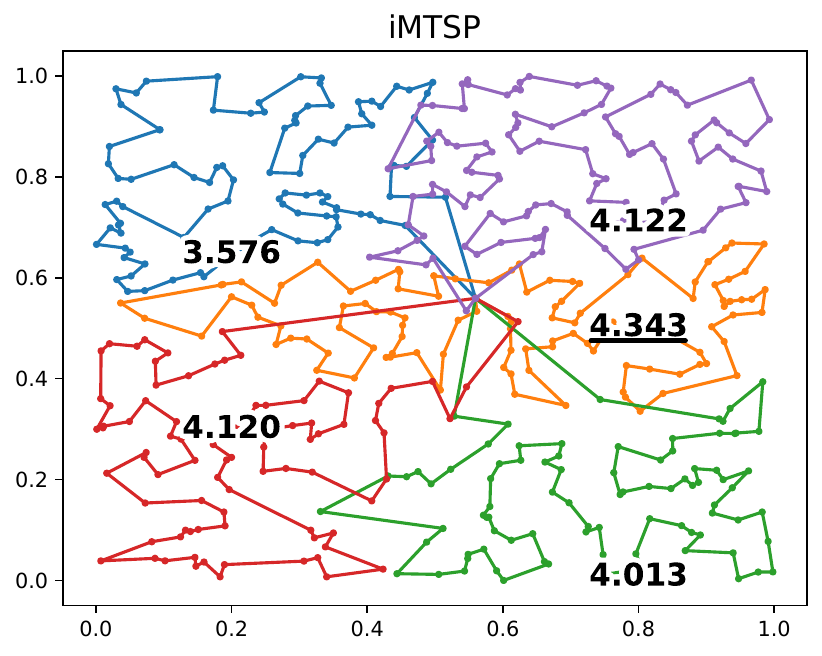}
  \caption{\centering iMTSP.}
  \label{fig:iMTSP-1}
\end{subfigure}
\caption{The performance visualization of the baselines and iMTSP for a problem instance with a central depot, $5$ agents, and $500$ cities. The numbers denote the length of the route, and different colors represent different routes. The longest tour lengths are underlined and iMTSP has the best solution and has fewer sub-optimal patterns like circular partial routes or long straight routes.}
\label{fig:mtsp-instances}
\end{figure*}

\subsection{Min-Max MTSP}

We next demonstrate our iMTSP in generalization ability, computational efficiency, and convergence speed.

\subsubsection{Baselines}
To thoroughly evaluate the performance of different methods for MTSP, we compare our iMTSP with both classic methods and learning-based methods.
This includes a well-known meta-heuristic method implemented by the Google OR-Tools routing library~\citep{ortools_routing} and an RL-based method \citep{hu2020reinforcement}, which will be referred to as Google OR-Tools and RL baseline for abbreviation, respectively.
Specifically, Google OR-Tools can provide near-optimal solutions\footnote[3]{Google OR-Tools is a meta-heuristic and cannot guarantee the optimality of solutions. However, it is efficient for small-scale problems and wildly used for academic and commercial purposes. Thus, in our experiments, it can be seen as an ``ideal'' solver providing ``optimal'' solutions.} to MTSPs with a few hundred cities, while the RL baseline can solve larger-scale MTSPs such as with one thousand cities. 
We cannot compare with other methods such as~\citep{liang2023splitnet, gao2023amarl,park2021schedulenet} because they either don't provide source code or cannot be applied to our settings.

\subsubsection{Datasets}

Most of the existing approaches can handle MTSPs with about $150$ cities, but their performance will significantly compromise with $400$ or more cities. To demonstrate iMTSP's ability to generalization and handle large-scale problems, we conduct experiments on training sets with $50$ to $100$ cities but test the models on problems with $400$ to $1000$ cities. We believe this challenging setting can reflect the generalization ability of the proposed models.

Due to the uniqueness of MTSP, we cannot deploy MTSP algorithms on such a large-scale robot team, thus we build a simulation for comparison.
Specifically, all the data instances are generated by uniformly sampling points in a unit rectangular so that both $x$ and $y$ coordinates are in the range of $0$ to $1$. The test set consists of i.i.d. samples but with a larger number ($400$ to $1000$) of cities as aforementioned. Note that the proposed approach is applicable to general graphs with arbitrary costs assigned to edges, although our dataset is constrained in the 2-D space.

\subsubsection{Implementation Details}

The structure of our allocation network is similar to that in \citep{hu2020reinforcement}, where the cities and agents are embedded into $64$ dimensional vectors.
The surrogate network is a three-layer MLP whose hidden dimension is $256$ and whose activation function is ``\texttt{tanh}''.
Additionally, Google OR-Tools provides the TSP solver in our iMTSP. It uses the ``Global Cheapest Arc'' strategy to generate its initial solutions and uses ``Guided local search'' for local improvements. We use the best-recommended settings in Google OR Tools for all experiments. All tests were conducted locally using the same desktop-level computer.

\subsubsection{Quantitative Performance}

We next present quantitative evidence of iMTSP's superior solution quality, particularly in maximum route length, as detailed in \tref{tab:MTSP_results}.
It can be seen that iMTSP achieves up to $80\%$ shorter maximum route length than Google OR-Tools which cannot converge to a local minimum within the given $300$ seconds time budget.

On the other hand, iMTSP provides a better solution in most cases than the RL baseline. On average, iMTSP has $3.2 \pm 0.01\%$ shorter maximum route length over the RL baseline. When the models are trained with $100$ cities, the difference between the route lengths monotonically increases from $0.4\%$ to $8.0\%$ and from $3.4\%$ to $8.9\%$, respectively with $10$ and $15$ agents. The results demonstrate that with the lower-variance gradient provided by our control variate-based optimization algorithm, iMTSP usually converges to better solutions when being trained on large-scale instances.

\begin{table}[t]
    \centering
    \caption{Computing time of iMTSP dealing with $1000$ cities. Due to a similar network structure, the RL baseline~\citep{hu2020reinforcement} has roughly the same runtime efficiency compared with ours, while Google OR-Tools~\citep{ortools_routing} cannot find feasible solutions within a given time budget.}
    \begin{tabular}{c|ccc}
    \toprule
         Number of Agents & 5 & 10 & 15 \\
         \midrule
         Computing Time & 4.85 \second & 1.98 \second & 1.35 \second\\
    \bottomrule
    \end{tabular}
    \label{tab:imtsp:time}
\end{table}

\subsubsection{Efficiency Analysis}

Since iMTSP contains both a data-driven allocation network and classic TSP solvers, it is important to identify the bottleneck of the architecture for future improvements.
As in Table \ref{tab:imtsp:time}, with $5$, $10$, and $15$ agents, the computing time of our model are $4.85$ \second, $1.98$ \second, and $1.35$ \second, respectively, to solve one instance with $1000$ cities. Note that the computing time decreases as the number of agents increases. This result indicates that the major computational bottleneck is the TSP solvers rather than the allocation network because more agents indicate more parameters in the allocation network but less average number of cities per agent. One possible direction to further reduce the computing time of iMTSP is to create multiple threads to run the TSP solvers in parallel since the TSPs in the lower-level optimization of iMSTP are independent.

\subsubsection{Qualitative Performance}

We next conduct qualitative analysis for the MTSP solvers in \fref{fig:mtsp-instances}, where different colors indicate different routes. 
It is observed that sub-optimal patterns exist for both baselines but very few of them exist in that of iMTSP. For example, we can observe circular partial routes for Google OR-Tools such as the green and purple routes, where the route length can be reduced by simply de-circle at the overlapping point. 
This observation provides further intuitions about the advantages of iMTSP.

\subsubsection{Gradient Variance}
As demonstrated in \sref{sec:discrete}, our control variate-based iMTSP framework is expected to have a smaller gradient variance than the RL baseline. We verify such a hypothesis by explicitly recording the mini-batch gradient variance during the training process. The experiment is conducted twice with $10$ agents respectively visiting $50$ and $100$ cities. The training data with a batch size of $512$ is divided into several mini-batches containing $32$ instances. We compute and store the gradients for every mini-batch, and then calculate their variance for the entire batch. The parameters of the allocation network are still updated with the average gradient of the whole batch.
The variance of the gradient with respect to the training process is shown in \fref{fig:impsp:var_hist}, where the horizontal axis represents the number of training iterations and the vertical axis denotes the mean logarithm gradient variance. It is worth noting that the gradient of our iMTSP converges about $20\times$ faster than the RL baseline \citep{hu2020reinforcement}, which verifies the effectiveness of our control variate-based BLO process.

\begin{figure}[t]
    \centering
    \includegraphics[width=\linewidth]{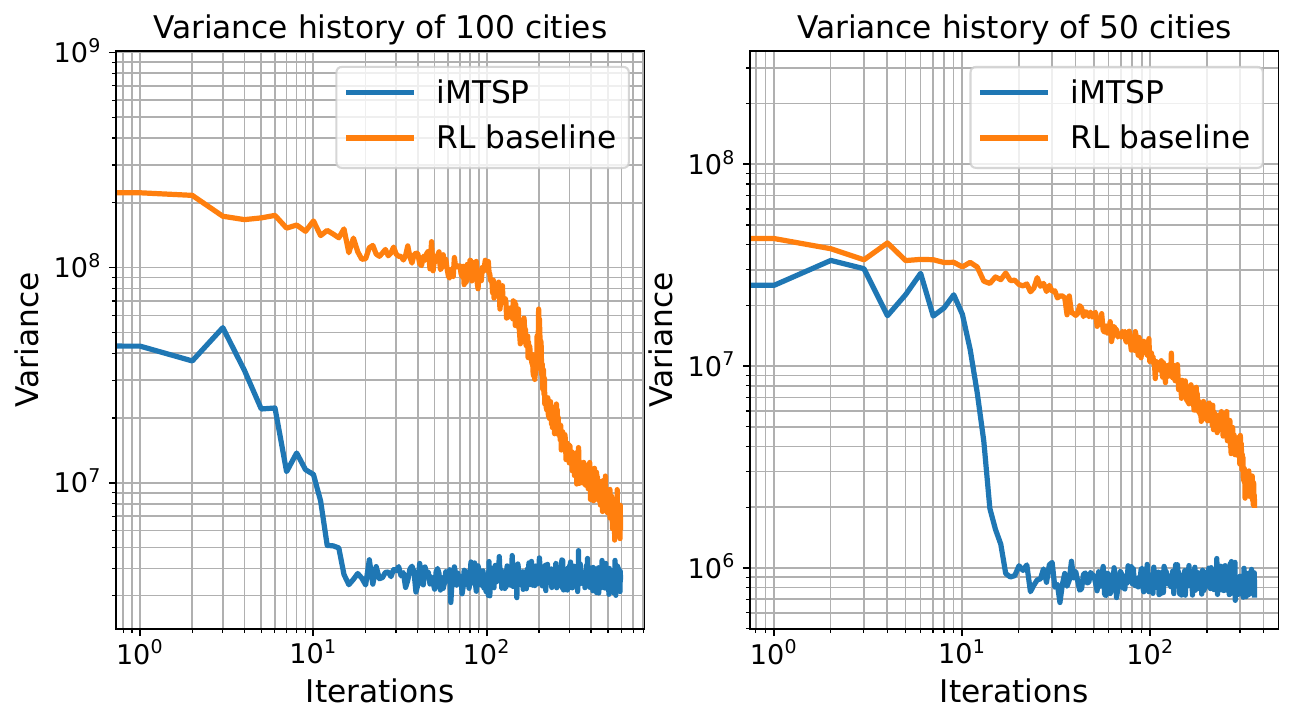}
    \caption{The gradient variance history of our method and the RL baseline during the training process with $50$ and $100$ cities. The vertical axis is the logarithmic variance of the allocation network gradient, and the horizontal axis denotes the number of training iterations. Our control-variate-base gradient estimator converges about $20\times$ faster than the RL baseline.} 
    \label{fig:impsp:var_hist}
\end{figure}

\section{Conclusions \& Discussions}

We introduced imperative learning, a self-supervised neuro-symbolic learning framework aimed at enhancing robot autonomy. Imperative learning provides a systematic approach to developing neuro-symbolic systems by integrating the expressiveness of neural methods with the generalization capabilities of symbolic methods.
We presented five distinct applications of robot autonomy: path planning, rule induction, optimal control, visual odometry, and multi-agent routing. Each application exemplifies different optimization techniques for addressing lower-level problems, namely closed-form solutions, first-order optimization, second-order optimization, constrained optimization, and discrete optimization, respectively. These examples demonstrate the versatility of imperative learning and we expect they can stimulate further research in the field of robot autonomy.

Similar to other robot learning frameworks, imperative learning has its drawbacks and faces numerous unresolved challenges. Robotics often involves highly nonlinear real-world problems, where theoretical assumptions for bilevel optimizations do not always hold, leading to a lack of theoretical guarantees for convergence and stability, despite its practical effectiveness in many tasks. Additionally, unlike data-driven models, the training process for bilevel optimization requires careful implementation. Moreover, applying imperative learning to new problems is not straightforward. Researchers need to thoroughly understand the problems to determine the appropriate allocation of tasks to the respective modules (neural, symbolic, or memory), based on the strengths of each module. For instance, a neural model might excel in tasks involving dynamic obstacle detection and classification, while a symbolic module could be more suitable for deriving and optimizing multi-step navigation strategies based on rules and logical constraints.

We believe that the theoretical challenges of imperative learning will also inspire new directions and topics for fundamental research in bilevel optimization. For example, second-order bilevel optimization remains underexplored in machine learning due to time and memory constraints. However, its relevance is increasingly critical in robotics, where second-order optimization is essential for achieving the required accuracy in complex tasks. Furthermore, handling lower-level constraints within general bilevel optimization settings presents significant challenges and remains an underdeveloped area. Recent advancements, such as those based on proximal Lagrangian value functions \citep{yao2024overcoming, yao2023constrained}, offer potential solutions for constrained robot learning problems. We intend to investigate robust approaches and apply these techniques to enhance our experimental outcomes for imperative learning. Additionally, we plan to develop heuristic yet practical solutions for bilevel optimization involving discrete variables, and we aim to refine the theoretical framework for these discrete scenarios with fewer assumptions, leveraging recent progress in control variate estimation.

To enhance the usability of imperative learning for robot autonomy, we will extend PyPose \citep{wang2023pypose,zhan2023pypose}, an open-source Python library for robot learning, to incorporate bilevel optimization frameworks. Additionally, we will provide concrete examples of imperative learning using PyPose across various domains, thereby accelerating the advancement of robot autonomy.

\begin{acks}
This work was supported by the DARPA award HR00112490426. Any opinions, findings, conclusions, or recommendations expressed in this paper are those of the authors and do not necessarily reflect the views of DARPA.
\end{acks}

{
\balance
\bibliographystyle{SageH}
\bibliography{references/citation, references/imperative, references/publication, references/pypose, references/slamroadmap, references/iSLAM,references/planning,references/control, references/multi,references/bilevel}
}

\end{document}